\documentclass[10pt,journal,compsoc]{IEEEtran}
\usepackage{amsmath}
\usepackage{mathrsfs}
\usepackage[tight,footnotesize]{subfigure}
\usepackage{graphicx}
\usepackage{multirow}
\usepackage{amsthm}
\usepackage{graphicx}
\usepackage{array}
\usepackage{multirow}
\usepackage{mdwmath}
\usepackage{mdwtab}
\usepackage{amsfonts}
\usepackage{algorithm}
\usepackage{algorithmic}
\usepackage{url}
\usepackage{enumerate}
\usepackage[utf8]{inputenc}
\usepackage{collref}
\usepackage{balance}

\newtheorem{definition}{Definition}
\newtheorem{theorem}{Theorem}
\newtheorem{lemma}{Lemma}

\newtheorem{property}{Property}

\hyphenation{op-tical net-works semi-conduc-tor}

\begin{document}

\title{Regularized Orthogonal Tensor Decompositions for Multi-Relational Learning}

\author{Fanhua~Shang,~\IEEEmembership{Member,~IEEE,}
James~Cheng,
and~Hong~Cheng
\IEEEcompsocitemizethanks{\IEEEcompsocthanksitem F.\ Shang (Corresponding author) and J.\ Cheng are with the Department of Computer Science and Engineering, The Chinese University of Hong Kong, Shatin, N. T., Hong Kong. E-mail: \{fhshang, jcheng\}@cse.cuhk.edu.hk.\protect\\
\IEEEcompsocthanksitem H.\ Cheng is with the Department of Systems Engineering and Engineering Management, The Chinese University of Hong Kong, Shatin, N. T., Hong Kong. E-mail: hcheng@se.cuhk.edu.hk.}
\thanks{Manuscript received December 6, 2014; revised July 3, 2015.}}

\markboth{Journal of \LaTeX\ Class Files,~Vol.~13, No.~9, September~2014}%
{Shell \MakeLowercase{\textit{et al.}}: Bare Advanced Demo of IEEEtran.cls for Journals}

\IEEEtitleabstractindextext{
\begin{abstract}
Multi-relational learning has received lots of attention from researchers in various research communities. Most existing methods either suffer from superlinear per-iteration cost, or are sensitive to the given ranks. To address both issues, we propose a scalable core tensor trace norm Regularized Orthogonal Iteration Decomposition (ROID) method for full or incomplete tensor analytics, which can be generalized as a graph Laplacian regularized version by using auxiliary information or a sparse higher-order orthogonal iteration (SHOOI) version. We first induce the equivalence relation of the Schatten $p$-norm ($0\!<\!p\!<\!\infty$) of a low multi-linear rank tensor and its core tensor. Then we achieve a much smaller matrix trace norm minimization problem. Finally, we develop two efficient augmented Lagrange multiplier algorithms to solve our problems with convergence guarantees. Extensive experiments using both real and synthetic datasets, even though with only a few observations, verified both the efficiency and effectiveness of our methods.
\end{abstract}

\begin{IEEEkeywords}
Multi-relational learning, tensor completion and decomposition, link prediction, low multi-linear rank, graph Laplacian
\end{IEEEkeywords}}

\maketitle

\IEEEdisplaynontitleabstractindextext
\IEEEpeerreviewmaketitle

\IEEEraisesectionheading{\section{Introduction}\label{sec:introduction}}

\IEEEPARstart{R}{elational} learning is becoming increasingly important because of the high value hidden in relational data and also of its many applications in various domains such as social networks, the semantic web, bioinformatics, and the linked data cloud~\cite{nickel:twm}. A class of relational learning methods focus mostly on the problem of modeling a single relation type, such as relational learning from latent attributes~\cite{kok:spi, singh:cmf}, which models relations between objects as resulting from intrinsic latent attributes of these objects. But in reality, relational data typically involve multiple types of relations between objects or attributes, which can themselves be similar. For example, in social networks~\cite{jenatton:lfm}, relationships between individuals may be personal, familial, or professional. This type of relational data learning is often referred to as multi-relational learning (MRL), which needs to model large-scale sparse relational databases efficiently~\cite{getoor:srl}.

People usually make use of the semantic web's RDF formalism to represent relational data, where relations are modeled as triples of the form (subject, relation, object), and a relation either denotes the relationship between two entries or between an entity and an attribute value. Considering the multiple types of relationships, it is a more natural way stacking the matrices of observed relationships into one big sparse three-order tensor. Fig.\ \ref{fig1} shows an illustration of this modeling method. In recent years, tensors have become ubiquitous such as multi-channel images and videos, and become popular due to the ability to discover complex and interesting latent structures and correlations of data~\cite{kolda:tda, sun:multivis, acar:stf, xu:bcdm}. Recently there is a growing interest in tensor methods for link prediction tasks, partially due to their natural representation of multi-relational data.

\begin{figure}[t]
\centering
\includegraphics[width=0.36\linewidth]{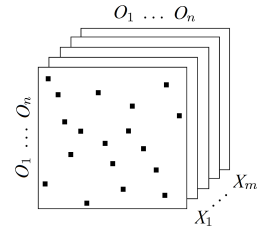}
\vspace{-2mm}
\caption{Tensor model for sparse multi-relational data. $O_{1},\ldots,O_{n}$ denote the objects, while $X_{1},\ldots, X_{m}$ denote the relations.}
\label{fig1}
\end{figure}

Tensor decomposition~\cite{tucker:tmfa, harshman:fpp, harshman:par}, \cite{yilmaz:gctf} is a popular tool for multi-relational prediction problems~\cite{acar:mda}, \cite{liu:fmtnm}. For example, Bader \emph{et al.}~\cite{bader:tasg} proposed a three-way component decomposition model for analyzing intrinsically asymmetric relationships. In addition, Nickel \emph{et al.}~\cite{nickel:twm} incorporated collective learning into the tensor factorization, which is designed to account for the inherent structure of relational data. Two of the most popular tensor factorizations are the Tucker decomposition~\cite{tucker:tmfa} and the CANDECOMP/PARAFAC (CP) decomposition~\cite{harshman:fpp}. To address incomplete tensor estimation, two weighted alternating least-squares methods~\cite{acar:stf, filipovic:tftc} were proposed. However, these methods require the ability to reliably estimate the rank of the involved tensor~\cite{gandy:tc, liu:ghooi}.

Recently, the low rank tensor recovery problem has been intensively studied. Liu \emph{et al.}~\cite{liu:tc} first extended the trace norm (also known as the nuclear norm~\cite{fazel:rmh} or the Schatten $1$-norm~\cite{liu:ghooi}) regularization for partially observed low multi-linear rank tensor recovery. Then the tensor recovery problem is transformed into a convex combination of trace norm minimization of the matrix unfolding along each mode. More recently, in Liu \emph{et al}.'s subsequent paper~\cite{liu:tcem}, they proposed three efficient algorithms to solve the low multi-linear rank tensor completion problem. Some similar algorithms can also be found in~\cite{gandy:tc}, \cite{signoretto:lt}, \cite{yang:fpim}, \cite{tomioka:ctd}. In addition, there are some theoretical developments that guarantee the reconstruction of a low rank tensor from partial measurements by solving trace norm minimization under some reasonable conditions~\cite{tomioka:ctd, huang:lrtr, mu:sd, liu:tld}. However, the tensor trace norm minimization problems have to be solved iteratively and involve multiple singular value decompositions (SVDs) in each iteration. Therefore, existing algorithms suffer from high computational cost, making them impractical for real-world applications~\cite{liu:ghooi, shang:hotd}.

To address both of the issues mentioned above, i.e., the \emph{robustness} of given ranks and the computational \emph{efficiency}, we propose a scalable core tensor trace norm Regularized Orthogonal Iteration Decomposition (ROID) method for full or incomplete tensor analytics. We first induce the equivalence relation of the Schatten $p$-norm ($0\!<\!p\!<\!\infty$) of a low multi-linear rank tensor and its core tensor. We use the trace norm of the core tensor to replace that of the whole tensor, and then achieve a much smaller scale matrix trace norm minimization problem. In particular, our ROID method is generalized as a graph Laplacian regularized version by using auxiliary information from the relationships or a sparse higher-order orthogonal iteration (SHOOI) version. Finally, we develop two efficient augmented Lagrange multiplier (ALM) algorithms for our problems. Moreover, we theoretically analyze the convergence property of our algorithms. Our experimental results on real-world datasets verified both the efficiency and effectiveness of our methods.

The rest of the paper is organized as follows. We review preliminaries and related work in Section 2. In Section 3, we propose two novel core tensor trace norm regularized tensor decomposition models, and develop two efficient ALM algorithms and extend one algorithm to solve the SHOOI problem in Section 4. We provide the theoretical analysis of our algorithms in Section 5. We report the experimental results in Section 6. In Section 7, we conclude this paper and point out some potential extensions for future work.

\section{Notations and Problem Formulations}
A third-order tensor is denoted by a calligraphic letter, e.g., $\mathcal{X}\!\in\! \mathbb{R}^{{I_{1}}\times{I_{2}}\times{I_{3}}}$, and its entries are denoted as $x_{{i_{1}}{i_{2}}{i_{3}}}$, where $i_{n}\!\in\!\{1,\ldots,I_{n}\}$ for $1\!\leq\! n\!\leq\! 3$. Fibers are the higher-order analogue of matrix rows and columns. The mode-\emph{n} fibers of a third-order tensor are $\mathrm{x}_{{:}{i_{2}}{i_{3}}}$, $\mathrm{x}_{{i_{1}}{:}{i_{3}}}$ and $\mathrm{x}_{{i_{1}}{i_{2}}{:}}$, respectively.

The mode-$n$ unfolding, also known as matricization, of a third-order tensor $\mathcal{X}\!\in\! \mathbb{R}^{{I_{1}}\times{I_{2}}\times{I_{3}}}$ is denoted by $\mathcal{X}_{(n)}\!\in\!\mathbb{R}^{{{I_{n}}\times{\Pi_{j\neq{n}}}{I_{j}}}}$ and arranges the mode-\emph{n} fibers to be the columns of the resulting matrix $\mathcal{X}_{(n)}$ such that the mode-$n$ fiber becomes the row index and all other two modes become column indices. The tensor element $(i_{1},i_{2},i_{3})$ is mapped to the matrix element $(i_{n}, j)$, where
\vspace{-2mm}
\begin{displaymath}
j=1+\sum^{3}_{k=1,k\neq n}(i_{k}-1)J_{k}\;\;\textup{with}\;\;J_{k}=\prod^{k-1}_{m=1, m\neq n}I_{m}.
\end{displaymath}
\vspace{-2mm}

The inner product of two same-sized tensors $\mathcal{A}\!\in\! \mathbb{R}^{{I_{1}}\times{I_{2}}\times{I_{3}}}$ and $\mathcal{B}\!\in\! \mathbb{R}^{{I_{1}}\times{I_{2}}\times{I_{3}}}$ is the sum of the product of their entries, $\langle{\mathcal{A},\,\mathcal{B}}\rangle=\sum_{i_{1},i_{2},{i_{3}}} a_{{i_{1}}i_{2}{i_{3}}}b_{{i_{1}}i_{2}{i_{3}}}$. The Frobenius norm of a third-order tensor $\mathcal{X}$ is defined as:
\begin{displaymath}
\|\mathcal{X}\|_{F}:= \sqrt{\langle{\mathcal{X}, \mathcal{X}}\rangle}=\sqrt{\sum^{I_{1}}_{i_{1}=1}\sum^{I_{2}}_{i_{2}=1}\sum^{I_{3}}_{i_{3}=1}x^{2}_{{i_{1}}{i_{2}}{i_{3}}}}.
\end{displaymath}
The $1$-mode product of a tensor $\mathcal{X}\!\in\!\mathbb{R}^{{I_{1}}\times{I_{2}}\times{I_{3}}}$ with a matrix $U\!\in\!\mathbb{R}^{J\times{I_{1}}}$, denoted by $\mathcal{X}\!\times_{1}\!U\!\in\! \mathbb{R}^{{J}\times{I_{2}}\times{I_{3}}}$, is defined as:
\vspace{-2mm}
\begin{displaymath}
(\mathcal{X}\!\times_{1}\!{U})_{{j}{i_{2}}{i_{3}}}=\sum^{I_{1}}_{i_{1}=1}{x}_{{i_{1}}{i_{2}}{i_{3}}}{u}_{{j}{i_{1}}}.
\end{displaymath}

\subsection{Tensor Trace Norm}
With an exact analogue to the definition of the matrix rank, the rank of a tensor $\mathcal{X}$ is defined as the smallest number of rank-one tensors that generate $\mathcal{X}$ as their sum. However, there is no straightforward way to determine the rank of a tensor. In fact, the problem is NP-hard~\cite{kolda:tda, hillar:np}. Fortunately, the multi-linear rank (also called the Tucker rank in~\cite{mu:sd, goldfarb:tr}) of a tensor $\mathcal{X}$ is easy to compute, and consists of the ranks of all mode-\emph{n} unfoldings.

\begin{definition}
The multi-linear rank of a third-order tensor $\mathcal{X}$ is the tuple of the ranks of the mode-n unfoldings,
\begin{displaymath}
\textup{multi-linear rank}\!=\!\left[\textup{rank}({\mathcal{X}}_{(1)}),\textup{rank}({\mathcal{X}}_{(2)}),\textup{rank}({\mathcal{X}}_{(3)})\right]\!.
\end{displaymath}
\end{definition}

In order to keep problems simple, the (weighted) sum of the ranks of all unfoldings along each mode is used to take the place of the multi-linear rank of the tensor, and is relaxed into the following definition.

\begin{definition}
The Schatten $p$-norm ($0\!<\!p\!<\!\infty$) of a third-order tensor $\mathcal{X}$ is the average of the Schatten $p$-norms of all unfoldings $\mathcal{X}_{(n)}$, i.e.,
\vspace{-2mm}
\begin{displaymath}
\|\mathcal{X}\|_{\mathcal{S}_{p}}=\frac{1}{3}\sum^{3}_{n=1}{\|\mathcal{X}_{(n)}\|_{\mathcal{S}_{p}}}
\end{displaymath}
where $\|\mathcal{X}_{(n)}\|_{\mathcal{S}_{p}}\!=\!\left(\sum_{i}\!\sigma^{p}_{i}\right)^{1/p}$ denotes the Schatten $p$-norm of the unfolding $\mathcal{X}_{(n)}$, and $\sigma_{i}$ is the $i$-th singular value of $\mathcal{X}_{(n)}$. When $p\!=\!1$, the Schatten ${1}$-norm is the well-known trace norm, $\|\mathcal{X}\|_{*}$.
\end{definition}

For some imbalance sparse tensor decomposition problems (e.g., the size of the YouTube data used in Section 6.2 is $4,117\!\times\!4,117\!\times\!5$), the trace norm of the tensor can be incorporated by some pre-specified weights $\alpha_{n}\!\geq\!0$, $n=1,\,2,\,3$, which satisfy $\sum_{n}\!\alpha_{n}=1$.

\subsection{Weighted Tensor Decompositions}
We will introduce two of the most often used tensor decomposition models for MRL problems. In~\cite{acar:stf}, Acar \emph{et al.} presented a weighted CANDECOMP/PARAFAC (WCP) decomposition model for sparse third-order tensors:
\vspace{-2mm}
\begin{equation}\label{equ1}
\min_{A,B,C}\,\frac{1}{2}\sum_{i,j,k}{w}_{ijk}\!\left({t}_{ijk}-\sum^{R}_{r=1}a_{ir}b_{jr}c_{kr}\right)^2
\end{equation}
where $R$ is a positive integer, $\mathcal{W}$ denotes a non-negative indicator tensor of the same size as an incomplete tensor $\mathcal{T}$: ${w}_{ijk}=1$ if ${t}_{ijk}$ is observed and ${w}_{ijk}=0$ otherwise, and $A\in\mathbb{R}^{{I_{1}}\times{R}},B\in\mathbb{R}^{{I_{2}}\times{R}},C\in\mathbb{R}^{{I_{3}}\times{R}}$ are referred to as the factor matrices which are the combination of the vectors from the rank-one components (e.g., $A=[{a}_{:,1}, {a}_{:,2},\ldots,{a}_{:,R}]$).

In~\cite{filipovic:tftc}, the weighted Tucker decomposition (WTucker) model is formulated as follows:
\begin{equation}\label{equ2}
\min_{\mathcal{G},U,V,W}\,\frac{1}{2}\left\|\mathcal{W}\ast(\mathcal{T}-\mathcal{G}\!\times_{1}\!U\!\times_{2}\!V\!\times_{3}\!W)\right\|^{2}_{F}
\end{equation}
where $\ast$ denotes the Hadamard (element-wise) product, $U\!\in\!\mathbb{R}^{{I_{1}}\times{R_{1}}}$, $V\!\in\!\mathbb{R}^{{I_{2}}\times{R_{2}}}$, $W\!\in\mathbb{R}^{{I_{3}}\times{R_{3}}}$, and $\mathcal{G}\!\in\! \mathbb{R}^{{R_{1}}\times R_{2}\times{R_{3}}}$ is a core tensor with the given multi-linear rank $({R_{1}},{R_{2}},{R_{3}})$. Since the decomposition rank $R_{n}\,(n\!=\!1,2, 3)$ is in general much smaller than $I_{n}$, in this sense, the storage of the Tucker decomposition form can be significantly smaller than that of the original tensor. Moreover, unlike the rank of the tensor $R$, the multi-linear rank $({R_{1}},{R_{2}},{R_{3}})$ is clearly computable. If the factor matrices of the Tucker decomposition are constrained orthogonal, the classical decomposition methods are referred to as the higher-order singular value decomposition (HOSVD)~\cite{lathauwer:msvd} or higher-order orthogonal iteration (HOOI)~\cite{athauwer:hooi}, where the latter leads to the estimation of best rank-$(R_{1},R_{2},R_{3})$ approximations while the truncation of HOSVD may achieve a good rank-$(R_{1},R_{2},R_{3})$ approximation but in general not the best possible one~\cite{athauwer:hooi}. Hence, we are particularly interested in extending the HOOI method for sparse MRL problems.

In addition, several extensions of both tensor decomposition models are developed for tensor estimation problems, such as~\cite{kressner:lrtc}, \cite{chen:stdc}, \cite{jain:ff}. However, for all those methods, a suitable rank value needs to be given, and it has been shown that both WTucker and WCP models are usually sensitive to the given ranks due to their least-squares formulations~\cite{gandy:tc, liu:ghooi}, and they have poor performance when the data have a high rank~\cite{liu:tcem}.

\subsection{Problem Formulations}
For multi-relational prediction, the sparse tensor trace norm minimization problem is formulated as follows:
\vspace{-2mm}
\begin{equation}\label{equ3}
\min_{\mathcal{X}}\,\sum^{3}_{n=1}{\alpha_{n}\|\mathcal{X}_{(n)}\|_{*}},\quad \textup{s.t.},\,\mathcal{X}_{\Omega}=\mathcal{T}_{\Omega}
\end{equation}
where $\alpha_{n}$'s are pre-specified weights, and $\Omega$ is the set of indices of observed entries. Liu \emph{et al.}~\cite{liu:tcem} proposed three efficient algorithms (e.g., the HaLRTC algorithm) to solve (\ref{equ3}). In addition, there are some similar convex tensor completion algorithms in~\cite{gandy:tc, signoretto:lt, yang:fpim}. Tomioka and Suzuki~\cite{tomioka:ctd} proposed a latent trace norm minimization model,
\vspace{-2mm}
\begin{equation}\label{equ4}
\min_{\mathcal{X}_{n}}\,\frac{1}{\lambda}\sum^{N}_{n=1}{\|\mathcal{X}_{n,(n)}\|_{*}}+\frac{1}{2}{\|\mathcal{P}_{\Omega}(\sum^{N}_{n=1}\mathcal{X}_{n})-\mathcal{P}_{\Omega}(\mathcal{T})\|^{2}_{F}}
\end{equation}
where $\mathcal{P}_{\Omega}$ is the projection operator: $\mathcal{P}_{\Omega}(\mathcal{T})_{ijk}\!=\!\mathcal{T}_{ijk}$ if $(i,j,k)\!\in\!\Omega$ and $\mathcal{P}_{\Omega}(\mathcal{T})_{ijk}\!=\!0$ otherwise, and $\lambda>0$ is a regularization parameter.

More recently, it has been shown that the tensor trace norm minimization models mentioned above can be substantially suboptimal~\cite{mu:sd, romera:crtc}. However, if the order of the involved tensor is no more than three, the models (\ref{equ3}) and (\ref{equ4}) often perform better than the more balanced (square) matrix model in~\cite{mu:sd}. Indeed each unfolding $\mathcal{X}_{(n)}$ shares the same entries, and thus cannot be optimized independently. Therefore, we must apply variable splitting and introduce multiple additional equal-sized variables to all unfoldings of $\mathcal{X}$. Moreover, existing algorithms involve multiple SVDs in each iteration and suffer from high computational cost $O(I^{4})$, where the assumed size of the tensor is $I\times I\times I$.

\section{Core Tensor Trace Norm Regularized Tensor Decomposition}
To address the poor scalability of existing low multi-linear rank tensor recovery algorithms, we present two scalable core tensor trace norm (or together with graph Laplacian) regularized orthogonal decomposition models, and then achieve three smaller-scale matrix trace norm minimization problems. Then in Section 4, we will develop some efficient algorithms for solving the problems.

\subsection{Core Tensor Trace Norm Minimization Models}
Assume that $\mathcal{X}\!\in\!\mathbb{R}^{{I_{1}}\times{I_{2}}\times{I_{3}}}$ is a multi-relational tensor with multi-linear rank $(r_{1},r_{2},r_{3})$, $\mathcal{X}$ can be decomposed as:
\vspace{-1mm}
\begin{equation}\label{equ5}
\mathcal{X}=\mathcal{G}\!\times_{1}\!U\!\times_{2}\!V\!\times_{3}\!W
\end{equation}
where $U\!\in\!\mathbb{R}^{{I_{1}}\times{r_{1}}}$, $V\!\in\!\mathbb{R}^{{I_{2}}\times{r_{2}}}$ and $W\!\in\!\mathbb{R}^{{I_{3}}\times{r_{3}}}$ are the column-wise orthonormal matrices, and can be thought of as the principal components in each mode. The entries of the core tensor $\mathcal{G}\!\in\!\mathbb{R}^{{r_{1}}\times{r_{2}}\times{r_{3}}}$ show the level of interaction between the different components. For $r_{n}$ ($n\!=\!1, 2, 3$), we recommend a matrix rank estimation approach recently developed in~\cite{julia:rem} to compute some good values $(r'_{1}, r'_{2},r'_{3})$ for the multi-linear rank of the involved tensor. Then we can give some relatively large integers $(d_{1},d_{2},d_{3})$ satisfying $d_{n}\geq r'_{n}$ and $d_{n}\geq r_{n}$, $n=1, 2, 3$.

\begin{theorem}\label{theo1}
Let $\mathcal{X}\!\in\!\mathbb{R}^{{I_{1}}\times{I_{2}}\times{I_{3}}}$ with multi-linear rank $({r_{1}},{r_{2}},{r_{3}})$  and $\mathcal{G}\in\mathbb{R}^{{d_{1}}\times{d_{2}}\times{d_{3}}}$ satisfy $\mathcal{X}=\mathcal{G}\!\times_{1}\!U\!\times_{2}\!V\!\times_{3}\!W$, and $U^{T}U=I_{d_{1}}$, $V^{T}V=I_{d_{2}}$ and $W^{T}W=I_{d_{3}}$, then
\begin{equation*}
\|\mathcal{X}\|_{\mathcal{S}_{p}}=\|\mathcal{G}\|_{\mathcal{S}_{p}}
\end{equation*}
where $\|\mathcal{X}\|_{\mathcal{S}_{p}}$ and $\|\mathcal{G}\|_{\mathcal{S}_{p}}$ denote the Schatten $p$-norm of $\mathcal{X}$ and its core tensor $\mathcal{G}$, respectively.
\end{theorem}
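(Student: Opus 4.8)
The plan is to reduce the tensor Schatten $p$-norm identity to a collection of three matrix-level statements, one for each mode, and then prove each matrix statement by showing that the relevant unfolding of $\mathcal{X}$ and the corresponding unfolding of $\mathcal{G}$ have exactly the same nonzero singular values. Concretely, by Definition 2 it suffices to show $\|\mathcal{X}_{(n)}\|_{\mathcal{S}_p} = \|\mathcal{G}_{(n)}\|_{\mathcal{S}_p}$ for $n=1,2,3$, and since the Schatten $p$-norm is a symmetric function of the singular values, it is enough to prove that $\mathcal{X}_{(n)}$ and $\mathcal{G}_{(n)}$ share the same multiset of nonzero singular values.

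The key algebraic step is the standard unfolding identity for the Tucker/multilinear product: from $\mathcal{X}=\mathcal{G}\times_1 U\times_2 V\times_3 W$ one has, for mode $1$,
\begin{equation*}
\mathcal{X}_{(1)} = U\,\mathcal{G}_{(1)}\,(W\otimes V)^{T},
\end{equation*}
and analogously $\mathcal{X}_{(2)} = V\,\mathcal{G}_{(2)}\,(W\otimes U)^{T}$ and $\mathcal{X}_{(3)} = W\,\mathcal{G}_{(3)}\,(V\otimes U)^{T}$. First I would either cite or briefly re-derive this identity from the definition of the $n$-mode product given earlier in the excerpt. Next I would use the orthonormality hypotheses: $U^{T}U=I_{d_1}$ means $U$ has orthonormal columns, and a Kronecker product of column-orthonormal matrices is again column-orthonormal, so $(W\otimes V)^{T}(W\otimes V) = (W^{T}W)\otimes(V^{T}V) = I_{d_3}\otimes I_{d_2} = I_{d_2 d_3}$, i.e. $W\otimes V$ also has orthonormal columns. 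Thus $\mathcal{X}_{(1)} = U\,\mathcal{G}_{(1)}\,(W\otimes V)^{T}$ is the product of $\mathcal{G}_{(1)}$ with a matrix of orthonormal columns on the left and (the transpose of) a matrix of orthonormal columns on the right. Multiplying a matrix on either side by a matrix with orthonormal columns preserves the nonzero singular values — this is immediate from the SVD, e.g. if $\mathcal{G}_{(1)} = P\Sigma Q^{T}$ then $U\mathcal{G}_{(1)}(W\otimes V)^{T} = (UP)\Sigma(( W\otimes V)Q)^{T}$ with $UP$ and $(W\otimes V)Q$ still having orthonormal columns, exhibiting an SVD of $\mathcal{X}_{(1)}$ with the same $\Sigma$. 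Hence $\mathcal{X}_{(1)}$ and $\mathcal{G}_{(1)}$ have identical singular values, so $\|\mathcal{X}_{(1)}\|_{\mathcal{S}_p}=\|\mathcal{G}_{(1)}\|_{\mathcal{S}_p}$. The same argument applied verbatim to modes $2$ and $3$ gives the two remaining equalities, and averaging over $n$ (or taking the weighted combination with the $\alpha_n$) yields $\|\mathcal{X}\|_{\mathcal{S}_p}=\|\mathcal{G}\|_{\mathcal{S}_p}$.

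I expect no genuine obstacle here; the result is essentially a bookkeeping exercise once the unfolding identity is in place. The two points that need a little care are: (i) making sure the Kronecker-product factors appear in the correct order for each mode's unfolding (this depends on the column-ordering convention fixed in Section 2, so I would match it to the $j = 1 + \sum_{k\neq n}(i_k-1)J_k$ formula given there), and (ii) being explicit that "preserves nonzero singular values" is what is needed — since $d_n \geq r_n$, $\mathcal{G}_{(n)}$ may itself be rank-deficient and carry extra zero singular values, but these do not affect $\|\cdot\|_{\mathcal{S}_p}$ for $0<p<\infty$, and likewise the (possibly) larger ambient dimensions of $\mathcal{X}_{(n)}$ only contribute zeros. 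A clean way to package both is to note that left/right multiplication by column-orthonormal matrices is a special case of the fact that $\sigma_i(AMB^{T}) = \sigma_i(M)$ whenever $A^{T}A=I$ and $B^{T}B=I$, which I would state as a one-line matrix lemma and then invoke three times.
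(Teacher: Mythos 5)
Your proposal is correct and follows essentially the same route as the paper's Appendix A: the unfolding identity $\mathcal{X}_{(n)}=U\mathcal{G}_{(n)}(W\otimes V)^{T}$ (and its mode-2, mode-3 analogues), the observation that a Kronecker product of column-orthonormal matrices is column-orthonormal, and the matrix lemma that $\|AMB^{T}\|_{\mathcal{S}_p}=\|M\|_{\mathcal{S}_p}$ for column-orthonormal $A,B$ proved by exhibiting the SVD of $AMB^{T}$ from that of $M$. Your added remark about rank deficiency of $\mathcal{G}_{(n)}$ contributing only zero singular values is a sensible clarification but does not change the argument.
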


The proof of Theorem~\ref{theo1} is given in APPENDIX A. Since the trace norm (i.e., the Schatten $1$-norm) is the tightest convex surrogate to the rank function~\cite{fazel:rmh}, \cite{candes:crmc}, we mainly consider the trace norm case in this paper. According to the equivalence relation of the trace norm of a low multi-linear rank tensor and its core tensor, the tensor completion model (\ref{equ3}) is formulated into the following form:
\begin{equation}\label{equ6}
\begin{aligned}
&\min_{\mathcal{G},U,V,W,\mathcal{X}}\frac{1}{\lambda}\|\mathcal{G}\|_{*}+\frac{1}{2}\|\mathcal{X}-\mathcal{G}\!\times_{1}\!U\!\times_{2}\!V\!\times_{3}\!W\|^{2}_{F},\\
&\quad\;\textup{s.t.},\mathcal{X}_{\Omega}\!=\!\mathcal{T}_{\Omega},U^{T}U\!=\!I_{d_{1}},V^{T}V\!=\!I_{d_{2}},W^{T}W\!=\!I_{d_{3}}.
\end{aligned}
\end{equation}
When all entries of $\mathcal{T}$ are observed, the model (6) degenerates to the following core tensor trace norm regularized tensor decomposition problem~\cite{shang:hotd}:
\begin{equation}\label{equ7}
\begin{aligned}
&\min_{\mathcal{G},U,V,W}\frac{1}{\lambda}\|\mathcal{G}\|_{*}+\frac{1}{2}\|\mathcal{T}-\mathcal{G}\!\times_{1}\!U\!\times_{2}\!V\!\times_{3}\!W\|^{2}_{F},\\
&\quad\;\textup{s.t.},\,U^{T}U\!=\!I_{d_{1}},V^{T}V\!=\!I_{d_{2}},W^{T}W\!=\!I_{d_{3}}.
\end{aligned}
\end{equation}

It is clear that the core tensor $\mathcal{G}$ of size $({d_{1}},{d_{2}},{d_{3}})$ has much smaller size than the whole tensor $\mathcal{T}$, i.e., $d_{n}\!\ll\! I_{n}$ for all $n\!\in\!\{1,2,3\}$. Therefore, our core tensor trace norm regularized orthogonal tensor decomposition models (\ref{equ6}) and (\ref{equ7}) can alleviate the SVD computational burden of much larger unfoldings in both models (\ref{equ3}) and (\ref{equ4}). Besides, the core tensor trace norm term promotes low multi-linear rank tensor decompositions, and enhances the robustness of the multi-linear rank selection, while those traditional tensor decomposition methods are usually sensitive to the given multi-linear rank~\cite{liu:tcem, shang:hotd}.

\subsection{Sparse HOOI Model}
When $\lambda\!\rightarrow\! \infty$, the model (\ref{equ6}) degenerates to the following sparse tensor HOOI (SHOOI) problem,
\vspace{-1mm}
\begin{equation}\label{equ8}
\begin{aligned}
&\min_{\mathcal{G},U,V,W,\mathcal{Z}}\;\frac{1}{2}\|\mathcal{W}\ast(\mathcal{Z}-\mathcal{T})\|^{2}_{F},\\
&\textup{s.t.},\mathcal{Z}\!\!=\!\mathcal{G}\!\!\times_{\!1}\!\!U\!\!\times_{\!2}\!\!V\!\!\times_{\!3}\!\!W,U^{T}\!U\!\!=\!\!I_{d_{1}}\!,V^{T}\!V\!\!=\!\!I_{d_{2}}\!,W^{T}\!W\!\!=\!\!I_{d_{3}}.
\end{aligned}
\end{equation}
In a sense, the SHOOI model (\ref{equ8}) is a special case of our ROID method (see the Supplementary Materials for detailed discussion). When all entries of $\mathcal{T}$ are observed, the SHOOI model (\ref{equ8}) becomes a traditional HOOI problem in~\cite{athauwer:hooi}.

\subsection{Graph Regularized Model}
Inspired by the work in~\cite{narita:tf}, \cite{gu:nmf}, \cite{shang:nmf}, we also exploit the auxiliary information given as link-affinity matrices in a graph regularized ROID (GROID) model:
\vspace{-1mm}
\begin{equation}\label{equ9}
\begin{aligned}
&\min_{\mathcal{G},U,V,W,\mathcal{X}}\,\frac{1}{\lambda}\|\mathcal{G}\|_{*}+\frac{1}{2}\|\mathcal{X}-\mathcal{G}\!\times_{1}\!U\!\times_{2}\!V\!\times_{3}\!W\|^{2}_{F}\\
&\;\;\;+\frac{\mu}{2}[\textup{Tr}(U^{T}\!L_{1}U)+\textup{Tr}(V^{T}\!L_{2}V)+\textup{Tr}(W^{T}\!L_{3}W)],\\
&\textup{s.t.},\,\mathcal{X}_{\Omega}=\mathcal{T}_{\Omega},U^{T}U=I_{d_{1}},V^{T}V=I_{d_{2}},W^{T}W=I_{d_{3}}
\end{aligned}
\end{equation}
where $\mu\geq0$ is a regularization constant, $\textup{Tr}(\cdot)$ denotes the matrix trace, $L_{n}$ is the graph Laplacian matrix, i.e., $L_{n}\!=\!D_{n}\!-\!W_{n}$, $W_{n}$ is the weight matrix for the object set $O_{n}$ or different relations, and $D_{n}$ is the diagonal matrix whose entries are column sums of $W_{n}$, i.e., $(D_{n})_{ii}=\sum_{j}(W_{n})_{ij}$.

\section{Optimization Algorithms}
In this section, we propose an efficient method of augmented Lagrange multipliers (ALM) to solve our ROID problem (\ref{equ6}), and then extend the proposed algorithm to solve (\ref{equ7})-(\ref{equ9}). As a variant of the standard ALM, the alternating direction method of multipliers (ADMM) has received much attention recently due to the tremendous demand from large-scale machine learning applications~\cite{boyd:admm, ouyang:sadmm}. Similar to (\ref{equ3}), the proposed problem (\ref{equ6}) is difficult to solve due to the interdependent tensor trace norm term $\|\mathcal{G}\|_{*}$. Therefore, we first introduce three much smaller auxiliary variables ${G}_{n}\!\in\!\mathbb{R}^{{{d_{n}}\times{\Pi_{j\neq{n}}}{d_{j}}}}$ into (\ref{equ6}), and then reformulate it into the following equivalent form:
\vspace{-1mm}
\begin{small}
\begin{equation}\label{equ10}
\begin{aligned}
&\min_{\mathcal{G},{U},{V},W,\{G_{\!n}\!\},\mathcal{X}}\sum^{3}_{n=1}\!\frac{\|G_{n}\|_{*}}{3\lambda}\!+\!\frac{1}{2}\|\mathcal{X}-\mathcal{G}\!\times_{1}\!U\!\!\times_{2}\!V\!\!\times_{3}\!W\|^{2}_{F},\\
&\textup{s.t.},\mathcal{X}_{\Omega}\!=\!\mathcal{T}_{\Omega},\mathcal{G}_{(n)}\!\!=\!G_{\!n},U^{T}\!U\!\!=\!I_{d_{1}},V^{T}\!V\!\!=\!I_{d_{2}},W^{T}\!W\!\!=\!I_{d_{3}}.
\end{aligned}
\end{equation}
\end{small}
The partial augmented Lagrangian function of (\ref{equ10}) is
\vspace{-1mm}
\begin{equation}\label{equ11}
\begin{aligned}
&\quad\qquad \mathcal{L}_{\rho}(\{{G}_{n}\},\,\mathcal{G},\,{U},\,{V},\,W,\,\mathcal{X},\,\{{Y}_{n}\})=\\
&\sum^{3}_{n=1}\!\left(\frac{\|G_{n}\|_{*}}{3\lambda}+\langle{Y}_{n},\;{\mathcal{G}}_{(n)}-{G}_{n}\rangle+\frac{\rho}{2}\|\mathcal{G}_{(n)}-{G}_{n}\|^{2}_{F}\right)\\
&\quad\quad\;\;\;\; +\frac{1}{2}\|\mathcal{X}-\mathcal{G}\!\times_{1}\!U\!\times_{2}\!V\!\times_{3}\!W\|^{2}_{F}
\end{aligned}
\end{equation}
where ${Y}_{n}\!\in\!\mathbb{R}^{{{d_{n}}\times{\Pi_{j\neq{n}}}{d_{j}}}}\,(n\!=\!1,2,3)$ are the matrices of Lagrange multipliers (or dual variables), and $\rho\!>\!0$ is called the penalty parameter. Our ADMM iterative scheme for solving (\ref{equ10}) is derived by successively minimizing $\mathcal{L}_{\rho}$ over $(\{{G}_{n}\},\mathcal{G},{U},{V},W,\mathcal{X})$, and then updating $(Y_{1},Y_{2},Y_{3})$.

\subsection{Updating $\{{G}^{k+1}_{1},{G}^{k+1}_{2},{G}^{k+1}_{3}\}$}
By keeping all the other variables fixed, ${G}^{k+1}_{n}$ is updated by solving the following problem,
\vspace{-1mm}
\begin{equation}\label{equ12}
\min_{G_{n}}\frac{\|G_{n}\|_{*}}{3\lambda}+\frac{\rho^{k}}{2}\|\mathcal{G}^{k}_{(n)}-G_{n}+Y^{k}_{n}/\rho^{k}\|^{2}_{F}.
\end{equation}
For solving (\ref{equ12}), we give the shrinkage operator~\cite{cai:svt} below.

\begin{definition}
For any matrix ${M}\in \mathbb{R}^{{m}\times{n}}$, the singular vector thresholding \emph{(SVT)} operator is defined as:
\vspace{-1mm}
\begin{equation*}
\textup{SVT}_{\mu}({M}):={\overline{U}}\textup{diag}(\textup{max}\{\overline{\sigma}-\mu,0\}){\overline{V}}^{T}
\end{equation*}
where $\max\{\cdot,\cdot\}$ should be understood element-wise, ${\overline{U}}\!\in\! \mathbb{R}^{m\times r}$, ${\overline{V}}\!\in\! \mathbb{R}^{n\times{r}}$ and $\overline{\sigma}\!=\!(\sigma_{1},\sigma_{2},\ldots,\sigma_{r})^T\!\in\! \mathbb{R}^{r\times 1}$  are obtained by SVD of $M$, i.e., $M={\overline{U}}\textup{diag}(\overline{\sigma}){\overline{V}}^{T}$.
\end{definition}

Therefore, a closed-form solution to (\ref{equ12}) is given by:
\vspace{-1mm}
\begin{equation}\label{equ13}
G^{k+1}_{n}=\textup{SVT}_{1/(3\lambda\rho^{k})}(\mathcal{G}^{k}_{(n)}+Y^{k}_{n}/\rho^{k}).
\end{equation}
It it clear that only some smaller size matrices ${M}_{n}\!=$ $\mathcal{G}^{k}_{(n)}\!\!+\!Y^{k}_{n}/\rho^{k}\!\in\!\mathbb{R}^{{{d_{n}}\times{\Pi_{j\neq{n}}}{d_{j}}}}\;(d_{n}\!\ll\! I_{n},n\!=\!1,2,3)$ in (\ref{equ13}) need to perform SVD. Therefore, our shrinkage operator has a significantly lower computational complexity ${O}(\sum_{n}\!{d^{2}_{n}}{\Pi_{j\neq{n}}}d_{j})$ while the computational complexity of those algorithms for solving (\ref{equ3}) and (\ref{equ4}) is ${O}(\sum_{n}\!{{I^{2}_{n}}{\Pi_{j\neq{n}}}{I_{j}}})$ for each iteration. Hence, our algorithm has a much lower complexity than those as in~\cite{gandy:tc, liu:tcem, signoretto:lt, yang:fpim, tomioka:ctd}.

\subsection{Updating $\{{U}^{k+1},{V}^{k+1},{W}^{k+1},\mathcal{G}^{k+1}\}$}
The optimization problem (\ref{equ10}) with respect to ${U}$, ${V}$, ${W}$ and $\mathcal{G}$ is formulated as follows:
\vspace{-1mm}
\begin{equation}\label{equ14}
\begin{aligned}
\min_{\mathcal{G},U,V,W}&\sum^{3}_{n=1}\frac{\rho^{k}}{2}\|\mathcal{G}_{(n)}-{G}^{k+1}_{n}+{Y}^{k}_{n}/\rho^{k}\|^{2}_{F}\\
&+\frac{1}{2}\|\mathcal{X}^{k}-\mathcal{G}\!\times_{1}\!U\!\times_{2}\!V\!\times_{3}\!W\|^{2}_{F},\\
\textup{s.t.},\;&U^{T}U=I_{d_{1}},\,V^{T}V=I_{d_{2}},\,W^{T}W=I_{d_{3}}.
\end{aligned}
\end{equation}
Unlike the HOOI algorithm in~\cite{athauwer:hooi}, we propose a new orthogonal iteration scheme to update the matrices $U$, $V$ and $W$ for the optimization of (\ref{equ14}). Moreover, the conventional HOOI can be seen as a special case of (\ref{equ14}) when $\rho^{k}\!=\!0$. For any estimate of these matrices, the optimal solution with respect to $\mathcal{G}$ is given by the following theorem.

\begin{theorem}\label{theo2}
For given matrices ${U}$, ${V}$ and ${W}$, the optimal core tensor $\mathcal{G}$ of the optimization problem (\ref{equ14}) is given by
\vspace{-1mm}
\begin{equation}\label{equ15}
\mathcal{G}=\frac{1}{1+3\rho^{k}}\left(\mathcal{A}+\rho^{k}\mathcal{B}\right)
\end{equation}
where $\mathcal{A}\!=\!\mathcal{X}^{k}\!\!\times_{\!1}\!\!U^{T}\!\!\!\times_{\!2}\!\!V^{T}\!\!\!\times_{\!3}\!\!W^{T}$\!, $\mathcal{B}\!=\!\!\sum^{3}_{n\!=\!1}\!\!\textup{refold}(G^{k\!+\!1}_{n}\!-\!Y^{k}_{n}\!/\!\rho^{k})$ and $\textup{refold}(\cdot)$ denotes the refolding of the matrix into a tensor.
\end{theorem}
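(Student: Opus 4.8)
The plan is to observe that, with $U$, $V$, $W$ held fixed and column-orthonormal, problem (\ref{equ14}) is an unconstrained strongly convex quadratic in $\mathcal{G}$, so its unique minimizer is obtained by setting the gradient to zero. First I would rewrite each penalty term in tensor form: since matricization preserves the Frobenius norm, $\|\mathcal{G}_{(n)}-G^{k+1}_{n}+Y^{k}_{n}/\rho^{k}\|^{2}_{F}=\|\mathcal{G}-\mathcal{C}_{n}\|^{2}_{F}$ with $\mathcal{C}_{n}=\textup{refold}(G^{k+1}_{n}-Y^{k}_{n}/\rho^{k})$. Next I would expand the fitting term $\|\mathcal{X}^{k}-\mathcal{G}\!\times_{1}\!U\!\times_{2}\!V\!\times_{3}\!W\|^{2}_{F}$ and simplify it using two standard identities for mode products with matrices having orthonormal columns: (i) $\|\mathcal{G}\!\times_{1}\!U\!\times_{2}\!V\!\times_{3}\!W\|^{2}_{F}=\|\mathcal{G}\|^{2}_{F}$, which follows because $U^{T}U=I_{d_{1}}$, $V^{T}V=I_{d_{2}}$, $W^{T}W=I_{d_{3}}$ give $\|\mathcal{G}\!\times_{n}\!U_{n}\|_{F}=\|U_{n}\mathcal{G}_{(n)}\|_{F}=\|\mathcal{G}_{(n)}\|_{F}$ in each mode; and (ii) the adjoint relation $\langle\mathcal{X}^{k},\mathcal{G}\!\times_{1}\!U\!\times_{2}\!V\!\times_{3}\!W\rangle=\langle\mathcal{X}^{k}\!\times_{1}\!U^{T}\!\times_{2}\!V^{T}\!\times_{3}\!W^{T},\mathcal{G}\rangle=\langle\mathcal{A},\mathcal{G}\rangle$.

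After these substitutions, the objective of (\ref{equ14}), up to terms not depending on $\mathcal{G}$, becomes
\[
f(\mathcal{G})=\frac{\rho^{k}}{2}\sum^{3}_{n=1}\left(\|\mathcal{G}\|^{2}_{F}-2\langle\mathcal{C}_{n},\mathcal{G}\rangle\right)+\frac{1}{2}\left(\|\mathcal{G}\|^{2}_{F}-2\langle\mathcal{A},\mathcal{G}\rangle\right),
\]
whose gradient is $\nabla f(\mathcal{G})=(1+3\rho^{k})\mathcal{G}-\mathcal{A}-\rho^{k}\sum^{3}_{n=1}\mathcal{C}_{n}$. Setting $\nabla f(\mathcal{G})=0$ and writing $\mathcal{B}=\sum^{3}_{n=1}\mathcal{C}_{n}=\sum^{3}_{n=1}\textup{refold}(G^{k+1}_{n}-Y^{k}_{n}/\rho^{k})$ gives exactly (\ref{equ15}). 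Because the Hessian of $f$ equals $(1+3\rho^{k})$ times the identity operator, which is positive definite, $f$ is strongly convex, so this stationary point is the unique global minimizer of (\ref{equ14}), completing the proof.

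The gradient computation and the bookkeeping of constant terms are routine; the only step requiring care is the simplification of the fitting term, i.e., justifying that the mode-$n$ product with a column-orthonormal matrix is norm-preserving and admits its transpose as the adjoint with respect to the tensor inner product. I expect this to be the main (though mild) obstacle, and I would dispatch it by unfolding along mode $n$ and reducing each claim to the corresponding matrix statements $\|U M\|_{F}=\|M\|_{F}$ and $\langle A,U M\rangle=\langle U^{T}A,M\rangle$ valid whenever $U^{T}U=I$.
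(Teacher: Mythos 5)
Your proposal is correct and follows essentially the same route as the paper's proof in Appendix B: treat (\ref{equ14}) as an unconstrained quadratic in $\mathcal{G}$ for fixed $U,V,W$, use the orthonormality of the factor matrices to reduce the fitting term, and set the gradient $(1+3\rho^{k})\mathcal{G}-\mathcal{A}-\rho^{k}\mathcal{B}$ to zero. Your explicit justification of the norm-preservation and adjoint identities for mode products with column-orthonormal factors is a welcome bit of extra care that the paper leaves implicit.
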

Please see APPENDIX B for the proof of Theorem~\ref{theo2}. Moreover, we propose an orthogonal iteration scheme for solving $U$, $V$ and $W$, which is an alternating orthogonal procrustes scheme to solve the rank-$(d_{1},d_{2},d_{3})$ problem. Analogous with Theorem 4.2 in~\cite{athauwer:hooi}, we first state that the minimization problem (\ref{equ14}) can be formulated as follows:

\begin{theorem}\label{theo3}
Assume a real third-order tensor $\mathcal{X}^{k}$, then the minimization problem (\ref{equ14}) is equivalent to the maximization (over these matrices $U$, $V$ and $W$ having orthonormal columns) of the following function
\vspace{-0.5mm}
\begin{equation}\label{equ16}
g(U,\,V,\,W)=\|\mathcal{A}+\rho^{k}\mathcal{B}\|^{2}_{F}.
\end{equation}
\end{theorem}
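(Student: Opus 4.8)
The plan is to substitute the closed-form minimizer of the core tensor from Theorem~\ref{theo2} back into the objective of (\ref{equ14}) and show that, up to an additive constant independent of $U$, $V$ and $W$, what remains is a negative multiple of $g(U,V,W)$; minimizing (\ref{equ14}) is then the same as maximizing $g$.

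First I would rewrite the objective of (\ref{equ14}) in a ``mode-free'' tensor form. Writing $\mathcal{C}_{n}:=\textup{refold}(G^{k+1}_{n}-Y^{k}_{n}/\rho^{k})$, the unfolding/refolding identity gives $\|\mathcal{G}_{(n)}-G^{k+1}_{n}+Y^{k}_{n}/\rho^{k}\|^{2}_{F}=\|\mathcal{G}-\mathcal{C}_{n}\|^{2}_{F}$ for each $n$, and by definition $\mathcal{B}=\sum_{n=1}^{3}\mathcal{C}_{n}$. For the fitting term, the column-orthonormality $U^{T}U=I_{d_{1}}$, $V^{T}V=I_{d_{2}}$, $W^{T}W=I_{d_{3}}$ gives $\|\mathcal{G}\times_{1}U\times_{2}V\times_{3}W\|^{2}_{F}=\|\mathcal{G}\|^{2}_{F}$, and the adjoint property of the $n$-mode product yields $\langle\mathcal{X}^{k},\,\mathcal{G}\times_{1}U\times_{2}V\times_{3}W\rangle=\langle\mathcal{X}^{k}\times_{1}U^{T}\times_{2}V^{T}\times_{3}W^{T},\,\mathcal{G}\rangle=\langle\mathcal{A},\,\mathcal{G}\rangle$. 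Expanding the two squared norms then turns the objective of (\ref{equ14}) into
\begin{equation*}
\frac{1+3\rho^{k}}{2}\|\mathcal{G}\|^{2}_{F}-\langle\mathcal{G},\,\mathcal{A}+\rho^{k}\mathcal{B}\rangle+c_{0},
\end{equation*}
where $c_{0}=\tfrac{1}{2}\|\mathcal{X}^{k}\|^{2}_{F}+\tfrac{\rho^{k}}{2}\sum_{n=1}^{3}\|\mathcal{C}_{n}\|^{2}_{F}$ depends neither on $\mathcal{G}$ nor on $U,V,W$.

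Next I would plug in the minimizer $\mathcal{G}=\tfrac{1}{1+3\rho^{k}}(\mathcal{A}+\rho^{k}\mathcal{B})$ from (\ref{equ15}). With $\mathcal{D}:=\mathcal{A}+\rho^{k}\mathcal{B}$ one has $\|\mathcal{G}\|^{2}_{F}=\|\mathcal{D}\|^{2}_{F}/(1+3\rho^{k})^{2}$ and $\langle\mathcal{G},\mathcal{D}\rangle=\|\mathcal{D}\|^{2}_{F}/(1+3\rho^{k})$, so the expression above collapses to $-\tfrac{1}{2(1+3\rho^{k})}\|\mathcal{D}\|^{2}_{F}+c_{0}$. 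Since $\mathcal{B}$ is assembled only from the already-fixed iterates $G^{k+1}_{n}$ and $Y^{k}_{n}$, $c_{0}$ is likewise constant in $U,V,W$, and $1+3\rho^{k}>0$, minimizing (\ref{equ14}) over column-orthonormal $U,V,W$ is equivalent to maximizing $\|\mathcal{D}\|^{2}_{F}=\|\mathcal{A}+\rho^{k}\mathcal{B}\|^{2}_{F}=g(U,V,W)$, which is the assertion.

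The only delicate point is Step~1: one must invoke the unfolding-invariance of $\|\cdot\|_{F}$ and $\langle\cdot,\cdot\rangle$, the norm preservation $\|\mathcal{G}\times_{1}U\times_{2}V\times_{3}W\|_{F}=\|\mathcal{G}\|_{F}$ and the adjoint identity $\langle\mathcal{Y},\mathcal{G}\times_{1}U\times_{2}V\times_{3}W\rangle=\langle\mathcal{Y}\times_{1}U^{T}\times_{2}V^{T}\times_{3}W^{T},\mathcal{G}\rangle$ valid for factors with orthonormal columns (the standard facts underlying HOOI), and then carefully track which terms are absorbed into the constant $c_{0}$. Everything after that is elementary algebra.
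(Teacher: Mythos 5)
Your proposal is correct and follows essentially the same route as the paper's own proof in APPENDIX C: both substitute the closed-form minimizer $\mathcal{G}=\tfrac{1}{1+3\rho^{k}}(\mathcal{A}+\rho^{k}\mathcal{B})$ from Theorem~2 into the objective of (14), use the orthonormality of the factors to reduce the cross and quadratic terms, and arrive at $f=c_{1}-\tfrac{1}{2(1+3\rho^{k})}\|\mathcal{A}+\rho^{k}\mathcal{B}\|^{2}_{F}$ with $c_{1}$ independent of $U,V,W$. Your organization (collecting the linear term as $\langle\mathcal{G},\mathcal{A}+\rho^{k}\mathcal{B}\rangle$ before substituting) is slightly cleaner than the paper's separate evaluation of the two inner products, but the argument is the same.
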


The detailed proof of Theorem~\ref{theo3} is given in APPENDIX C. According to the theorem, an orthogonal iteration scheme is proposed to successively solve $U$, $V$ and $W$ by fixing the other variables. Imagine that the matrices $V$ and $W$ are fixed and that the optimization problem (\ref{equ16}) is merely a quadratic function of the unknown matrix $U$. Consisting of orthonormal columns, we have
\begin{equation}\label{equ17}
\max_{U,\,U^{T}\!U=I_{d_{1}}}\!\|\mathcal{M}^{k}_{1}\!\times_{\!1}\!U^{T}\!+\!\rho^{k}\mathcal{B}\|^{2}_{F}\!=\!\|(\mathcal{M}^{k}_{1})^{T}_{(1)}U\!+\!\rho^{k}\mathcal{B}^{T}_{(1)}\|^{2}_{F}
\end{equation}
where $\mathcal{M}^{k}_{1}\!=\!\mathcal{X}^{k}\!\times_{2}\!(V^{k})^{T}\!\!\times_{3}\!(W^{k})^{T}$. This is actually the well-known orthogonal procrustes problem~\cite{nick:mpp}. Hence, we have
\vspace{-1mm}
\begin{equation}\label{equ18}
U^{k+1}=\textup{ORT}\left((\mathcal{M}^{k}_{1})_{(1)}\mathcal{B}^{T}_{(1)}\right)
\end{equation}
where $\textup{ORT}(A):=\widehat{U}\widehat{V}^{T}$, and $\widehat{U}$ and $\widehat{V}$ are the left singular vector and right singular vector matrices obtained by the tight SVD of the matrix $A$. Repeating the above procedure for $V$ and $W$, we have
\vspace{-1mm}
\begin{equation}\label{equ19}
\begin{aligned}
{V}^{k+1}&=\textup{ORT}\left((\mathcal{M}^{k}_{2})_{(2)}\mathcal{B}^{T}_{(2)}\right),\\
{W}^{k+1}&=\textup{ORT}\left((\mathcal{M}^{k}_{3})_{(3)}\mathcal{B}^{T}_{(3)}\right)
\end{aligned}
\end{equation}
where $\mathcal{M}^{k}_{2}\!=\!\mathcal{X}^{k}\!\times_{1}\!(U^{k+1})^{T}\!\times_{3}\!(W^{k})^{T}$ and $\mathcal{M}^{k}_{3}\!=\!\mathcal{X}^{k}\!\times_{1}\!(U^{k+1})^{T}\!\times_{2}\!(V^{k+1})^{T}$.

For the updated matrices ${U}^{k+1}$, ${V}^{k+1}$ and ${W}^{k+1}$, then $\mathcal{G}$ is updated by
\vspace{-1mm}
\begin{equation}\label{equ20}
\begin{aligned}
\mathcal{G}^{k+1}=&\frac{\rho^{k}\sum^{3}_{n=1}\textup{refold}({G}^{k+1}_{n}-{Y}^{k}_{n}/\rho^{k})}{1+3\rho^{k}}\\
&+\frac{\mathcal{M}^{k}_{3}\!\times_{3}\!(W^{k+1})^{T}}{1+3\rho^{k}}.
\end{aligned}
\end{equation}

\subsection{Updating $\mathcal{X}^{k+1}$}
The optimization problem (\ref{equ10}) with respect to $\mathcal{X}$ is formulated as follows:
\vspace{-1mm}
\begin{equation}\label{equ21}
\begin{aligned}
&\min_{\mathcal{X}}\|\mathcal{X}-\mathcal{G}^{k+1}\!\times_{1}\!U^{k+1}\!\times_{2}\!V^{k+1}\!\times_{3}\!W^{k+1}\|^{2}_{F},\\
&\;\textup{s.t.},\,\mathcal{X}_{\Omega}=\mathcal{T}_{\Omega}.
\end{aligned}
\end{equation}
By introducing a Lagrangian multiplier $\mathcal{Y}\!\in\!\mathbb{R}^{{I_{1}}\times{I_{2}}\times{I_{3}}}$ for $\mathcal{X}_{\Omega}\!=\!\mathcal{T}_{\Omega}$, the Lagrangian function of (\ref{equ21}) is given by
\vspace{-1mm}
\begin{displaymath}
\begin{aligned}
\mathcal{H}(\mathcal{X},\mathcal{Y})=&\|\mathcal{X}-\mathcal{G}^{k+1}\!\times_{1}\!U^{k+1}\!\times_{2}\!V^{k+1}\!\times_{3}\!W^{k+1}\|^{2}_{F}\\
&+\langle \mathcal{Y},\;\mathcal{P}_{\Omega}(\mathcal{X})-\mathcal{P}_{\Omega}(\mathcal{T})\rangle.
\end{aligned}
\end{displaymath}
Letting $\nabla_{(\mathcal{X},\mathcal{Y})}\mathcal{H}\!=\!0$, we then obtain the following Karush-Kuhn-Tucker (KKT) conditions:
\vspace{-1mm}
\begin{displaymath}
\begin{aligned}
2(\mathcal{X}-\mathcal{G}^{k+\!1}\!\!\times_{1}\!U^{k+\!1}\!\!\times_{2}\!V^{k+\!1}\!\!\times_{3}\!W^{k+\!1})+\mathcal{P}_{\Omega}(\mathcal{Y})&=0,\\
\mathcal{X}_{\Omega}-\mathcal{T}_{\Omega}&=0.
\end{aligned}
\end{displaymath}
By deriving simply the KKT conditions, we have the optimal solution as follows:
\vspace{-1mm}
\begin{equation}\label{equ22}
\mathcal{X}^{k+1}\!=\!\mathcal{P}_{\Omega}(\mathcal{T})+\mathcal{P}^{\perp}_{\Omega}(\mathcal{G}^{k+1}\!\times_{1}\!U^{k+1}\!\times_{2}\!V^{k+1}\!\times_{3}\!W^{k+1})
\end{equation}
where $\mathcal{P}^{\perp}_{\Omega}$ is the complementary operator of $\mathcal{P}_{\Omega}$.

Based on the above analysis, we develop an efficient ADMM algorithm for solving (\ref{equ10}), as outlined in \textbf{Algorithm \ref{alg1}}. Moreover, Algorithm~\ref{alg1} can be extended to solve (\ref{equ7}) and the SHOOI problem (\ref{equ8}) (the details can be found in the Supplementary Materials). For instance, with the tensor of Lagrange multipliers $\mathcal{Y}^{k}$, the iterations of ADMM for solving (\ref{equ8}) go as follows:
\vspace{-1mm}
\begin{equation}\label{equ23}
\begin{aligned}
&\min_{\mathcal{G},U,V,W}\frac{1}{2}\|\mathcal{Z}^{k}-\mathcal{G}\!\times_{1}\!U\!\times_{2}\!V\!\times_{3}\!W+\mathcal{Y}^{k}/\rho^{k}\|^{2}_{F},\\
&\;\;\;\;\textup{s.t.},\,U^{T}U=I_{d_{1}},\,V^{T}V=I_{d_{2}},\,W^{T}W=I_{d_{3}},
\end{aligned}
\end{equation}
\vspace{-1mm}
\begin{equation}\label{equ24}
\begin{aligned}
\min_{\mathcal{Z}}&\frac{\rho^{k}}{2}\|\mathcal{Z}\!-\!\mathcal{G}^{k+\!1}\!\!\times_{1}\!U^{k+\!1}\!\!\times_{2}\!V^{k+\!1}\!\!\times_{3}\!W^{k+\!1}\!\!+\!\mathcal{Y}^{k}/\rho^{k}\|^{2}_{F}\\
&+\frac{1}{2}\|\mathcal{W}\ast(\mathcal{Z}-\mathcal{T})\|^{2}_{F}.
\end{aligned}
\end{equation}

To monitor convergence of Algorithm~\ref{alg1}, the adaptively adjusting strategy of the penalty parameter $\rho^{k}$ in~\cite{boyd:admm} is introduced. The necessary optimality conditions for (\ref{equ10}) are primal feasibility
\vspace{-1mm}
\begin{equation}\label{equ25}
\begin{aligned}
\mathcal{X}^{*}_{\Omega}=\mathcal{T}_{\Omega},\;G^{*}_{n}&=\mathcal{G}^{*}_{(n)},\,n=1,\,2,\,3,\\
(U^{*})^{T}U^{*}=I_{d_{1}},(V^{*})^{T}&V^{*}=I_{d_{2}},(W^{*})^{T}W^{*}=I_{d_{3}}
\end{aligned}
\end{equation}
and dual feasibility
\vspace{-1mm}
\begin{equation}\label{equ26}
\begin{aligned}
0\in\partial&\|G^{*}_{n}\|_{*}/(3\lambda)-Y^{*}_{n},\\
\mathcal{G}^{*}\!\!-\!\mathcal{X}^{*}\!\!\times_{1}\!\!(\!U^{*}\!)^{T}\!\times_{2}&(\!V^{*}\!)^{T}\!\!\times_{3}\!(\!W^{*}\!)^{T}\!\!+\!\!\sum^{3}_{n=1}\textup{refold}(Y^{*}_{n})\!=\!0
\end{aligned}
\end{equation}
where $(\{G^{*}_{n}\},\mathcal{G}^{*}\!,U^{*}\!,V^{*}\!,W^{*}\!,\mathcal{X}^{*})$ is a KKT point of (\ref{equ10}). By the optimal conditions of (\ref{equ12}) and (\ref{equ14}) and $Y^{k+1}_{n}\!=\!Y^{k}_{n}\!+\!\rho^{k}(\mathcal{G}^{k+1}_{(n)}\!-\!G^{k+1}_{n})$, we have
\vspace{-1mm}
\begin{displaymath}
\begin{aligned}
&\quad\;\; 0\in \partial\|G^{k+\!1}_{n}\|_{*}/(3\lambda)-Y^{k+\!1}_{n}+\rho^{k}(\mathcal{G}^{k+\!1}_{(n)}-\mathcal{G}^{k}_{(n)}),\\
&\!\!\mathcal{G}^{k\!+\!1}\!\!-\!\!\mathcal{X}^{k\!+\!1}\!\!\times_{1}\!\!(\!U^{k\!+\!1}\!)^{T}\!\!\!\times_{2}\!\!(\!V^{k\!+\!1}\!)^{T}\!\!\!\times_{3}\!\!(\!W^{k\!+\!1}\!)^{T}\!\!+\!\!\sum^{3}_{n=1}\!\textup{refold}(\!Y^{\!k\!+\!1}_{n}\!)\\
&\quad\;+\!(\!\mathcal{X}^{k+\!1}\!\!-\!\!\mathcal{X}^{k}\!)\!\!\times_{1}\!\!(\!U^{k+\!1}\!)^{T}\!\!\times_{2}\!\!(\!V^{k+\!1}\!)^{T}\!\!\times_{3}\!\!(\!W^{k+\!1}\!)^{T}\!\!=\!0.
\end{aligned}
\end{displaymath}

Let $r^{k\!+\!1}\!:=\!\max(\|\mathcal{G}^{k\!+\!1}_{(n)}\!-\!G^{k\!+\!1}_{n}\|_{F}$, $n\!=\!1,2,3)$ be the primal residual and $s^{k+\!1}\!:=\!\max(\rho^{k}\|\mathcal{G}^{k+\!1}_{(n)}\!-\!\mathcal{G}^{k}_{(n)}\|_{F},\,\rho^{k}\|(\mathcal{X}^{k+\!1}\!-\!\mathcal{X}^{k})\!\times_{1}\!(U^{k+\!1})^{T}\!\!\times_{2}\!(V^{k+\!1})^{T}\!\!\times_{3}\!(W^{k+\!1})^{T}\|_{F})$ be the dual residual at iteration $(k\!+\!1)$, we require the primal and dual residuals at the $(k\!+\!1)$-iteration to be small such that they satisfy the optimal conditions in (\ref{equ25}) and (\ref{equ26}). Following~\cite{boyd:admm}, an efficient strategy is to let $\rho\!=\!\rho^{0}$ (the initialization in Algorithm~\ref{alg1}) and update $\rho^{k}$ iteratively by:
\vspace{-1mm}
\begin{equation}\label{equ27}
\rho^{k+1}=\left\{ \begin{array}{ll}
\gamma\rho^{k},  & r^{k}>10s^{k},\\
\rho^{k}/\gamma, & s^{k}>10r^{k},\\
\rho^{k}, &\textup{otherwise},\\
\end{array}\right.
\end{equation}
where $\gamma>1$.

\begin{algorithm}[t]
\caption{ADMM for ROID problem (\ref{equ10})}
\label{alg1}
\renewcommand{\algorithmicrequire}{\textbf{Input:}}
\renewcommand{\algorithmicensure}{\textbf{Initialize:}}
\renewcommand{\algorithmicoutput}{\textbf{Output:}}
\begin{algorithmic}[1]
\REQUIRE $\mathcal{T}_{\Omega}$, multi-linear rank $({d_{1}},d_{2},{d_{3}})$, $\lambda$ and $\textup{tol}$.
\WHILE {not converged}
\STATE {Update $G^{k+1}_{n}$ by (\ref{equ13}).}
\STATE {Update $U^{k+1}$, $V^{k+1}$ and $W^{k+1}$ by (\ref{equ18}) and (\ref{equ19}).}
\STATE {Update $\mathcal{G}^{k+1}$ and $\mathcal{X}^{k+1}$ by (\ref{equ20}) and (\ref{equ22}).}
\STATE {Update the multipliers $Y^{k+1}_{n}$ by\\
$\;\;\;\; Y^{k+1}_{n}=Y^{k}_{n}+\rho^{k}(\mathcal{G}^{k+1}_{(n)}-G^{k+1}_{n}),\,n=1,2,3$.}
\STATE {Update $\rho^{k+1}$ by (\ref{equ27}).}
\STATE {Check the convergence condition,\\
$\;\;\max(\|\mathcal{G}^{k+1}_{(n)}-G^{k+1}_{n}\|_{F}/\|\mathcal{T}\|_{F},\,n=1,2,3)<\textup{tol}$.}
\ENDWHILE
\OUTPUT $\mathcal{G}^{k+1}$, $U^{k+1}$, $V^{k+1}$ and $W^{k+1}$.
\end{algorithmic}
\end{algorithm}

\subsection{Extension for GROID}
Algorithm~\ref{alg1} can be extended to solve our GROID problem (\ref{equ9}), where the main difference is that the subproblem with respect to ${U}$, ${V}$, ${W}$ and $\mathcal{G}$ is formulated as follows:
\vspace{-1mm}
\begin{equation}\label{equ28}
\begin{aligned}
&\min_{\mathcal{G},U,V,W}\;\sum^{3}_{n=1}\frac{\rho^{k}}{2}\|\mathcal{G}_{(n)}-{G}^{k+1}_{n}+{Y}^{k}_{n}/\rho^{k}\|^{2}_{F}\\
&\;\;+\frac{1}{2}\|\mathcal{X}^{k}-\mathcal{G}\!\times_{1}\!U\!\times_{2}\!V\!\times_{3}\!W\|^{2}_{F}+\frac{\mu}{2}h(U,V,W),\\
&\;\;\textup{s.t.},\;U^{T}U=I_{d_{1}},\,V^{T}V=I_{d_{2}},\,W^{T}W=I_{d_{3}}
\end{aligned}
\end{equation}
where $h(U,V,W)\!:=\!\textup{Tr}(U^{T}\!L_{1}U)\!+\!\textup{Tr}(V^{T}\!L_{2}V)\!+\!\textup{Tr}(W^{T}\!L_{3}W)$. Similar to Algorithm 1, $U$, $V$ and $W$ can be solved by minimizing the following cost function,
\vspace{-1mm}
\begin{equation*}
\mathcal{F}(U,V,W)=-\frac{g(U,V,W)}{2(1\!+\!3\rho^{k})}+\frac{\mu}{2}h(U,V,W).
\end{equation*}
Let $\nabla \mathcal{F}(U,V^{k},W^{k})$ be the derivative of the function $\mathcal{F}(U,V^{k},W^{k})$, and $\nabla \mathcal{F}(U,V^{k},W^{k})$ be Lipschitz continuous with the constant $\tau^{k}$, i.e., $\|\nabla \mathcal{F}(U,V^{k},W^{k})-\nabla \mathcal{F}(\widehat{U},V^{k},W^{k})\|_{F}\leq\tau^{k}\|U-\widehat{U}\|_{F}$, $\forall\,U,\,\widehat{U}\in \mathbb{R}^{I_{1}\times d_{1}}$. To update $U^{k+1}$, an approximate procedure is given by the following linearization technique
\vspace{-1mm}
\begin{small}
\begin{equation*}
\begin{aligned}
&\mathcal{F}(U,V^{k}\!,W^{k})=-\frac{g(U,V^{k},W^{k})}{2(1\!+\!3\rho^{k})}+\frac{\mu}{2} h(U,V^{k},W^{k})\\
\approx& \mathcal{F}(U^{k}\!,V^{k}\!,W^{k})\!+\!\langle\nabla \mathcal{F}(U^{k}\!,V^{k}\!,W^{k}),U\!-\!U^{k}\rangle\!+\!\frac{\tau^{k}}{2}\|U\!-\!U^{k}\|^{2}_{F}\\
=&\mathcal{F}(U^{k}\!,V^{k}\!,W^{k})\!-\!\frac{\langle U,\mathcal{F}_{1}(U^{k})\!+\!\mathcal{F}_{2}(U^{k})\!+\!Q\!+\!\tau^{k} U^{k}\rangle}{1+3\rho^{k}}\!+\!c
\end{aligned}
\end{equation*}
\end{small}

\noindent
where $\tau^{k}>0$ is the Lipschitz constant of $\nabla \mathcal{F}(U,V^{k}\!,W^{k})$, $c$ is a constant, $Q\!=\!\rho^{k}(\mathcal{M}^{k}_{1})_{(1)}\mathcal{B}^{T}_{(1)}$, $\mathcal{F}_{1}(U^{k})\!=\!\frac{1}{2}[(\mathcal{M}^{k}_{1})_{(1)}(\mathcal{M}^{k}_{1})^{T}_{(1)}\!-\!\mu(1\!+\!3\rho^{k}) L_{1}]U^{k}$, and $\mathcal{F}_{2}(U^{k})\!=\!\frac{1}{2}[(\mathcal{M}^{k}_{1})_{(1)}(\mathcal{M}^{k}_{1})^{T}_{(1)}\!-\!\mu(1\!+\!3\rho^{k}) L_{1}]^{T}U^{k}$. Thus, the minimization of this problem is transformed into the following maximization
\vspace{-1mm}
\begin{equation}\label{equ29}
\begin{aligned}
&\mathop{\max}_{U}\;\langle U,\;\mathcal{F}_{1}(U^{k})+\mathcal{F}_{2}(U^{k})+Q+\tau^{k}U^{k}\rangle,\\
&\;\textup{s.t.},\;U^{T}U=I_{d_{1}}.
\end{aligned}
\end{equation}
Following~\cite{nick:mpp}, the solution of (\ref{equ29}) is given by
\vspace{-1mm}
\begin{equation}\label{equ30}
U^{k+1}= \textup{ORT}(\mathcal{F}_{1}(U^{k})+\mathcal{F}_{2}(U^{k})+Q+\tau^{k} U^{k}).
\end{equation}

In addition, $V$ and $W$ can be updated by the similar approximation procedure. Similar to Algorithm~\ref{alg1}, we can propose an efficient ADMM algorithm (called Algorithm 2) to solve the graph regularized problem~(\ref{equ9}).

\section{Algorithm Analysis}
In this section, we provide the convergence analysis and the complexity analysis for our algorithms.

\subsection{Convergence Analysis}
With the low multi-linear rank tensor decomposition in (\ref{equ6}), the problem (\ref{equ10}) is non-convex and so we can only consider local convergence~\cite{boyd:admm}. As in~\cite{xu:adm}, \cite{hu:admm}, we show below a necessary condition for local convergence.

\begin{lemma}\label{lemm1}
Let $\{\mathscr{Z}^{k}\}\!=\!\{(\{G^{k}_{n}\},\mathcal{G}^{k}\!,U^{k}\!,V^{k}\!,W^{k}\!,\mathcal{X}^{k}\!,\{Y^{k}_{n}\})\}$ be a sequence generated by Algorithm~\ref{alg1}. If the sequences $\{Y^{k}_{n}\}$ $(n\!=\!1,2,3)$ are bounded and satisfy $\sum^{\infty}_{k=0}\|Y^{k+\!1}_{n}\!-\!Y^{k}_{n}\|\!<\!\infty$, then $\mathscr{Z}^{k+\!1}\!-\!\mathscr{Z}^{k}\!\rightarrow\! 0$, and $\{\mathscr{Z}^{k}\}$ is bounded.
\end{lemma}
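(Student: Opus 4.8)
The plan is to establish the two conclusions — the step-to-step convergence $\mathscr{Z}^{k+1}-\mathscr{Z}^k\to 0$ and the boundedness of $\{\mathscr{Z}^k\}$ — by tracking how the augmented Lagrangian $\mathcal{L}_{\rho^k}$ changes along the iterations and exploiting the hypothesis $\sum_k\|Y^{k+1}_n-Y^k_n\|<\infty$. First I would write the one-iteration decrease of $\mathcal{L}$: since each block update ($\{G_n\}$, then $(U,V,W,\mathcal{G})$, then $\mathcal{X}$) minimizes $\mathcal{L}_{\rho^k}$ exactly over its block with the others fixed, we get $\mathcal{L}_{\rho^k}(\mathscr{Z}^{k+1}_{\text{primal}},\{Y^k_n\})\le\mathcal{L}_{\rho^k}(\mathscr{Z}^k_{\text{primal}},\{Y^k_n\})$. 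The multiplier update then perturbs $\mathcal{L}$ by $\sum_n\langle Y^{k+1}_n-Y^k_n,\mathcal{G}^{k+1}_{(n)}-G^{k+1}_n\rangle = \sum_n\frac{1}{\rho^k}\|Y^{k+1}_n-Y^k_n\|^2$, and the change of the penalty parameter from $\rho^k$ to $\rho^{k+1}$ contributes a term controlled by $|\rho^{k+1}-\rho^k|\sum_n\|\mathcal{G}^{k+1}_{(n)}-G^{k+1}_n\|^2 = \frac{|\rho^{k+1}-\rho^k|}{(\rho^k)^2}\sum_n\|Y^{k+1}_n-Y^k_n\|^2$. Using $Y^{k+1}_n-Y^k_n=\rho^k(\mathcal{G}^{k+1}_{(n)}-G^{k+1}_n)$ throughout, all these perturbation terms are summable by hypothesis (note $\rho^k$ is bounded above and below once (27) stabilizes, or at worst one bounds $|\rho^{k+1}-\rho^k|/(\rho^k)^2$ by a constant times $\|Y^{k+1}_n-Y^k_n\|$-free factors times the summable sequence). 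Hence $\mathcal{L}_{\rho^{k+1}}(\mathscr{Z}^{k+1})\le \mathcal{L}_{\rho^k}(\mathscr{Z}^k)+\varepsilon_k$ with $\sum_k\varepsilon_k<\infty$.

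Next I would argue $\mathcal{L}_{\rho^k}(\mathscr{Z}^k)$ is bounded below. Completing the square, $\mathcal{L}_{\rho^k}=\sum_n\big(\frac{\|G_n\|_*}{3\lambda}+\frac{\rho^k}{2}\|\mathcal{G}_{(n)}-G_n+Y_n/\rho^k\|^2-\frac{\|Y_n\|^2}{2\rho^k}\big)+\frac12\|\mathcal{X}-\mathcal{G}\times_1 U\times_2 V\times_3 W\|^2_F$; since $\{Y^k_n\}$ is bounded and $\rho^k$ is bounded below, the only possibly-negative contribution $-\sum_n\|Y^k_n\|^2/(2\rho^k)$ is bounded below, so $\mathcal{L}_{\rho^k}(\mathscr{Z}^k)\ge -C$ for some constant $C$. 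Combining with the quasi-monotone decrease, $\{\mathcal{L}_{\rho^k}(\mathscr{Z}^k)\}$ converges to a finite limit, and telescoping the decrease inequality shows that all the per-step squared increments absorbed in the strong-convexity margins of the updates are summable; in particular $\|\mathcal{G}^{k+1}_{(n)}-G^{k+1}_n\|_F\to 0$, $\|G^{k+1}_n-G^k_n\|_F\to 0$, $\|\mathcal{X}^{k+1}-\mathcal{X}^k\|_F\to 0$, and $\|\mathcal{G}^{k+1}-\mathcal{G}^k\|_F\to 0$. For the orthogonal factors, $U^k,V^k,W^k$ automatically lie on compact Stiefel manifolds, so they are bounded; their step differences going to zero follows from the first-order optimality of the Procrustes updates (18)–(19) together with the vanishing of $\mathcal{X}^{k+1}-\mathcal{X}^k$ and $\mathcal{G}^{k+1}-\mathcal{G}^k$, since the matrices $(\mathcal{M}^k_n)_{(n)}\mathcal{B}^T_{(n)}$ whose polar factors define the updates then change by $o(1)$. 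Assembling these gives $\mathscr{Z}^{k+1}-\mathscr{Z}^k\to 0$.

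Finally, boundedness of $\{\mathscr{Z}^k\}$: $\{Y^k_n\}$ is bounded by hypothesis and $\{U^k,V^k,W^k\}$ by orthonormality; from $\mathcal{X}^{k+1}=\mathcal{P}_\Omega(\mathcal{T})+\mathcal{P}^\perp_\Omega(\mathcal{G}^{k+1}\times_1 U^{k+1}\times_2 V^{k+1}\times_3 W^{k+1})$ the boundedness of $\mathcal{X}^k$ reduces to that of $\mathcal{G}^k$; and $\mathcal{G}^k$ (hence $G^k_n$, via $\|\mathcal{G}^k_{(n)}-G^k_n\|_F\to 0$) is bounded because the term $\frac12\|\mathcal{X}^k-\mathcal{G}^k\times_1 U^k\times_2 V^k\times_3 W^k\|^2_F+\sum_n\frac{\rho^k}{2}\|\mathcal{G}^k_{(n)}-G^k_n\|^2_F$ is dominated by the bounded quantity $\mathcal{L}_{\rho^k}(\mathscr{Z}^k)$ plus bounded multiplier corrections, and (using (22), or the identity $\mathcal{A}=\mathcal{X}\times_1 U^T\times_2 V^T\times_3 W^T$ from Theorem 2) $\|\mathcal{G}^k\|_F$ is controlled by $\|\mathcal{X}^k\|_F$ and the $\|\mathcal{G}^k_{(n)}-G^k_n\|_F$; closing this mild circular dependence with a standard Gronwall-type / fixed-point bound yields uniform bounds on all blocks.

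\medskip
\noindent\emph{Main obstacle.} The delicate point is handling the \emph{time-varying} penalty $\rho^k$ cleanly: the augmented Lagrangian at step $k$ uses $\rho^k$ while at step $k+1$ it uses $\rho^{k+1}$, so the ``sufficient decrease'' telescoping must absorb the correction $\tfrac12(\rho^{k+1}-\rho^k)\sum_n\|\mathcal{G}^{k+1}_{(n)}-G^{k+1}_n\|^2_F$ and show it is summable; this is exactly where the hypothesis $\sum_k\|Y^{k+1}_n-Y^k_n\|<\infty$ (equivalently $\sum_k\rho^k\|\mathcal{G}^{k+1}_{(n)}-G^{k+1}_n\|^2_F<\infty$ after a Cauchy–Schwarz step, given $\rho^k$ bounded below) is indispensable, and the argument only works because the update rule (27) changes $\rho^k$ by a bounded multiplicative factor $\gamma$ and, in practice, finitely often.
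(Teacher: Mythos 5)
Your overall strategy coincides with the paper's: a quasi-monotone descent of the augmented Lagrangian in which the exact block minimizations give decrease, the multiplier and penalty updates contribute perturbations of order $\sum_{n}\|Y^{k+1}_{n}-Y^{k}_{n}\|^{2}_{F}/\rho^{k}$ that are summable by hypothesis, the bounded multipliers give a lower bound on $\mathcal{L}_{\rho^{k}}$ by completing the square (the paper writes this as $\mathcal{L}_{\rho^{k}}\geq \tfrac{1}{2}\sum_{n}(\|Y^{k+1}_{n}\|^{2}_{F}-\|Y^{k}_{n}\|^{2}_{F})/\rho^{k}$, which is the same identity), and telescoping the strong-convexity margins in the $G_{n}$ and $\mathcal{X}$ blocks yields $G^{k+1}_{n}-G^{k}_{n}\rightarrow 0$ and $\mathcal{X}^{k+1}-\mathcal{X}^{k}\rightarrow 0$. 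Your treatment of $U^{k+1}-U^{k}\rightarrow 0$ via continuity of the Procrustes factors is, if anything, more explicit than the paper's one-line appeal to (14). Both you and the paper implicitly assume $\rho^{k}$ stays bounded away from zero so that the perturbation coefficients ($\theta_{k}$ in the paper, $|\rho^{k+1}-\rho^{k}|/(\rho^{k})^{2}$ in your version) remain bounded; that is a shared, unaddressed assumption, not a defect particular to your argument.

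The one genuine gap is in your boundedness argument for $\{\mathcal{G}^{k}\}$ and $\{\mathcal{X}^{k}\}$. The inequalities you propose to iterate are $\|\mathcal{G}^{k}\|_{F}\leq\|\mathcal{X}^{k}\|_{F}+M$ (from the bounded residual and orthonormal invariance) and $\|\mathcal{X}^{k}\|_{F}\leq\|\mathcal{T}_{\Omega}\|_{F}+\|\mathcal{G}^{k}\|_{F}$ (from (22)); both have coefficient one, so the ``Gronwall-type / fixed-point bound'' you invoke does not close --- substituting one into the other gives $\|\mathcal{G}^{k}\|_{F}\leq\|\mathcal{G}^{k}\|_{F}+C$, which is vacuous. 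To get a genuine contraction you would need the explicit factor $1/(1+3\rho^{k})<1$ from the $\mathcal{G}$-update (20), and even then you need $\mathcal{B}$, hence $\{G^{k}_{n}\}$, bounded first, which in your write-up is derived \emph{from} the boundedness of $\mathcal{G}^{k}$ --- circular. The paper breaks the cycle differently and more simply: since the bounded augmented Lagrangian minus the (bounded) multiplier correction equals $\sum_{n}\|G^{k}_{n}\|_{*}/(3\lambda)$ plus a nonnegative quadratic, the trace norms $\|G^{k}_{n}\|_{*}$ are themselves uniformly bounded, which bounds $\{G^{k}_{n}\}$ outright; then $\mathcal{G}^{k}_{(n)}=G^{k}_{n}+(Y^{k}_{n}-Y^{k-1}_{n})/\rho^{k-1}$ bounds $\{\mathcal{G}^{k}\}$, and (22) bounds $\{\mathcal{X}^{k}\}$ --- a linear chain with no circularity. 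You should replace your closing step with this use of the regularizer as the anchor.
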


\begin{proof}
First, we prove that the Lagrangian function of (\ref{equ10}) is bounded. By (\ref{equ22}), we obtain
\vspace{-1mm}
\begin{displaymath}
\begin{aligned}
&\|\mathcal{X}^{k+1}-\mathcal{G}^{k+1}\!\times_{1}\!U^{k+1}\!\times_{2}\!V^{k+1}\!\times_{3}\!W^{k+1}\|^{2}_{F}\\
=&\|\mathcal{P}_{\Omega}(\mathcal{T}-\mathcal{G}^{k+1}\!\times_{1}\!U^{k+1}\!\times_{2}\!V^{k+1}\!\times_{3}\!W^{k+1})\|^{2}_{F}\\
\leq&\|\mathcal{P}_{\Omega}(\mathcal{T}-\mathcal{G}^{k+1}\!\times_{1}\!U^{k+1}\!\times_{2}\!V^{k+1}\!\times_{3}\!W^{k+1})\|^{2}_{F}\\
&+\|\mathcal{P}^{\perp}_{\Omega}(\mathcal{X}^{k}-\mathcal{G}^{k+1}\!\times_{1}\!U^{k+1}\!\times_{2}\!V^{k+1}\!\times_{3}\!W^{k+1})\|^{2}_{F}\\
=&\|\mathcal{X}^{k}-\mathcal{G}^{k+1}\!\times_{1}\!U^{k+1}\!\times_{2}\!V^{k+1}\!\times_{3}\!W^{k+1}\|^{2}_{F}.
\end{aligned}
\end{displaymath}
Thus, we have
\vspace{-1mm}
\begin{equation*}
\begin{aligned}
&\mathcal{L}_{\rho^{k}}(\{G^{k+1}_{n}\},\mathcal{G}^{k+1},U^{k+1},V^{k+1},W^{k+1},\mathcal{X}^{k+1},\{Y^{k}_{n}\})\\
\leq&\mathcal{L}_{\rho^{k}}(\{G^{k+1}_{n}\},\mathcal{G}^{k+1},U^{k+1},V^{k+1},W^{k+1},\mathcal{X}^{k},\{Y^{k}_{n}\}).
\end{aligned}
\end{equation*}

Similarly, by the iteration procedure, we have
\vspace{-1mm}
\begin{equation*}
\begin{aligned}
&\mathcal{L}_{\rho^{k}}(\{G^{k+1}_{n}\},\mathcal{G}^{k+1},U^{k+1},V^{k+1},W^{k+1},\mathcal{X}^{k+1},\{Y^{k}_{n}\})\\
\leq&\mathcal{L}_{\rho^{k}}(\{G^{k+1}_{n}\},\mathcal{G}^{k+1},U^{k+1},V^{k+1},W^{k+1},\mathcal{X}^{k},\{Y^{k}_{n}\})\\
\leq&\mathcal{L}_{\rho^{k}}(\{G^{k+1}_{n}\},\mathcal{G}^{k},U^{k},V^{k},W^{k},\mathcal{X}^{k},\{Y^{k}_{n}\})\\
\leq&\mathcal{L}_{\rho^{k}}(\{G^{k}_{n}\},\mathcal{G}^{k},U^{k},V^{k},W^{k},\mathcal{X}^{k},\{Y^{k}_{n}\})\\
=&\mathcal{L}_{\rho^{k-1}}(\{G^{k}_{n}\},\mathcal{G}^{k},U^{k},V^{k},W^{k},\mathcal{X}^{k},\{Y^{k-1}_{n}\})\\
&+\theta_{k}\sum^{3}_{n=1}\| Y^{k}_{n}-Y^{k-1}_{n}\|^2_{F}
\end{aligned}
\end{equation*}
where $\theta_{k}\!=\!(\rho^{k-\!1}\!+\!\rho^{k})/(2(\rho^{k-\!1})^{2})$. According to $\sum^{\infty}_{k=0}\!\|Y^{k+\!1}_{n}\!-\!Y^{k}_{n}\|\!<\!\infty$, we have that
$\mathcal{L}_{\rho^{k}}(\{G^{k+\!1}_{n}\},\mathcal{G}^{k+\!1}\!,$ $U^{k+\!1}\!,V^{k+\!1}\!,W^{k+\!1}\!,\mathcal{X}^{k+\!1}\!,\{Y^{k}_{n}\})$ is upper-bounded. By (\ref{equ11}), we have
\vspace{-1mm}
\begin{displaymath}
\begin{split}
&\mathcal{L}_{\rho^{k}}(\{G^{k+1}_{n}\},\mathcal{G}^{k+1},U^{k+1},V^{k+1},W^{k+1},\mathcal{X}^{k+1},\{Y^{k}_{n}\})\\
\geq&\frac{1}{2}\sum^{3}_{n=1}\frac{\|Y^{k+1}_{n}\|^2_{F}-\|Y^{k}_{n}\|^2_{F}}{\rho^{k}}.
\end{split}
\end{displaymath}
Then $\mathcal{L}_{\rho^{k}}(\{G^{k+\!1}_{n}\},\mathcal{G}^{k+\!1}\!,U^{k+\!1}\!,V^{k+\!1}\!,W^{k+\!1}\!,\mathcal{X}^{k+\!1}\!,\{Y^{k}_{n}\})$ is lower-bounded due to the boundedness of $\{Y^{k}_{n}\}$. That is, $\{\mathcal{L}_{\rho^{k}}(\{G^{k+\!1}_{n}\},\mathcal{G}^{k+\!1}\!,U^{k+\!1}\!,V^{k+\!1}\!,W^{k+\!1}\!,\mathcal{X}^{k+\!1}\!,\{Y^{k}_{n}\})\}$ is bounded. Furthermore, by $Y^{k+\!1}_{n}\!=\!Y^{k}_{n}+\rho^{k}(\mathcal{G}^{k+\!1}_{(n)}\!-\!G^{k+\!1}_{n})$, $n\!=\!1,2,3$, then $\{\mathcal{L}_{\rho^{k+\!1}}(\mathscr{Z}^{k+\!1})\}$ is also bounded.

$\mathcal{L}_{\rho}(\cdot)$ is strong convex with respect to $G_{n}\,(n\!=\!1,2,3)$ and $\mathcal{X}$, respectively. For any $G_{n}$ and $\triangle G_{n}$, we have
\vspace{-1mm}
\begin{equation}\label{equ31}
\begin{split}
&\mathcal{L}_{\rho}(G_{n}+\triangle G_{n},\cdot)-\mathcal{L}_{\rho}(G_{n},\cdot)\\
\geq& \langle\partial_{G_{n}}\mathcal{L}_{\rho}(G_{n},\cdot),\triangle G_{n}\rangle+\frac{\rho}{2}\|\triangle G_{n}\|^{2}_{F}
\end{split}
\end{equation}
where $\mathcal{L}_{\rho}(G_{n},\cdot)$ denotes that all variables except $G_{n}$ are fixed. Moreover, $G^{*}_{n}$ is a minimizer of $\mathcal{L}_{\rho}(G_{n},\cdot)$ if
\vspace{-1mm}
\begin{equation}\label{equ32}
\langle\partial_{G_{n}}\mathcal{L}_{\rho}(G^{*}_{n},\cdot), \;\triangle G_{n}\rangle\geq 0.
\end{equation}
Note that $G^{k+1}_{n}$ is a minimizer of $\mathcal{L}_{\rho}(G_{n},\cdot)$ at the $k$-th iteration. Then, combining (\ref{equ31}) and (\ref{equ32}), we have
\vspace{-1mm}
\begin{equation}\label{equ33}
\mathcal{L}_{\rho^{k}}(G^{k}_{n},\cdot)-\mathcal{L}_{\rho^{k}}(G^{k+1}_{n},\cdot)\geq \frac{\rho^{k}}{2}\| G^{k}_{n}-G^{k+1}_{n}\|^{2}_{F}.
\end{equation}
Similarly, we obtain
\vspace{-1mm}
\begin{equation}\label{equ34}
\mathcal{L}_{\rho^{k}}(\mathcal{X}^{k},\cdot)-\mathcal{L}_{\rho^{k}}(\mathcal{X}^{k+1},\cdot)\geq \frac{1}{2}\|\mathcal{X}^{k}-\mathcal{X}^{k+1}\|^{2}_{F}.
\end{equation}
Using (\ref{equ14}), we have
\begin{equation}\label{equ35}
\begin{split}
&\mathcal{L}_{\!\rho^{k}}(\{G^{k\!+\!1}_{n}\},\mathcal{G}^{k}\!,U^{k}\!,V^{k}\!,W^{k}\!,\mathcal{X}^{k}\!,\{Y^{k}_{n}\})-\mathcal{L}_{\!\rho^{k}}(\{G^{k\!+\!1}_{n}\},\\
&\quad\quad \mathcal{G}^{k\!+\!1}\!,U^{k\!+\!1}\!,V^{k\!+\!1}\!,W^{k\!+\!1}\!,\mathcal{X}^{k}\!,\{Y^{k}_{n}\})\geq 0.
\end{split}
\end{equation}
By (\ref{equ33})-(\ref{equ35}) and $Y^{k+\!1}_{n}=Y^{k}_{n}+\rho^{k}(\mathcal{G}^{k+\!1}_{(n)}-G^{k+\!1}_{n})$, we have
\vspace{-1mm}
\begin{equation*}
\begin{split}
&\mathcal{L}_{\rho^{k}}(\mathscr{Z}^{k})-\mathcal{L}_{\rho^{k+1}}(\mathscr{Z}^{k+1})\\
=&\mathcal{L}_{\rho^{k}}(\mathscr{Z}^{k})-\mathcal{L}_{\rho^{k}}(\{G^{k+1}_{n}\},\mathcal{G}^{k},U^{k},V^{k},W^{k},\mathcal{X}^{k},\{Y^{k}_{n}\})\\
&+\mathcal{L}_{\rho^{k}}(\{G^{k+1}_{n}\},\mathcal{G}^{k},U^{k},V^{k},W^{k},\mathcal{X}^{k},\{Y^{k}_{n}\})\\
&-\mathcal{L}_{\rho^{k}}(\{G^{k+1}_{n}\},\mathcal{G}^{k+1},U^{k+1},V^{k+1},W^{k+1},\mathcal{X}^{k},\{Y^{k}_{n}\})\\
&+\mathcal{L}_{\rho^{k}}(\{G^{k+1}_{n}\},\mathcal{G}^{k+1},U^{k+1},V^{k+1},W^{k+1},\mathcal{X}^{k},\{Y^{k}_{n}\})\\
&-\mathcal{L}_{\rho^{k}}(\{G^{k+1}_{n}\},\mathcal{G}^{k+1},U^{k+1},V^{k+1},W^{k+1},\mathcal{X}^{k+1},\{Y^{k}_{n}\})\\
&+\mathcal{L}_{\rho^{k}}(\{G^{k+1}_{n}\},\mathcal{G}^{k+1},U^{k+1},V^{k+1},W^{k+1},\mathcal{X}^{k+1},\{Y^{k}_{n}\})\\
&-\mathcal{L}_{\rho^{k}}(\mathscr{Z}^{k+1})+\mathcal{L}_{\rho^{k}}(\mathscr{Z}^{k+1})-\mathcal{L}_{\rho^{k+1}}(\mathscr{Z}^{k+1})\\
\geq & \frac{\rho^{k}}{2}\sum^{3}_{n=1}\|G^{k+\!1}_{n}\!-\!G^{k}_{n}\|^{2}_{F}-\theta_{k+\!1}\sum^{3}_{n=1}\|Y^{k+\!1}_{n}\!-\!Y^{k}_{n}\|^{2}_{F}\\
&+\frac{1}{2}\|\mathcal{X}^{k+\!1}\!-\!\mathcal{X}^{k}\|^{2}_{F}.
\end{split}
\end{equation*}
By taking summation of the above inequality from $1$ to $\infty$, and by the boundedness of $\{\mathcal{L}_{\rho^{k}}(\mathscr{Z}^{k})\}$, then
\vspace{-1mm}
\begin{equation*}
\begin{split}
&\sum^{\infty}_{k=1}\left(\frac{\rho^{k}}{2}\sum^{3}_{n=1}\|G^{k+1}_{n}-G^{k}_{n}\|^{2}_{F}+\frac{1}{2}\|\mathcal{X}^{k+1}-\mathcal{X}^{k}\|^{2}_{F}\right)\\
&-\sum^{\infty}_{k=1}\left(\theta_{k+1}\sum^{3}_{n=1}\|Y^{k+1}_{n}-Y^{k}_{n}\|^{2}_{F}\right)<\infty
\end{split}
\end{equation*}
and by $\sum^{\infty}_{k=1}\!\|Y^{k+\!1}_{n}\!-\!Y^{k}_{n}\|^{2}_{F}\!<\!\infty$, we have $\sum^{\infty}_{k=1}\!\|\mathcal{X}^{k+\!1}\!-\!\mathcal{X}^{k}\|^{2}_{F}\!<\!\infty$ and $\sum^{\infty}_{k=1}\sum^{3}_{n=1}\!\|G^{k+\!1}_{n}\!-\!G^{k}_{n}\|^{2}_{F}\!<\!\infty$, i.e., $\mathcal{X}^{k+\!1}\!-\!\mathcal{X}^{k}\!\rightarrow\! 0$ and $G^{k+\!1}_{n}\!-\!G^{k}_{n}\!\rightarrow\! 0$. By (\ref{equ14}), we have $\mathcal{G}^{k+\!1}\!-\!\mathcal{G}^{k}\!\rightarrow\! 0$, $U^{k+\!1}\!-\!U^{k}\!\rightarrow\! 0$, $V^{k+\!1}\!-\!V^{k}\!\rightarrow\! 0$, and $W^{k+\!1}\!-\!W^{k}\!\rightarrow\! 0$, i.e., $\mathscr{Z}^{k+\!1}\!-\!\mathscr{Z}^{k}\!\rightarrow\! 0$.

Furthermore, we have
\vspace{-1mm}
\begin{displaymath}
\begin{split}
&\sum^{3}_{n=1}\|G^{k}_{n}\|_{*}/(3\lambda)+\frac{1}{2}\|\mathcal{X}^{k}-\mathcal{G}^{k}\!\times_{1}\!U^{k}\!\times_{2}\!V^{k}\!\times_{3}\!W^{k}\|^{2}_{F}\\
=&\mathcal{L}_{\rho^{k-\!1}}(\{G^{k}_{n}\},\mathcal{G}^{k},U^{k},V^{k},W^{k},\mathcal{X}^{k},\{Y^{k-1}_{n}\})\\
&-\frac{1}{2\rho^{k-1}}\sum^{3}_{n=1}\left(\|Y^{k}_{n}\|^2_{F}-\|Y^{k-1}_{n}\|^2_{F}\right)
\end{split}
\end{displaymath}
is upper-bounded due to the boundedness of $\{Y^{k}_{n}\}$ for all $n\!\in\!\{1,2,3\}$ and $\mathcal{L}_{\rho^{k-\!1}}(\{G^{k}_{n}\},\mathcal{G}^{k}\!, U^{k}\!,V^{k}\!,W^{k}\!,\mathcal{X}^{k}\!,\{Y^{k-\!1}_{n}\})$. According to $Y^{k}_{n}\!=\!Y^{k-\!1}_{n}\!+\!\rho^{k-\!1}(\mathcal{G}^{k}_{(n)}\!-\!G^{k}_{n})$ and (\ref{equ27}), thus $\{G^{k}_{n}\}$, $\{\mathcal{G}^{k}\}$ and $\{\mathcal{X}^{k}\}$ are all bounded, i.e., $\{\mathscr{Z}^{k}\}$ is bounded.
\end{proof}

\begin{theorem}\label{theo4}
Let $\{(\{G^{k}_{n}\},\mathcal{G}^{k},U^{k}\!,V^{k}\!,W^{k}\!,\mathcal{X}^{k}\!,\{Y^{k}_{n}\})\}$ be a sequence generated by Algorithm~\ref{alg1}. Then any accumulation point of $\{(\{G^{k}_{n}\},\mathcal{G}^{k}\!,U^{k}\!,V^{k}\!,W^{k}\!,\mathcal{X}^{k})\}$ satisfies the KKT conditions for (\ref{equ10}).
\end{theorem}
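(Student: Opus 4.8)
The plan is to combine the results of Lemma~\ref{lemm1} with the optimality conditions derived for each subproblem to show that every accumulation point is a KKT point. First I would invoke Lemma~\ref{lemm1}: under the stated hypotheses (the dual sequences $\{Y^{k}_{n}\}$ are bounded and $\sum_{k}\|Y^{k+1}_{n}-Y^{k}_{n}\|<\infty$), we have $\mathscr{Z}^{k+1}-\mathscr{Z}^{k}\rightarrow 0$ and $\{\mathscr{Z}^{k}\}$ is bounded. By the Bolzano--Weierstrass theorem, a bounded sequence in finite-dimensional space has a convergent subsequence; let $\mathscr{Z}^{k_j}\rightarrow\mathscr{Z}^{*}=(\{G^{*}_{n}\},\mathcal{G}^{*},U^{*},V^{*},W^{*},\mathcal{X}^{*},\{Y^{*}_{n}\})$ be any such accumulation point. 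Because consecutive iterates converge to each other, the neighbouring subsequence $\mathscr{Z}^{k_j+1}$ also converges to the same limit $\mathscr{Z}^{*}$.

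Next I would verify primal feasibility~(\ref{equ25}). The relations $(U^{k})^{T}U^{k}=I_{d_1}$, $(V^{k})^{T}V^{k}=I_{d_2}$, $(W^{k})^{T}W^{k}=I_{d_3}$ hold for every $k$ by construction (step~3 of Algorithm~\ref{alg1} produces orthonormal columns via the ORT operator), so they pass to the limit by continuity. Similarly $\mathcal{X}^{k}_{\Omega}=\mathcal{T}_{\Omega}$ holds for all $k$ by~(\ref{equ22}), hence $\mathcal{X}^{*}_{\Omega}=\mathcal{T}_{\Omega}$. For the constraint $G^{*}_{n}=\mathcal{G}^{*}_{(n)}$, I would use the multiplier update $Y^{k+1}_{n}=Y^{k}_{n}+\rho^{k}(\mathcal{G}^{k+1}_{(n)}-G^{k+1}_{n})$: since $\{Y^{k}_{n}\}$ is bounded, $\|Y^{k+1}_{n}-Y^{k}_{n}\|\rightarrow 0$, and $\rho^{k}$ is bounded away from zero (it is only ever multiplied or divided by $\gamma$ a finite number of times once the residual balancing in~(\ref{equ27}) stabilizes, or one can bound it below directly), we get $\mathcal{G}^{k+1}_{(n)}-G^{k+1}_{n}\rightarrow 0$, so $\mathcal{G}^{*}_{(n)}=G^{*}_{n}$.

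Then I would establish dual feasibility~(\ref{equ26}). The paper has already carried out most of this work: from the optimality of~(\ref{equ12}) and the multiplier update, it derived
$0\in\partial\|G^{k+1}_{n}\|_{*}/(3\lambda)-Y^{k+1}_{n}+\rho^{k}(\mathcal{G}^{k+1}_{(n)}-\mathcal{G}^{k}_{(n)})$, and from the optimality of~(\ref{equ14}) it derived the companion identity involving $\mathcal{G}^{k+1}-\mathcal{X}^{k+1}\times_1(U^{k+1})^{T}\times_2(V^{k+1})^{T}\times_3(W^{k+1})^{T}+\sum_n\textup{refold}(Y^{k+1}_{n})$ plus an error term proportional to $\mathcal{X}^{k+1}-\mathcal{X}^{k}$. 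Along the subsequence $k_j$, the terms $\rho^{k}(\mathcal{G}^{k+1}_{(n)}-\mathcal{G}^{k}_{(n)})$ and $(\mathcal{X}^{k+1}-\mathcal{X}^{k})\times_1(U^{k+1})^{T}\times_2(V^{k+1})^{T}\times_3(W^{k+1})^{T}$ vanish because $\mathscr{Z}^{k+1}-\mathscr{Z}^{k}\rightarrow 0$ and $\rho^{k}$ is bounded. Passing to the limit and using the closedness of the subdifferential of the trace norm (the graph of $\partial\|\cdot\|_{*}$ is closed, so $G^{k_j+1}_{n}\rightarrow G^{*}_{n}$ together with the enclosed subgradients converging gives $0\in\partial\|G^{*}_{n}\|_{*}/(3\lambda)-Y^{*}_{n}$), and using continuity of the mode products for the second identity, yields exactly~(\ref{equ26}). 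Collecting primal feasibility, dual feasibility, and the complementary-slackness/stationarity already packaged into these relations shows $\mathscr{Z}^{*}$ is a KKT point of~(\ref{equ10}).

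The main obstacle I anticipate is the technical handling of the penalty parameter $\rho^{k}$: one must argue that $\rho^{k}$ stays bounded (both above and below) so that the error terms genuinely vanish and the subdifferential limit is clean. The update rule~(\ref{equ27}) only guarantees this if the ratio of primal to dual residuals does not oscillate forever outside $[1/10,10]$; in practice one either assumes (as is standard, following~\cite{boyd:admm}) that $\rho^{k}$ is eventually constant, or one notes that the summability conclusions of Lemma~\ref{lemm1} already force $r^{k}\rightarrow 0$ and $s^{k}\rightarrow 0$ at comparable rates, which after finitely many adjustments pins $\rho^{k}$ to a fixed value $\rho^{\infty}>0$. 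A secondary subtlety is the use of the closed-graph property of $\partial\|\cdot\|_{*}$ rather than mere continuity, since the trace norm is nonsmooth; this is routine but should be stated explicitly.
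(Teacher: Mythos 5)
Your proposal follows essentially the same route as the paper's proof in Appendix D: invoke Lemma~\ref{lemm1} for boundedness and vanishing successive differences, extract a convergent subsequence by the Bolzano--Weierstrass theorem, and pass to the limit in the per-iteration optimality conditions to recover (\ref{equ25}) and (\ref{equ26}). You are in fact slightly more careful than the paper on two points it leaves implicit --- the closed-graph property of $\partial\|\cdot\|_{*}$ needed to take the subdifferential limit, and the requirement that $\rho^{k}$ be bounded away from zero so that $\mathcal{G}^{k+1}_{(n)}-G^{k+1}_{n}\rightarrow 0$ genuinely follows from $Y^{k+1}_{n}-Y^{k}_{n}\rightarrow 0$.
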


The proof of Theorem~\ref{theo4} is given in APPENDIX D. Moreover, the convergence of Algorithm 2 can also be guaranteed. We first give the following lemma~\cite{bertsekas:np}.

\begin{lemma}\label{lemm2}
Let $F:\mathbb{R}^{m\times n}\rightarrow \mathbb{R}$ be a continuously differentiable function with Lipschitz continuous gradient and Lipschitz constant $L(F)$. Then, for any $\tau\geq L(F)$,
\vspace{-1mm}
\begin{displaymath}
\begin{split}
F(X)\leq F(Y)+\langle\nabla F(X),\;&{X}-{Y}\rangle+\frac{\tau}{2}\|{X}-{Y}\|^{2}_{F},\\
\forall\, X,\,Y\in& \mathbb{R}^{m\times n}.
\end{split}
\end{displaymath}
\end{lemma}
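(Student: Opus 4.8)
The plan is to derive this classical ``descent lemma'' by reducing to a one-dimensional integration along the line segment joining $Y$ and $X$; since $F$ is defined on all of $\mathbb{R}^{m\times n}$, which we regard as a Euclidean space of dimension $mn$ equipped with the Frobenius inner product, that segment stays in the domain. First, for fixed $X,Y\in\mathbb{R}^{m\times n}$ I would introduce the scalar function $\phi(t)=F(Y+t(X-Y))$, $t\in[0,1]$, which is continuously differentiable because $F$ is, with $\phi'(t)=\langle\nabla F(Y+t(X-Y)),\,X-Y\rangle$. The fundamental theorem of calculus then gives
\begin{displaymath}
F(X)-F(Y)=\phi(1)-\phi(0)=\int_{0}^{1}\langle\nabla F(Y+t(X-Y)),\,X-Y\rangle\,dt.
\end{displaymath}

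Next I would subtract $\langle\nabla F(X),\,X-Y\rangle=\int_{0}^{1}\langle\nabla F(X),\,X-Y\rangle\,dt$ from both sides, obtaining
\begin{displaymath}
F(X)-F(Y)-\langle\nabla F(X),\,X-Y\rangle=\int_{0}^{1}\langle\nabla F(Y+t(X-Y))-\nabla F(X),\,X-Y\rangle\,dt,
\end{displaymath}
and then bound the integrand using Cauchy--Schwarz together with the Lipschitz continuity of $\nabla F$. Since $Y+t(X-Y)-X=(t-1)(X-Y)$, we have $\|\nabla F(Y+t(X-Y))-\nabla F(X)\|_{F}\leq L(F)(1-t)\|X-Y\|_{F}$ for $t\in[0,1]$, so the integrand is at most $L(F)(1-t)\|X-Y\|_{F}^{2}$. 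Integrating and using $\int_{0}^{1}(1-t)\,dt=1/2$ yields $F(X)-F(Y)-\langle\nabla F(X),\,X-Y\rangle\leq\frac{L(F)}{2}\|X-Y\|_{F}^{2}$. Finally, because $\|X-Y\|_{F}^{2}\geq0$ and $\tau\geq L(F)$, the right-hand side is no larger than $\frac{\tau}{2}\|X-Y\|_{F}^{2}$, which is exactly the claimed inequality; as $X,Y$ were arbitrary, it holds for all $X,Y\in\mathbb{R}^{m\times n}$.

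There is no real obstacle here---the result is standard (see, e.g., \cite{bertsekas:np}) and the argument is elementary. The only point that needs a little attention is the placement of the gradient: because the inequality is written with $\nabla F(X)$ rather than the more familiar $\nabla F(Y)$, one must compare $\nabla F$ along the segment to its value at the endpoint $X$, which produces the factor $(1-t)$ in the Lipschitz bound instead of $t$; the integral $\int_{0}^{1}(1-t)\,dt$ still equals $1/2$, so the constant is unchanged. Equivalently, one could first establish the two-sided estimate $|F(X)-F(Y)-\langle\nabla F(Y),\,X-Y\rangle|\leq\frac{L(F)}{2}\|X-Y\|_{F}^{2}$ and then interchange $X$ and $Y$ (the absolute value is invariant under this swap), but the direct integral computation above is the cleanest self-contained route.
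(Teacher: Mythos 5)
Your proof is correct. Note that the paper does not actually prove this lemma---it simply cites it from Bertsekas's \emph{Nonlinear Programming}---so there is no in-paper argument to compare against; your integration along the segment $Y+t(X-Y)$ is the standard self-contained derivation, and you correctly handle the one subtlety in the paper's statement, namely that the gradient is evaluated at $X$ rather than $Y$, which changes the Lipschitz bound on the integrand from $L(F)\,t\,\|X-Y\|_{F}^{2}$ to $L(F)(1-t)\|X-Y\|_{F}^{2}$ but leaves the constant $\tfrac{1}{2}$ unchanged after integration.
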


\begin{theorem}\label{theo5}
Let $\{\mathscr{Z}^{k}\}\!=\!\{(\!\{G^{k}_{n}\},\mathcal{G}^{k}\!,U^{k}\!,V^{k}\!,W^{k}\!,\mathcal{X}^{k}\!,\{Y^{k}_{n}\}\!)\}$ be a sequence generated by Algorithm 2. If the sequences $\{Y^{k}_{n}\}$ $(n\!=\!1,2,3)$ are all bounded, and satisfy $\sum^{\infty}_{k=0}\!\|Y^{k+1}_{n}\!-\!Y^{k}_{n}\|\!<\!\infty$, then any accumulation point of $\{(\{G^{k}_{n}\},\mathcal{G}^{k}\!,U^{k}\!,V^{k}\!,W^{k}\!,\mathcal{X}^{k})\}$ satisfies the KKT conditions for (\ref{equ9}).
\end{theorem}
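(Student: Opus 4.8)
The plan is to mirror the proof of Theorem~\ref{theo4} almost verbatim, with the only genuinely new ingredient being the handling of the linearized (prox-gradient) update for $U$, $V$, $W$ in Algorithm~2 via Lemma~\ref{lemm2}. First I would establish the analogue of Lemma~\ref{lemm1} for Algorithm~2: that under the stated hypotheses (boundedness of $\{Y^k_n\}$ and summability $\sum_k\|Y^{k+1}_n-Y^k_n\|<\infty$) we have $\mathscr{Z}^{k+1}-\mathscr{Z}^k\to 0$ and $\{\mathscr{Z}^k\}$ is bounded. The chain of sufficient-decrease inequalities is identical in structure: the $\mathcal{X}$-update still satisfies (\ref{equ34}) (it is unchanged, cf.\ (\ref{equ22})), the $G_n$-updates still satisfy (\ref{equ33}) by strong convexity of $\mathcal{L}_\rho$ in each $G_n$, and the penalty-function telescoping argument producing the $\theta_k\sum_n\|Y^k_n-Y^{k-1}_n\|^2_F$ correction term goes through verbatim. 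The boundedness of $\mathcal{L}_{\rho^k}(\mathscr{Z}^{k+1})$ follows exactly as before.

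The one place that differs is the block descent for $(\{U,V,W\},\mathcal{G})$, i.e.\ the analogue of (\ref{equ35}). In Algorithm~1 this block is solved by the orthogonal-procrustes steps (\ref{equ18})--(\ref{equ20}), which are exact maximizers of $g$, so the value cannot increase. In Algorithm~2, $U^{k+1}$ is instead the maximizer of the \emph{linearized surrogate} (\ref{equ29})--(\ref{equ30}). Here I would invoke Lemma~\ref{lemm2} with $F(\cdot)=\mathcal{F}(\cdot,V^k,W^k)$ and $\tau^k\geq L(\nabla\mathcal{F}(\cdot,V^k,W^k))$: since $U^{k+1}$ minimizes the quadratic majorant $\mathcal{F}(U^k,V^k,W^k)+\langle\nabla\mathcal{F}(U^k,V^k,W^k),U-U^k\rangle+\tfrac{\tau^k}{2}\|U-U^k\|^2_F$ over the Stiefel manifold, and $U^k$ is feasible, we get
\begin{displaymath}
\mathcal{F}(U^{k+1},V^k,W^k)\leq \mathcal{F}(U^k,V^k,W^k)-\frac{\tau^k - L^k}{2}\|U^{k+1}-U^k\|^2_F \leq \mathcal{F}(U^k,V^k,W^k),
\end{displaymath}
and in fact a strict decrease of order $\|U^{k+1}-U^k\|^2_F$ when $\tau^k>L^k$. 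Chaining the analogous inequalities for the $V$- and $W$-updates and for the closed-form $\mathcal{G}$-update gives the required monotone decrease of $\mathcal{L}_{\rho^k}$ across this block, now with an extra nonnegative $\sum_n$-type term $\tfrac{\tau^k-L^k}{2}(\|U^{k+1}-U^k\|^2_F+\cdots)$. Summing the master inequality from $k=1$ to $\infty$ and using boundedness of $\{\mathcal{L}_{\rho^k}(\mathscr{Z}^k)\}$ together with $\sum_k\|Y^{k+1}_n-Y^k_n\|^2_F<\infty$ then yields $\|U^{k+1}-U^k\|_F\to0$ (similarly for $V,W$), $\|\mathcal{X}^{k+1}-\mathcal{X}^k\|_F\to0$, and $\|G^{k+1}_n-G^k_n\|_F\to0$; the $\mathcal{G}$-step (\ref{equ20}) with graph term gives $\mathcal{G}^{k+1}-\mathcal{G}^k\to0$, hence $\mathscr{Z}^{k+1}-\mathscr{Z}^k\to0$, and boundedness of $\{\mathscr{Z}^k\}$ follows as in Lemma~\ref{lemm1}.

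Having the analogue of Lemma~\ref{lemm1}, I would finish as in APPENDIX~D: by boundedness there is a convergent subsequence $\mathscr{Z}^{k_j}\to\mathscr{Z}^*=(\{G^*_n\},\mathcal{G}^*,U^*,V^*,W^*,\mathcal{X}^*,\{Y^*_n\})$. Passing to the limit in $Y^{k+1}_n=Y^k_n+\rho^k(\mathcal{G}^{k+1}_{(n)}-G^{k+1}_n)$ together with $Y^{k+1}_n-Y^k_n\to0$ gives the primal feasibility $\mathcal{G}^*_{(n)}=G^*_n$; the orthogonality constraints hold in the limit since $\textup{ORT}(\cdot)$ always returns a matrix with orthonormal columns, and $\mathcal{X}^*_\Omega=\mathcal{T}_\Omega$ holds because (\ref{equ22}) enforces it at every step. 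For dual feasibility, I would take limits in the first-order optimality condition of the $G_n$-subproblem (giving $0\in\partial\|G^*_n\|_*/(3\lambda)-Y^*_n$, using $\mathcal{G}^{k+1}_{(n)}-\mathcal{G}^k_{(n)}\to0$) and in the optimality condition of the linearized $U,V,W,\mathcal{G}$-subproblem; here the subtlety is that one must check the linearization error vanishes, which follows from $U^{k+1}-U^k\to0$ (etc.) and Lipschitz continuity of $\nabla\mathcal{F}$, so that the limiting condition is precisely the stationarity of the exact objective (\ref{equ28})/(\ref{equ9}), i.e.\ the KKT system for (\ref{equ9}) including the graph-Laplacian gradient terms $\mu L_n U^*$, $\mu L_n V^*$, $\mu L_n W^*$. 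The main obstacle is exactly this last point: verifying that replacing the exact procrustes step by the prox-linearized step (\ref{equ30}) still yields, in the limit, a stationary point of the \emph{original} nonlinearized problem (\ref{equ9}) rather than of the surrogate — i.e.\ controlling the linearization residual uniformly — and making sure the boundedness/decrease constants $\tau^k-L^k$ stay bounded away from degeneracy so the telescoping sum actually forces the successive differences to zero.
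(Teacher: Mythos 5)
Your proposal is correct and follows essentially the same route as the paper's (very terse) proof: invoke Lemma~\ref{lemm2} to obtain the sufficient-decrease inequality for the linearized $(U,V,W,\mathcal{G})$ block in place of (\ref{equ35}), then reuse the argument of Lemma~\ref{lemm1} and finish exactly as in the proof of Theorem~\ref{theo4}. Your write-up is in fact more careful than the paper's, which does not address the two subtleties you flag at the end (that the linearization residual must vanish in the limit so stationarity holds for the original objective (\ref{equ9}) rather than the surrogate, and that the gap between $\tau^{k}$ and the true Lipschitz constant must stay nondegenerate for the telescoping sum to force successive differences to zero).
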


\begin{proof}
By Lemma~\ref{lemm2}, we have
\vspace{-1mm}
\begin{displaymath}
\begin{split}
&\mathcal{L}_{\rho^{k}}(\{G^{k+\!1}_{n}\},\mathcal{G}^{k}\!,U^{k}\!,V^{k}\!,W^{k}\!,\mathcal{X}^{k}\!,\{Y^{k}_{n}\})-\mathcal{L}_{\rho^{k}}(\{G^{k+\!1}_{n}\},\\
&\quad\quad \mathcal{G}^{k+\!1}\!,U^{k+\!1}\!,V^{k+\!1}\!,W^{k+\!1}\!,\mathcal{X}^{k}\!,\{Y^{k}_{n}\})\geq 0.
\end{split}
\end{displaymath}
According to Lemma~\ref{lemm1}, we have $\mathscr{Z}^{k+1}\!-\!\mathscr{Z}^{k}\!\rightarrow\! 0$, and $\{\mathscr{Z}^{k}\}$ is bounded. Moreover, by using the same proof procedure as for Theorem~\ref{theo4}, we can obtain the conclusion.
\end{proof}

\subsection{Complexity Analysis}
We discuss the time complexity of our ROID and GROID methods. For solving both (\ref{equ6}) and (\ref{equ9}), the main running time of our algorithms is taken for performing SVDs and some multiplications. The time complexity of performing SVDs in (\ref{equ13}), (\ref{equ18}) and (\ref{equ19}) is $O(\sum_{n}\!{d^{2}_{n}}{\Pi_{j\neq{n}}}{d_{j}}\!+\!\sum_{n}\!{d^{2}_{n}}{I_{n}})$. The time complexity of some multiplication operators in (\ref{equ18}), (\ref{equ19}) and (\ref{equ22}) is ${O}((2d_{1}\!+\!d_{2}\!+\!d_{3})\Pi_{j}I_{j}\!+\!\sum_{n}\!I_{n}\Pi_{j}d_{j})$. Thus, the total time complexity of both ROID and GROID is ${O}((2d_{1}\!+\!d_{2}\!+\!d_{3})\Pi_{j}I_{j})$ $(d_{n}\!\ll\! I_{n})$. Our algorithms are essentially the Gauss-Seidel-type schemes of ADMM, and the update strategy of the Jacobi version as in~\cite{shang:hotd} can be easily implemented, and well suited for parallel computing.

\section{EXPERIMENTAL RESULTS}
In this section, we evaluate the effectiveness and efficiency of our ROID method for low multi-linear rank tensor completion on synthetic data and multi-relational learning on real-world data such as a network data set and three popular multi-relational data sets. Except for large-scale multi-relational prediction, all the other experiments were performed on an Intel(R) Core (TM) i5-4570 (3.20 GHz) PC running Windows 7 with 8GB main memory.

\subsection{Results on Synthetic Data}
Following~\cite{liu:tcem}, we generated low multi-linear rank third-order tensors $\mathcal{T}\!\in\!\mathbb{R}^{I_{1}\times I_{2}\times I_{3}}$, which we used as the ground truth data. The generated tensor data follows the Tucker model, i.e., $\mathcal{T}\!=\!\mathcal{C}\!\times_{1}\!U_{1}\!\times_{2}\!U_{2}\!\times_{3}\!U_{3}$, where $\mathcal{C}\!\in\! \mathbb{R}^{r\times r\times r}$ is the core tensor whose entries are generated as independent and identically distributed (i.i.d.) numbers from a uniform distribution in [0, 1], and the entries of $U_{n}\!\in\!\mathbb{R}^{I_{n}\times r}$ are random samples drawn from a uniform
distribution in the range [-0.5, 0.5]. With this construction, the multi-linear rank of third-order tensors $\mathcal{T}$ equals $(r,r,r)$ almost surely.

\subsubsection{Algorithm Settings}
We compare our ROID method with the following state-of-the-art tensor estimation algorithms:
\begin{enumerate}
\item WTucker{\footnote{\url{http://www.lair.irb.hr/ikopriva/marko-filipovi.html}}}~\cite{filipovic:tftc}: In the implementation of WTucker, we set ${R}_{1}\!=\!{R}_{2}\!=\!{R}_{3}\!=\!\lfloor1.25r\rfloor$ for solving the weighted Tucker (WTucker) decomposition problem (\ref{equ2}).
\item WCP{\footnote{\url{http://www.sandia.gov/~tgkolda/TensorToolbox/}}}~\cite{acar:stf}: We set the tensor rank $R\!=\!40$ to solve the weighted CP (WCP) decomposition problem (\ref{equ1}), and the maximal number of iterations, $\textup{maxiter}\!=\!100$, for WCP and WTucker, both of which are solved by nonlinear conjugate gradient methods.
\item HaLRTC{\footnote{\url{http://pages.cs.wisc.edu/~ji-liu/}}}~\cite{liu:tcem}: The value of the weights $\alpha_{n}$ is set to be $1/3,\,n\!=\!1,2,3$ for solving (\ref{equ3}) by using the highly accurate LRTC (HaLRTC) algorithm. Other parameters of HaLRTC are set to their default values.
\item Latent{\footnote{\url{http://ttic.uchicago.edu/~ryotat/softwares/tensor/}}}~\cite{tomioka:ctd}: The regularization parameter $\lambda$ is set to $10^{6}$ for solving the latent trace norm minimization (Latent) problem (\ref{equ4}). Moreover, we set the tolerance value $\textup{tol}=10^{-5}$ and $\textup{maxiter}=500$ for HaLRTC, Latent and ROID.
\end{enumerate}

We also apply Algorithm~\ref{alg1} to solve the SHOOI problem (\ref{equ8}). Note that HaLRTC, Latent, SHOOI and ROID all apply the ADMM algorithm to solve their problems. For our ROID method, we set the regularization parameter $\lambda\!=\!10^{2}$ and ${d}_{1}\!=\!{d}_{2}\!=\!{d}_{3}\!=\!\lfloor1.5r\rfloor$. The relative square error (RSE) of the recovered tensor $\mathcal{X}$ for all these algorithms is defined by $\textup{RSE}\!:=\!\|\mathcal{X}-\mathcal{T}\|_{F}/\|\mathcal{T}\|_{F}$.

\subsubsection{Numerical Results on Sparse Tensors}
To evaluate the robustness of our ROID method with respect to multi-linear rank parameter changes, we first conduct some experiments on synthetic tensors of size $100\!\times\!100\!\times\!100$ or $200\!\times\!200\!\times\!200$, and illustrate the RSE results of all these tensor decomposition methods with 10\% sampling ratio, where the rank parameter of ROID, SHOOI, WTucker and WCP is chosen from $\{10,15,\ldots,40\}$. The average RSE results of 10 independent runs are shown in Fig.\ \ref{fig2}, from which we can see that when the number of the given rank increases, the RSE of all these tensor decomposition methods (except WCP) gradually increase, especially for SHOOI. More specifically, SHOOI gives extremely accurate solutions for exact multi-linear rank tensor completion problems. However, as the number of the given rank increases, the RSE of SHOOI increases dramatically. In contrast, ROID under all these settings consistently outperforms WTucker and WCP in terms of RSE, and performs \emph{more robust} than SHOOI. This confirms that our ROID model with core tensor trace norm regularization is reasonable, and can provide a good estimation of the observed tensor even though from only a few observations. Note that the RSE of both convex algorithms, HaLRTC and Latent, on tensors of size $100\!\times\!100\!\times\!100$ are 0.5796 and 0.3375, respectively.

\begin{figure}[t]
\centering
\subfigure[$100\!\times\!100\!\times\!100$]{\includegraphics[width=0.495\linewidth]{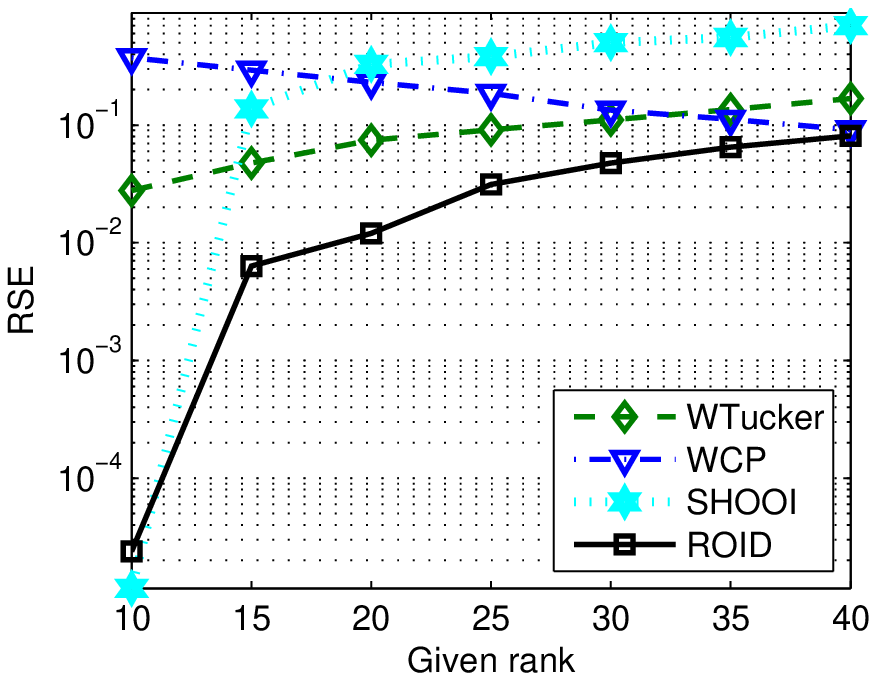}}
\subfigure[$200\!\times\!200\!\times\!200$]{\includegraphics[width=0.495\linewidth]{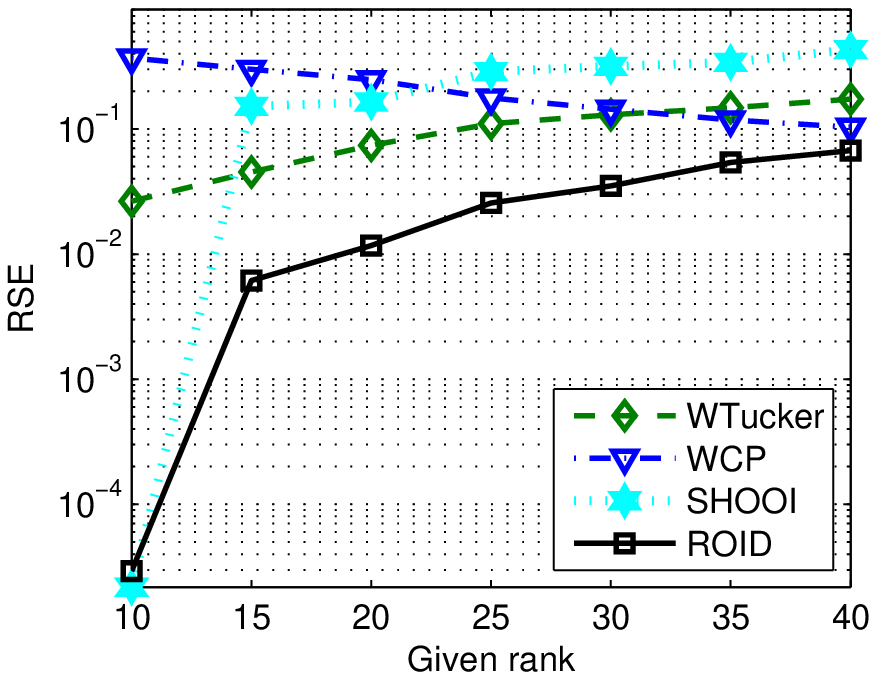}}

\vspace{-3mm}
\caption{Comparison results of WTucker, WCP, SHOOI, and ROID in terms of RSE on third-order tensors with multi-linear rank $(10,10,10)$ by varying the given rank.}
\label{fig2}
\end{figure}

We also report the recovery results of WTucker, WCP, HaLRTC, Latent and our ROID method with different fractions of observed entries and tensor multi-linear ranks on synthetic tensors of size $200\!\times\!200\!\times\!200$ in Fig.\ \ref{fig3}, where the sampling ratio varies from 5\% to 20\% with increment 2.5\%, and the multi-linear ranks $r_{n},\,n=1,2,3$ are chosen from 10 to 40 with increment 5. We can observe that in all these settings, our ROID method consistently outperforms the other methods in terms of RSE. HaLRTC is able to yield very accurate solutions using adequate large sampling ratio (e.g.\ 0.2); however, when the fraction of observed entries is low (e.g.\ 0.05), or the underlying tensor multi-linear ranks are high (e.g.\ 40), the performance of HaLRTC (and also the other convex tensor trace norm minimization method, Latent) is poor. The main reason is that WTucker and our ROID method are all multiple structured methods similar to the matrix case~\cite{haeffele:lrmf}, and need only $O(d^{N}\!+dNI)$ observations to exactly recover an \emph{N}th-order low multi-linear rank tensor $\mathcal{X}$ with high probability, while $O(rI^{N-1})$ observations are required for recovering the true tensor by both convex tensor trace norm minimization methods, HaLRTC and Latent, as stated in~\cite{mu:sd, liu:tld, tomioka:spctd}.

\begin{figure}[t]
\centering
\subfigure[$(r,r,r)=(10,10,10)$]{\includegraphics[width=0.495\linewidth]{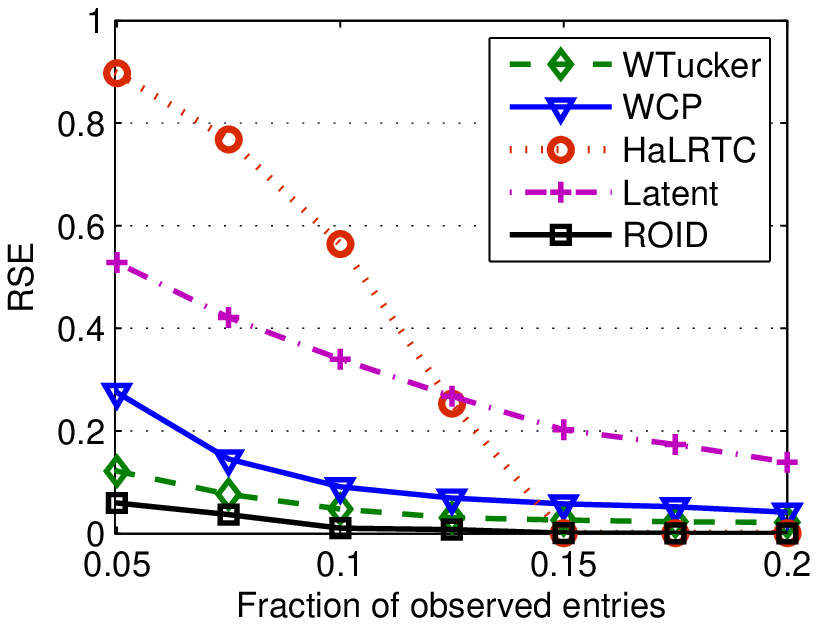}}
\subfigure[10\% sampling ratio]{\includegraphics[width=0.495\linewidth]{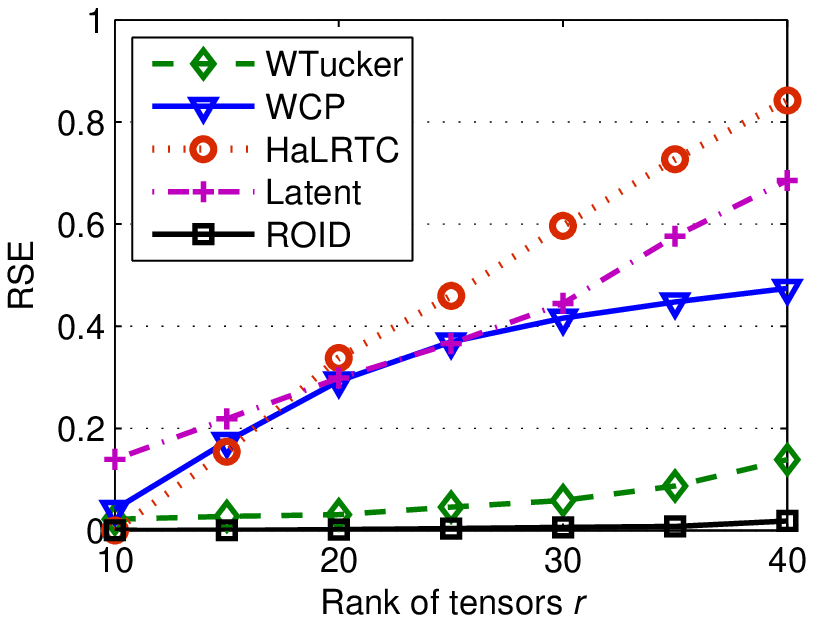}}

\vspace{-3mm}
\caption{Comparison results of WTucker, WCP, HaLRTC, Latent and our ROID method in terms of RSE on third-order tensors by varying the fraction of observed entries (a) or multi-linear rank, $(r,r,r)$ (b).}
\label{fig3}
\end{figure}

Moreover, we conduct some experiments to evaluate the robustness of our ROID method with respect to the regularization parameter $\lambda$, and report the results of Latent and our ROID method on synthetic tensors of size $100\!\times\!100\!\times\!100$ or $200\!\times\!200\!\times\!200$ in Fig.\ \ref{fig41}, where $\lambda$ is tuned from the grid $\{10^{1}, 10^{2}, \ldots, 10^{6}\}$ and the sampling ratio is set to 10\%. Note that the solid and dashed lines denote the results of ROID with ${d}_{1}\!=\!{d}_{2}\!=\!{d}_{3}\!=\!r$ and ${d}_{1}\!=\!{d}_{2}\!=\!{d}_{3}\!=\!\lfloor1.5r\rfloor$, respectively. It is clear that as $\lambda$ increases, both Latent and our ROID method with exact multi-linear rank give much better performance for tensor completion problems. As suggested in~\cite{tomioka:ctd}, setting $\lambda\rightarrow\infty$ gives more accurate solutions for the noiseless problem. In practical applications, this parameter of ROID is set to $\lambda=100$ for the following noisy problems. Moreover, our ROID method under all settings significantly outperforms Latent in terms of RSE.

Finally, we present the running time of our ROID method and the other methods with varying sizes of third-order tensors, as shown in Fig.\ \ref{fig42}, from which we can see that the running time of WTcuker, WCP, Latent and HaLRTC dramatically grows with the increase of tensor size whereas the running time of our SHOOI and ROID methods only increase slightly. In addition, WTcuker, WCP, Latent and HaLRTC could not yield experimental results on the two largest synthetic tensor completion problems with sizes of 800 and 1000, because they ran out of memory. Our ROID method is more than 10 times faster than WTcuker and WCP, more than 25 times faster than HaLRTC, and more than 150 times faster than Latent when the size of input tensors is $600\times600\times600$. This shows that our ROID method has good \emph{scalability} and can address large-scale problems. Notice that because Latent converges too slowly, we do not consider it in the following experiments. Moreover, Table 1 summarizes the time complexities of major computations in the two related weighted tensor decomposition algorithms and the two convex trace norm minimization algorithms, where the assumed sizes of the tensor and the core tensor are $I\times I\times I$ and $d\times d\times d$, respectively. From Table~\ref{tab1}, we can see that although WTucker and WCP have the computational complexity similar to our ROID method, they are much slower in practice than ROID due to their Polak-Ribiere nonlinear conjugate gradient algorithms with a time-consuming line search scheme~\cite{moller:cg}.

\begin{table}[t]
\centering
\caption{Complexities per iteration of major computations in low multi-linear rank tensor recovery algorithms.}
\label{tab1}
\vspace{-2mm}

\begin{tabular}{lc}
\hline
{Algorithms} &  {Complexity}\\
\hline
WCP \cite{acar:stf}  & $O(8dI^{3})$\\
WTucker \cite{filipovic:tftc}	& $O(8dI^{3})$\\
{HaLRTC~\cite{liu:tcem}, Latent~\cite{tomioka:ctd}} & {$O(3I^{4})$}\\
ROID and SHOOI   &$O(4dI^{3})$\\
\hline
\end{tabular}
\end{table}

\begin{figure}[!t]
\centering
\subfigure[RSE vs. $\lambda$]{\includegraphics[width=0.492\linewidth]{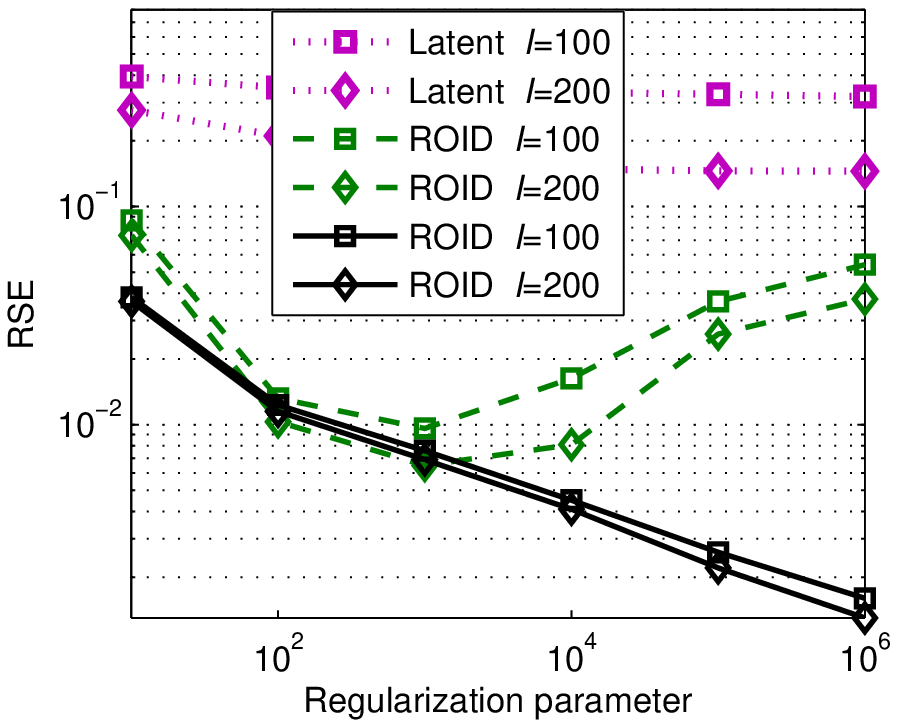}\label{fig41}}\,
\subfigure[Running time vs. size]{\includegraphics[width=0.492\linewidth]{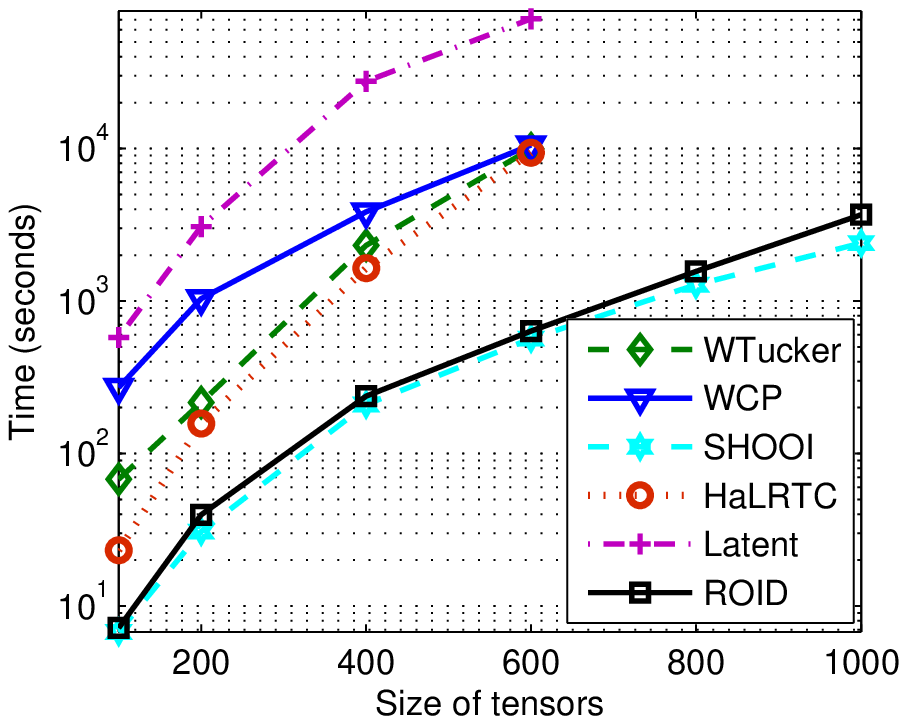}\label{fig42}}

\vspace{-3mm}
\caption{Comparison of WTucker, WCP, FaLRTC, Latent and our ROID method in terms of RSE and computational time (in seconds and in logarithmic scale) on third-order tensors by varying regularization parameter $\lambda$ (a) or tensor size (b).}
\label{fig4}
\end{figure}

\subsubsection{Numerical Results on Full Tensors}
To further evaluate the performances of our method for full tensor decomposition, we compare our ROID method with the low multi-linear rank approximation (LMLRA) method~\cite{tucker:tmfa, sorber:tensor} and HOOI~\cite{athauwer:hooi, sorber:tensor} on noisy tensors, i.e., $\mathcal{T}\!=\!\mathcal{C}\!\times_{1}\!U_{1}\!\times_{2}\!U_{2}\!\times_{3}\!U_{3}\!+\!nf\!*\!\mathcal{E}$, where $nf$ denotes the noise factor and $\mathcal{E}$ denotes the standard Gaussian random noise. Fig.\ \ref{fig51} illustrates the RSE results of LMLRA, HOOI and ROID on $200\!\times\!200\!\times\!200$ noisy tensors with different noise factors. We can observe that ROID performs more robust and stable against noise than the other methods. Moreover, we also report the running time on tensors of different sizes in Fig.\ \ref{fig52}, from which we can see that our ROID method is more than 10 times faster than the other methods. In addition, LMLRA and HOOI could not generate experimental results on the largest problem with size of $1000\!\times\!1000\!\times\!1000$, because they ran out of memory.

\begin{figure}[!t]
\centering
\subfigure[RSE vs. noise factor]{\includegraphics[width=0.492\linewidth]{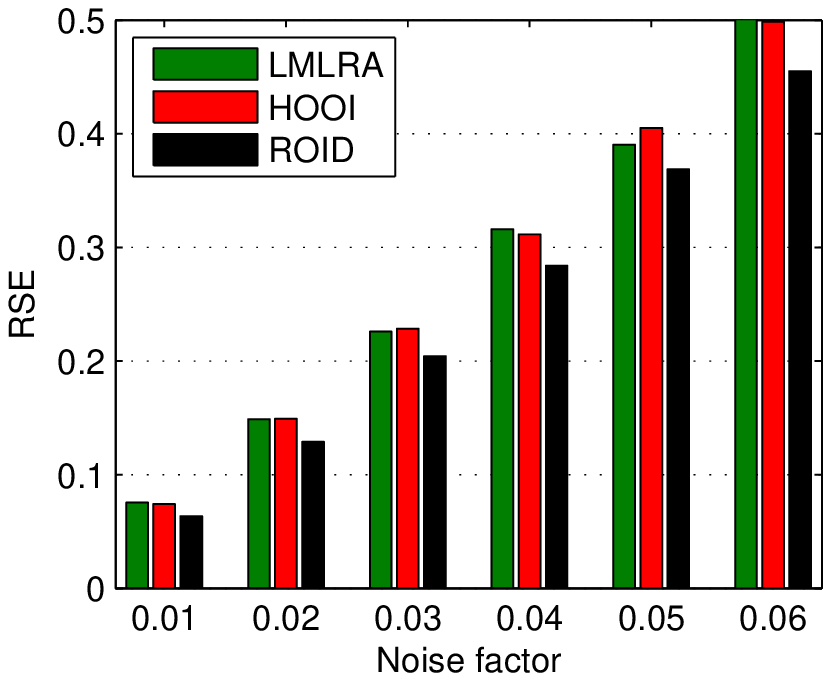}\label{fig51}}\,
\subfigure[Running time vs. size]{\includegraphics[width=0.492\linewidth]{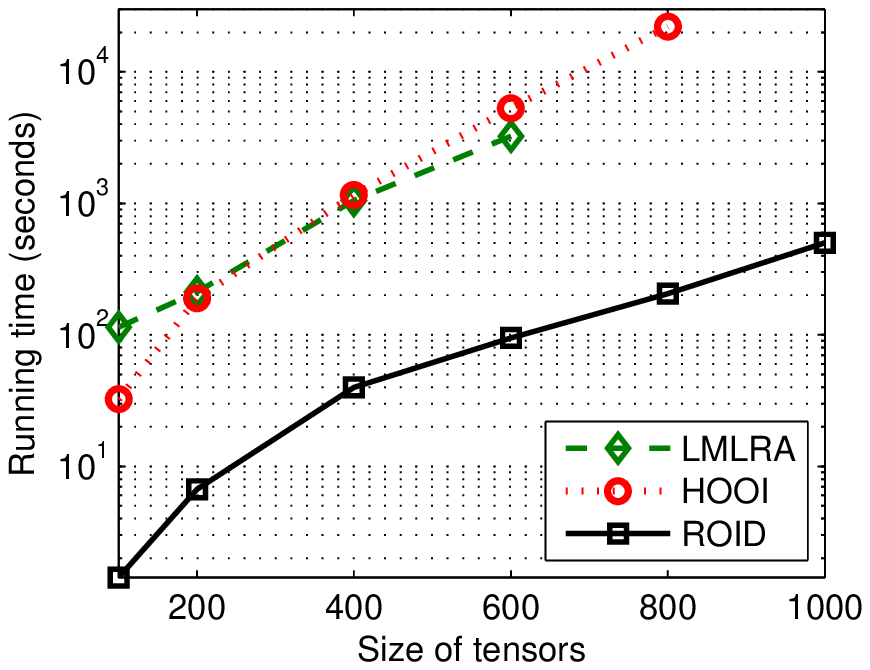}\label{fig52}}

\vspace{-3mm}
\caption{Comparison of LMLRA, HOOI and our ROID method in terms of RSE and running time (in seconds and in logarithmic scale) on third-order tensors by varying noise factor (a) or tensor size (b).}
\label{fig5}
\end{figure}

\subsection{Results on Network Data}
In this part, we examine our ROID and graph regularized (called GROID) methods on real-world network data sets, such as the YouTube data set{\footnote{\url{http://leitang.net/heterogeneous_network.html}}}~\cite{tang:hia}. YouTube is currently the most popular video sharing web site, which allows users to interact with each other in various forms such as contacts, subscriptions, sharing favorite videos, etc. In total, this data set contains 848,003 users, with 15,088 users sharing all of the information types, and includes 5-dimension of interactions: contact network, co-contact network, co-subscription network, co-subscribed network, and favorite network. Additional information about the data can be found in~\cite{tang:hia}. We run these experiments on a machine with 6-core Intel Xeon 2.4GHz CPU and 64GB memory.

\begin{figure}[!t]
\centering
\subfigure[10\% training data]{\includegraphics[width=0.492\linewidth]{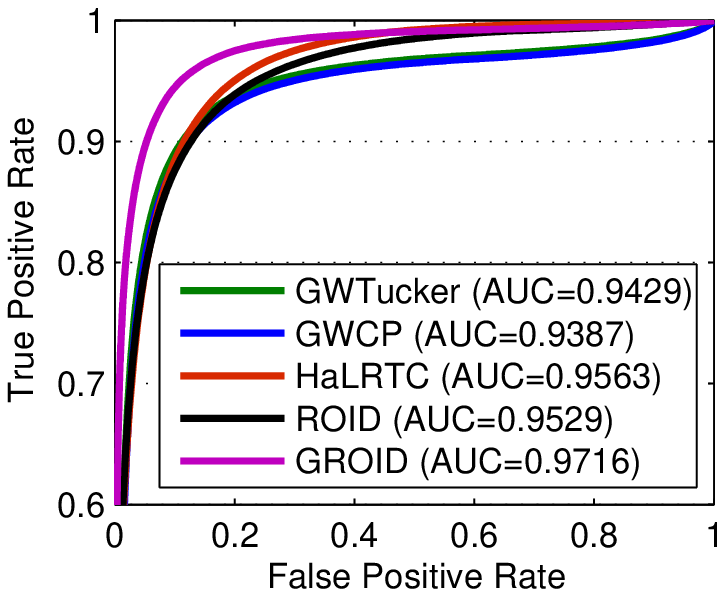}}\,
\subfigure[20\% training data]{\includegraphics[width=0.492\linewidth]{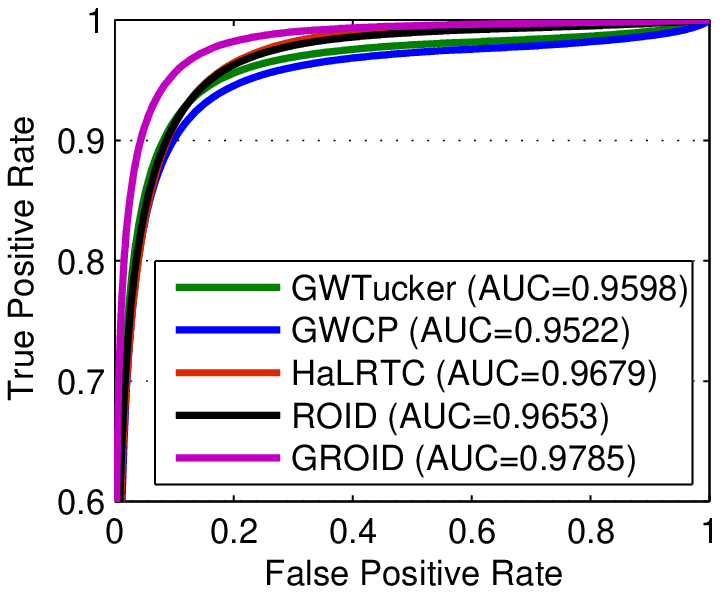}}

\vspace{-3mm}
\caption{Average ROC curves showing the performance of link prediction methods with 10\% and 20\% training data (best viewed in colors).}
\label{fig6}
\end{figure}

We address the multi-relational prediction problem as a tensor completion problem. For the graph regularized weighted CP (GWCP) decomposition~\cite{narita:tf}, graph regularized weighted Tucker (GWTucker) decomposition~\cite{narita:tf}, and our ROID and GROID methods, we set the tensor rank $R\!=\!45$ and the multi-linear rank ${d}_{1}\!=\!{d}_{2}\!=\!40$ and ${d}_{3}\!=\!5$, and the regularization parameter $\lambda=100$. For HaLRTC~\cite{liu:tcem} and our ROID and GROID methods, the weights $\alpha_{n}$ are set to $\alpha_{1}\!=\!\alpha_{2}\!=\!0.4998$ and $\alpha_{3}\!=\!0.0004$. The tolerance value of all these methods is fixed at $\textup{tol}=10^{-4}$.

As the other methods could not yield the experimental results on the whole YouTube data set, we first chose the users who have more than 10 interactions as a subset, which consists of 4,117 users and five types of interactions, i.e., $4,117\!\times\!4,117\!\times\!5$. We randomly select 10\% or 20\% entries as the training set, and the remainder as the testing data. We report the average prediction accuracy (the score Area Under the receiver operating characteristic Curve, AUC) and the average running time (seconds) over 10 independent runs in Figs.\ \ref{fig6} and \ref{fig71}, where the number of users is gradually increased. Moreover, we evaluate the robustness of our ROID method with respect to given multi-linear ranks, as shown in Fig.\ \ref{fig72}, where the given ranks of GWTucker, GWCP and our ROID and GROID methods are chosen in the range $[30,35,\ldots,70]$. We can observe that the three trace norm minimization algorithms, i.e., our ROID and GROID methods and HaLRTC, significantly outperform GWTucker and GWCP in terms of prediction accuracy. Moreover, ROID and GROID run remarkably faster than the other methods, and are more robust than GWTucker and GWCP with respect to given multi-linear ranks. The running time of ROID and GROID increase slightly when the number of users increases. In contrast, the running time of the other methods increases dramatically, and they could not complete the computation within 48 hours on the two largest problem sizes of 8,000 or 15,088 users. This shows that ROID and GROID have very good \emph{scalability} and can address large-scale problems. GROID performs \emph{significantly} better than all the other methods in terms of prediction accuracy due to the use of auxiliary information.

\begin{figure}[!t]
\centering
\subfigure[]{\includegraphics [width=0.492\linewidth]{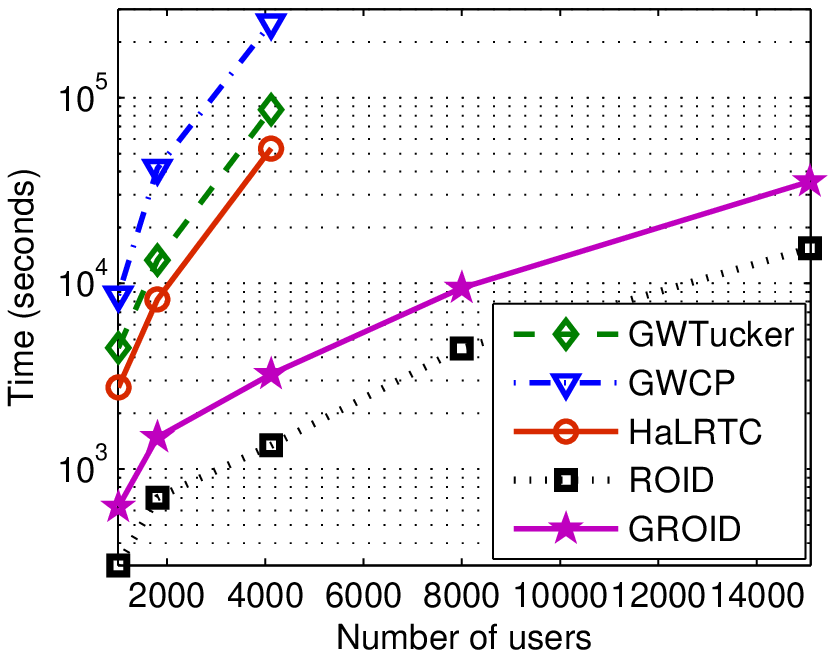}\label{fig71}}\,
\subfigure[]{\includegraphics [width=0.492\linewidth]{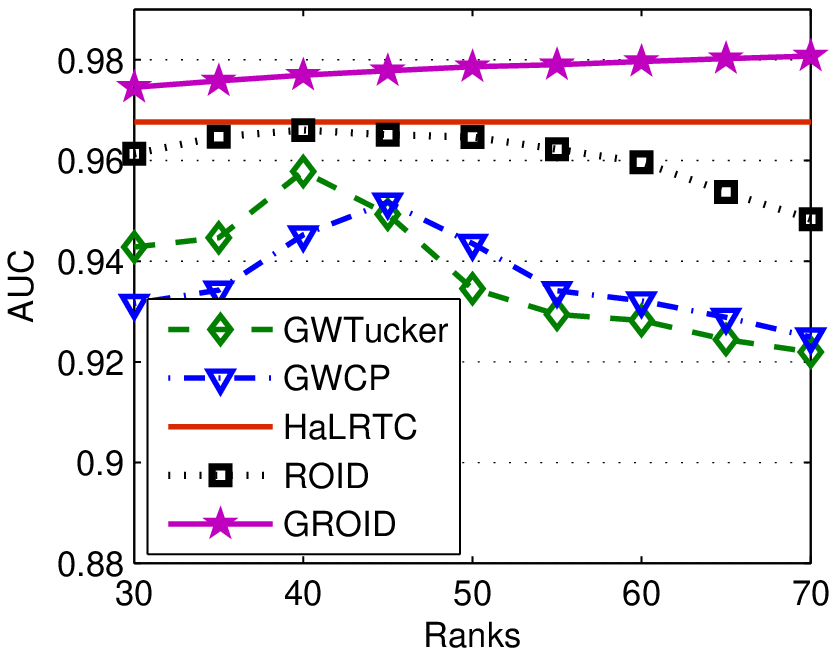}\label{fig72}}

\vspace{-3mm}
\caption{Running time (a) and prediction accuracy (b) comparison on the YouTube data set. For each dataset, we use 20\% for training. Note that the other methods could not run sizes $\{8,000, 15,088\}$ due to runtime exceptions.}
\label{fig7}
\end{figure}

\subsection{Results on Multi-Relational Data}
Finally, we examine how well our ROID method performs on the three popular multi-relational datasets, which have previously been used by Kemp \emph{et al.}~\cite{kemp:irm} for link prediction, including the Kinship, Nations and UMLS data sets. The Kinship data set consists of kinship relationships (such as ``father" or ``wife" relations) among the members of the ALyawarra tribe in Central Australia~\cite{denham:dp}. The data set contains 104 tribe members and 26 types of kinship (binary) relations, formfitting a three-order tensor of size $104\!\times\!104\!\times\!26$. The Nations data set consists of international relations among different countries in the world~\cite{rummel:dnp}. The data set contains 14 countries and 56 types of (binary) relations (such as ``Treaties" or ``Military Alliance"), and is a three-order tensor of size $14\!\times\!14\!\times\!56$. The UMLS data set is collected from the Unified Medical Language System by McCray \emph{et al.}~\cite{mcgray:ulo}. This data set includes a semantic network with 135 concepts and 49 binary predicates (such as ``affects" or ``causes"), and is a three-order tensor of size $135\!\times\!135\!\times\!49$.

\begin{figure}[t]
\centering
\subfigure[Nations]{\includegraphics[width=0.490\linewidth]{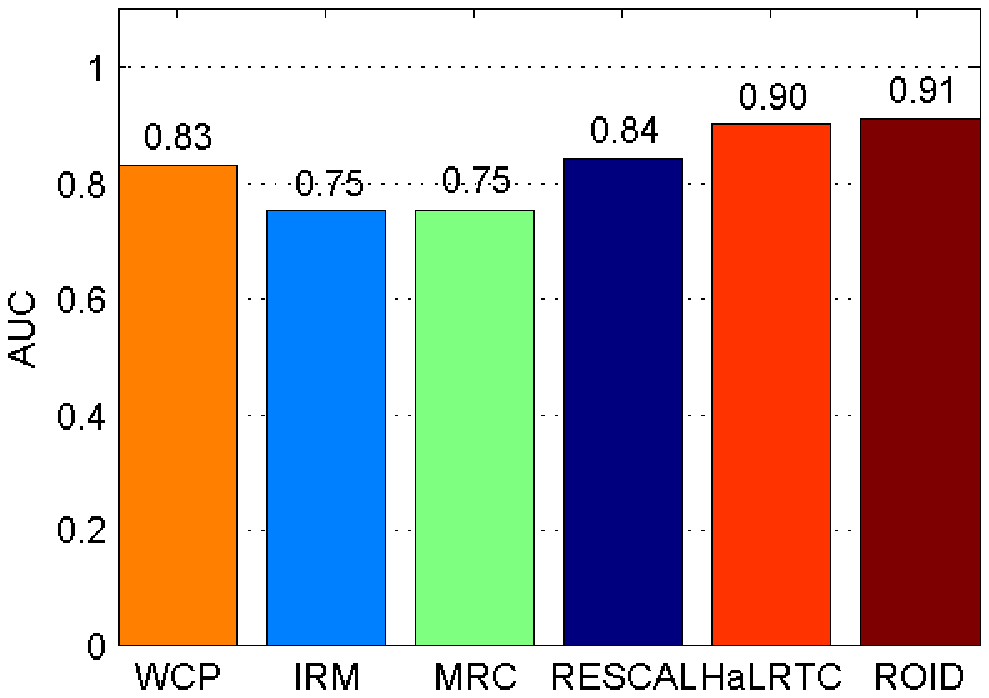}}\,
\subfigure[Kinship]{\includegraphics[width=0.490\linewidth]{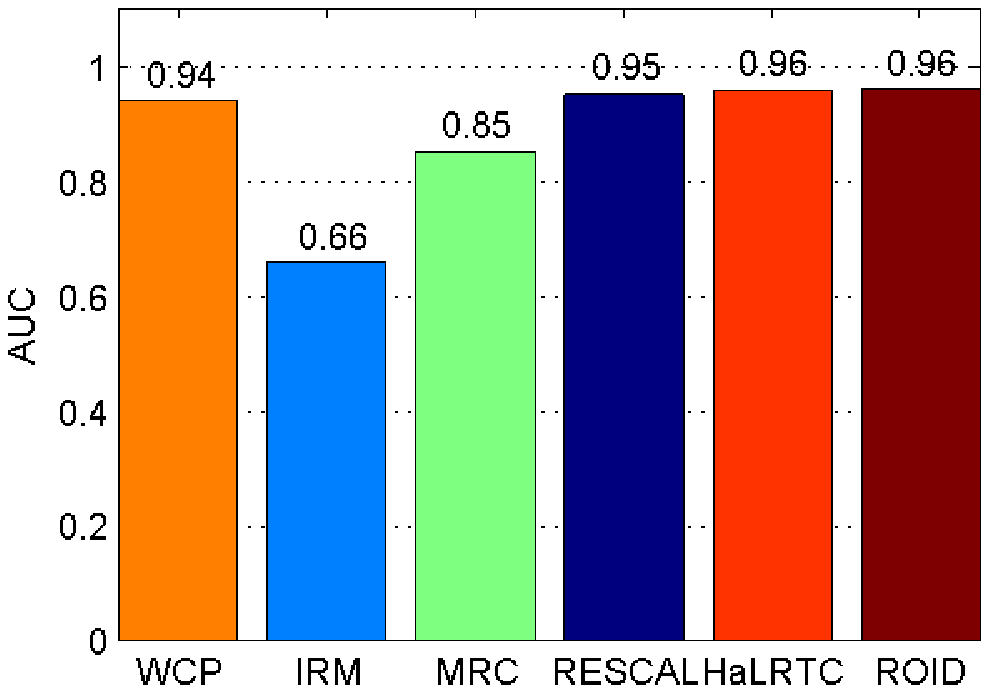}}
\subfigure[UMLS]{\includegraphics[width=0.490\linewidth]{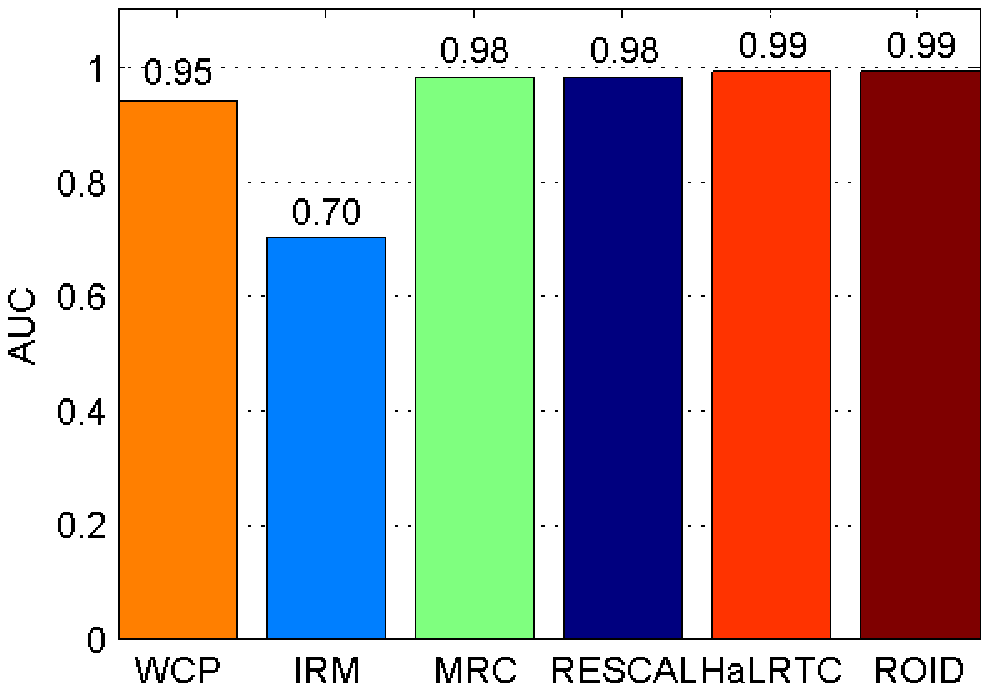}}

\vspace{-3mm}
\caption{Link predication results on the Nations, Kinship and UMLS date sets (best viewed in colors).}
\label{fig8}
\end{figure}

\begin{figure}[!t]
\centering
\subfigure[Conferences]{\includegraphics[width=0.49\linewidth]{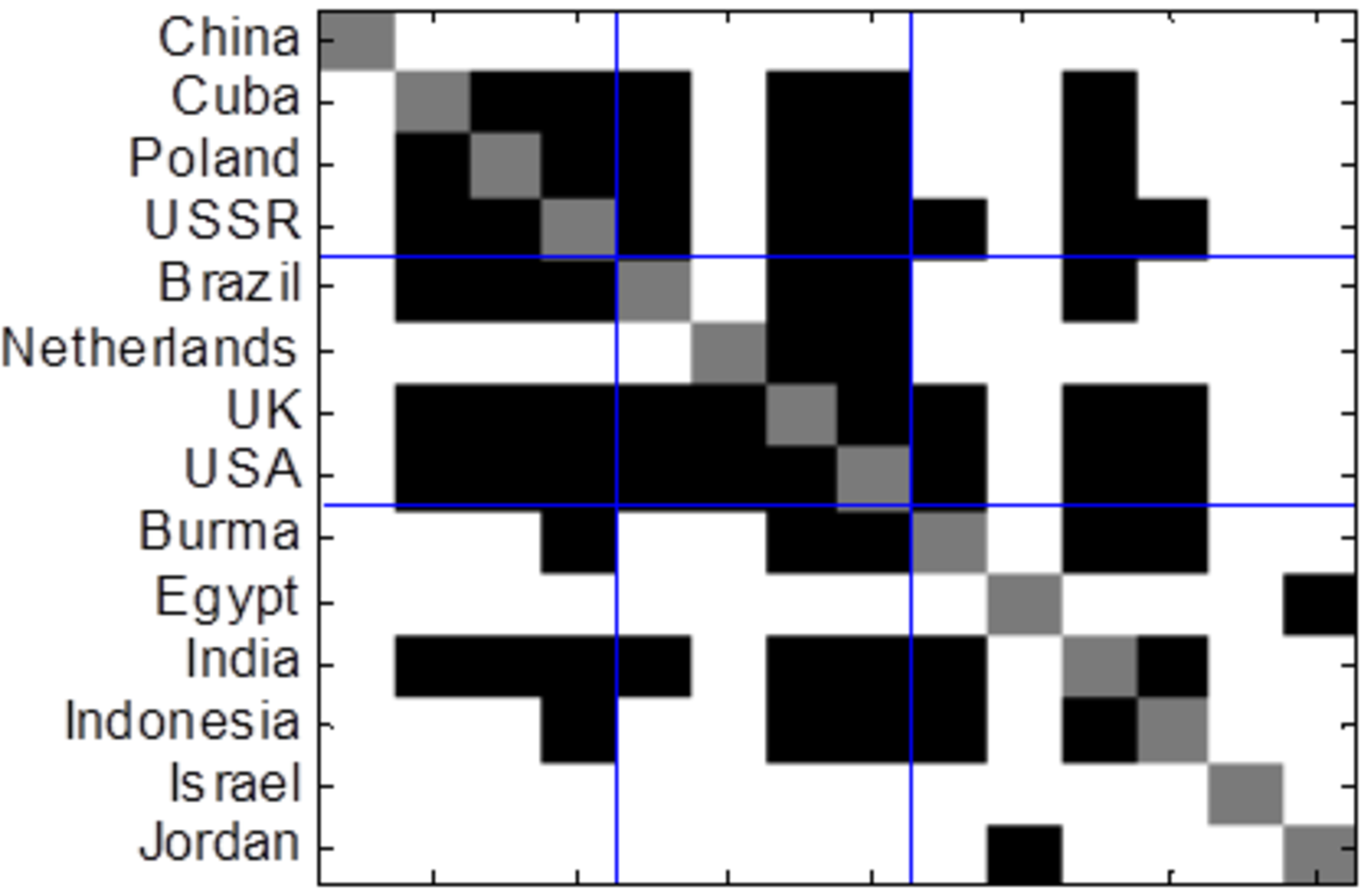}}\,\,
\subfigure[Military Alliance]{\includegraphics[width=0.432\linewidth]{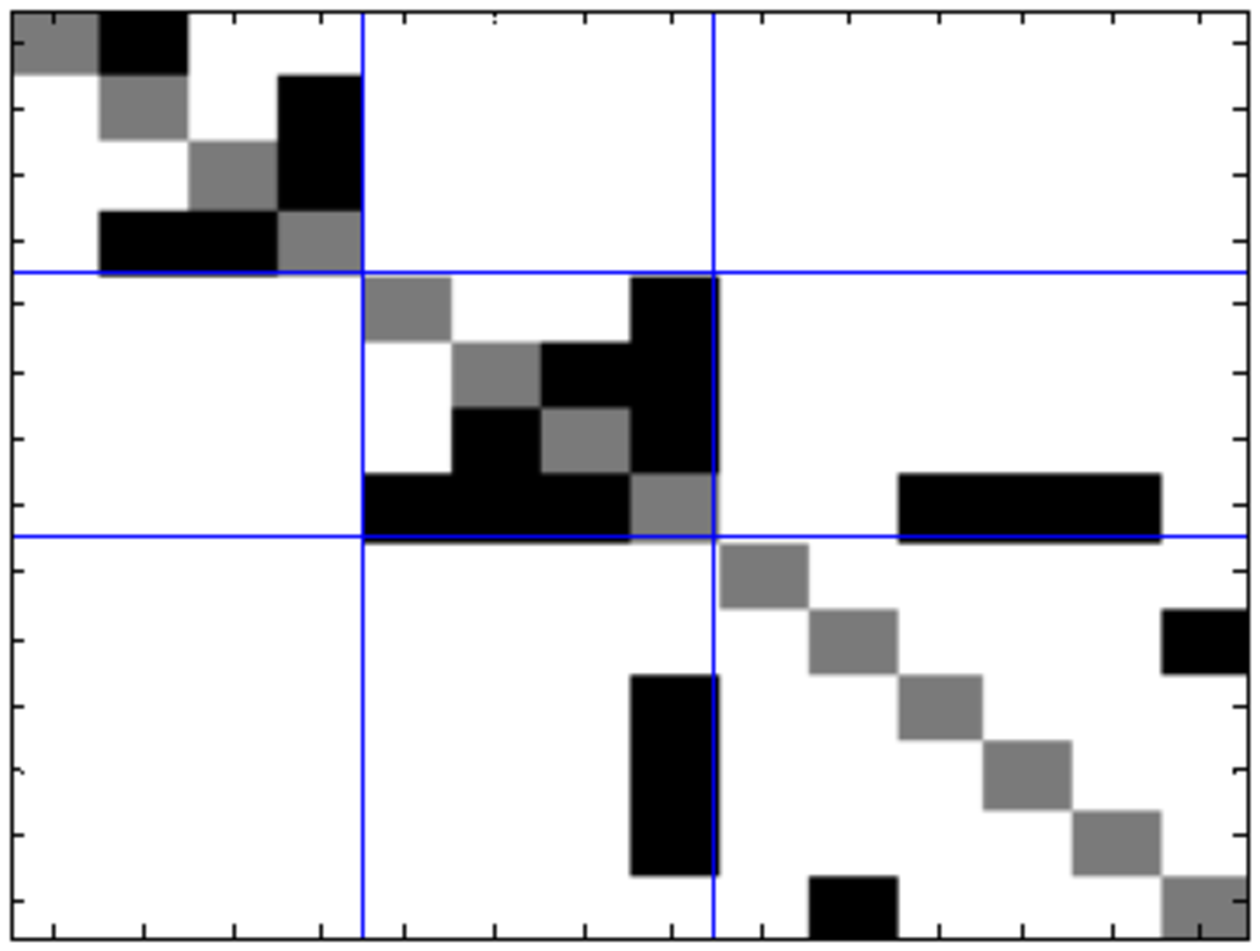}}
\subfigure[Treaties]{\includegraphics[width=0.49\linewidth]{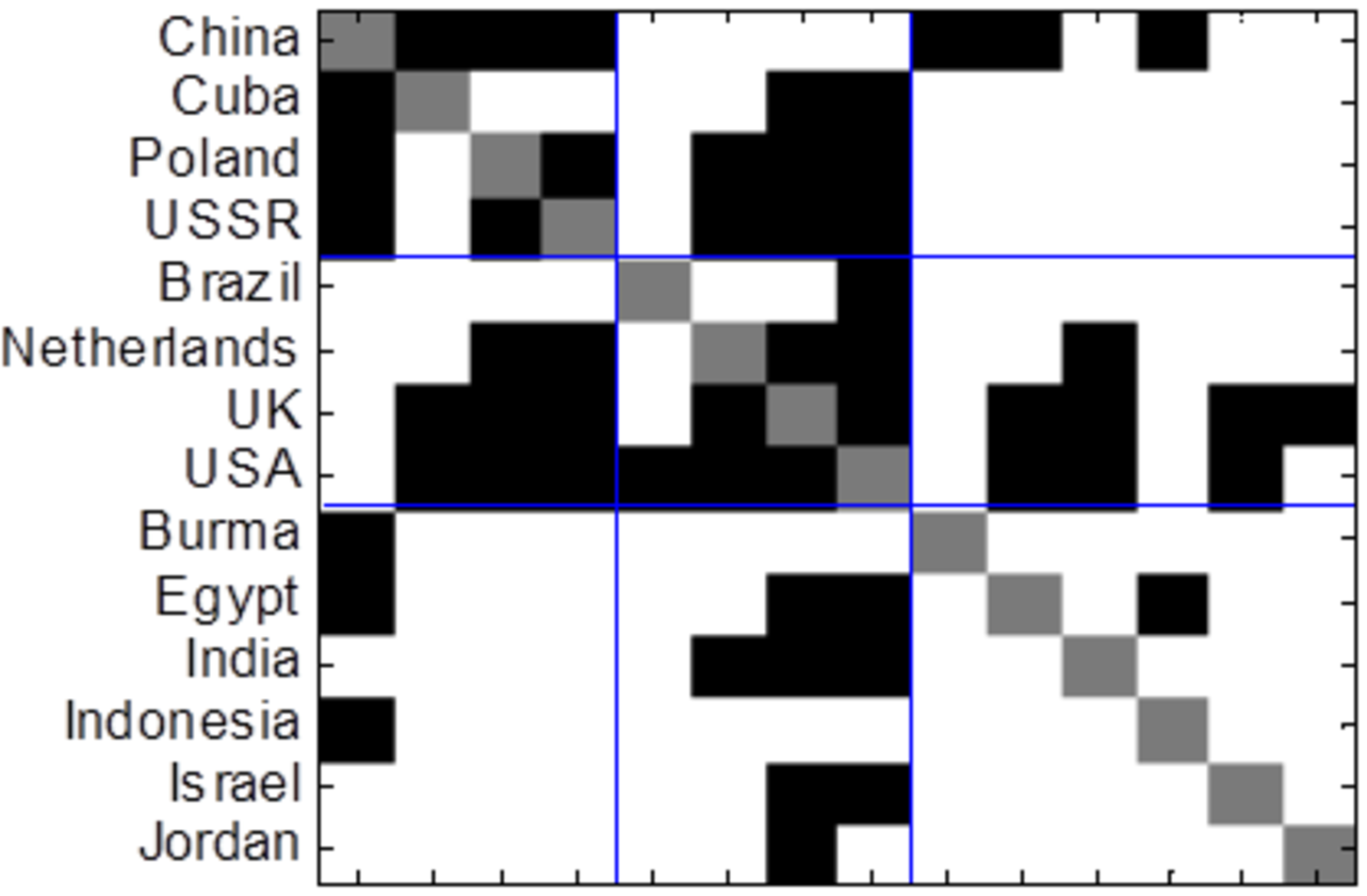}}\,\,
\subfigure[Bloc Membership]{\includegraphics[width=0.432\linewidth]{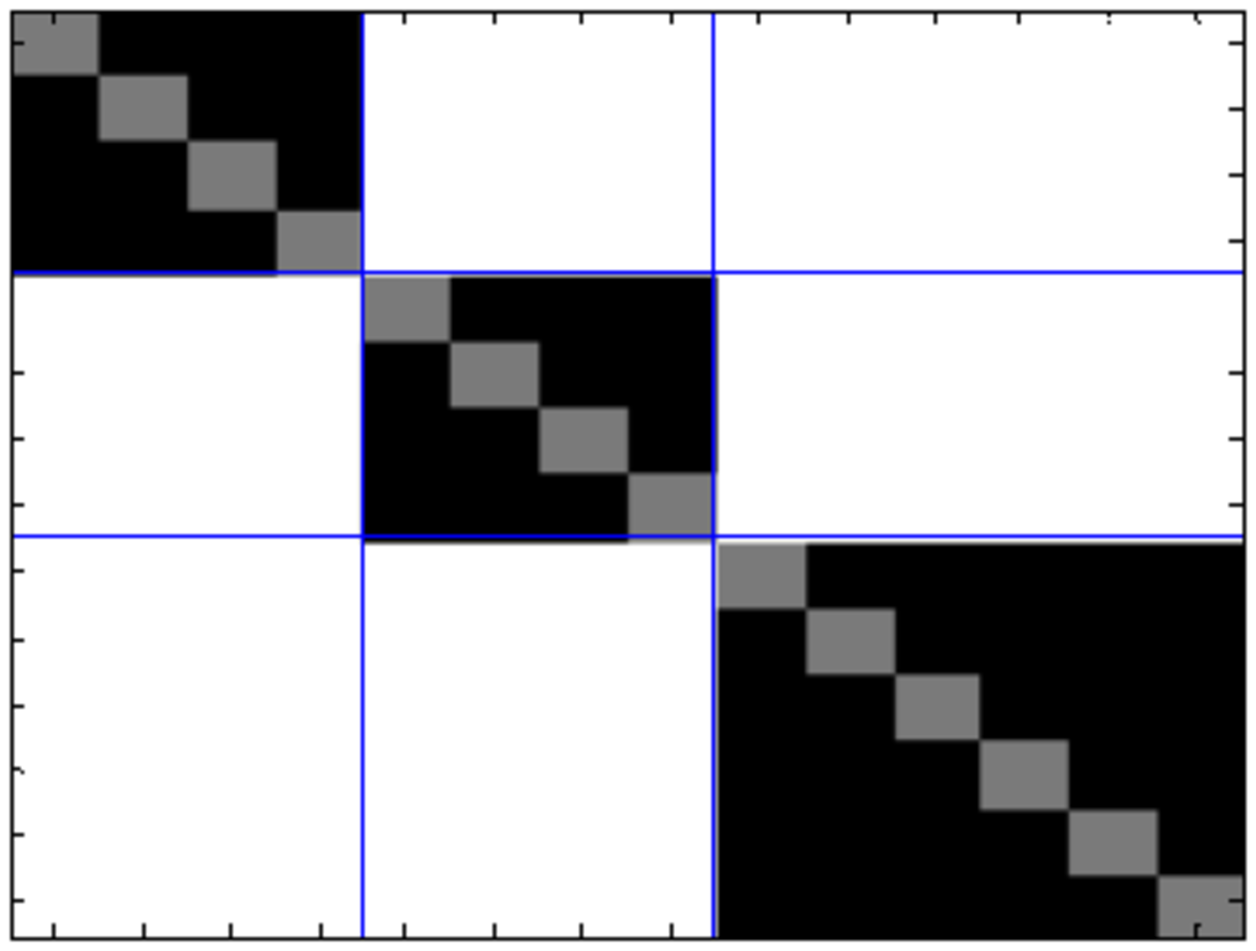}}

\vspace{-3mm}
\caption{The clustering results of four relations on the Nations date set. Black squares indicate an existing relation between the countries. Gray squares indicate missing data.}
\label{fig9}
\end{figure}

We compare our ROID method with several state-of-the-art approaches including WCP, the nonparametric Bayesian mehtod, IRM{\footnote{\url{http://www.psy.cmu.edu/~ckemp/code/irm.html}}}~\cite{kemp:irm}, the hidden variable discovery method, MRC~\cite{kok:spi} and RESCAL{\footnote{\url{http://www.mit.edu/~mnick/}}}~\cite{nickel:twm} and HaLRTC~\cite{liu:tcem} on these three data sets. Since WCP, WTucker and SHOOI yield very similar results on all three data sets, we only report the results of WCP. Then, we use the area under the precision-recall curve (AUC) as the evaluation metric to test the relation prediction performance as in~\cite{nickel:twm, jenatton:lfm}. In order to obtain comparable results to IRM, MRC and RESCAL, we follow their experimental requirements and preform 10-fold cross validation. For our ROID method, we set the multi-linear rank ${d}_{1}\!=\!{d}_{2}\!=\!{d}_{3}\!=\!35$ for both the Kinship and UMLS data sets, and ${d}_{1}\!=\!{d}_{2}\!=\!14$ and ${d}_{3}\!=\!10$ for the Nations data set, and the regularization parameter $\lambda=100$.

\begin{table}[t]
\footnotesize
\centering
\caption{Comparison of RSE results of WCP, RESCAL, HaLRTC, and our ROID method on three multi-relational data sets.}
\label{tab2}
\vspace{-2mm}

\begin{tabular} {l|cccc}
\hline
\ Datasets         & WCP   & RESCAL  & HaLRTC   & ROID\\
\hline
\ Nations         &0.3057  &0.3624   &0.2169   &\textbf{0.1773}\\
\ Kinship         &0.3236  &0.2404   &0.1683   &\textbf{0.1511}\\
\ UMLS	          &0.2004  &0.1861   &0.0907   &\textbf{0.0892}\\
\hline
\end{tabular}
\end{table}

We illustrate the experimental results of all these six methods on these three data sets, as shown in Fig.\ \ref{fig8}, from which we can see that our ROID method and HaLRTC consistently outperform the other four methods. The reason is that HaLRTC and our ROID method can more efficiently explore the impact of different relations to improve the accuracy of relation prediction. As WCP, RESCAL, HaLRTC, and our ROID method have similar AUC results, we also report the RSE results of these four methods in Table~\ref{tab2}. It is clear that our ROID method consistently performs better than the other three methods in terms of recovery accuracy. Moreover, we demonstrate the relation-based clustering capabilities of our ROID method on the Nations data set. We apply the \emph{K}-means clustering method with $K\!=\!3$ to the matrix $X_{m}$, and illustrate the results on four types of relationships in Fig.\ \ref{fig9}, from which we can see that similar results as in~\cite{nickel:twm, kemp:irm} are obtained.

\subsection{Running Time and Robustness Analysis}
Moreover, we present the comparison of the running time of related methods on three multi-relational data sets. In~\cite{nickel:twm}, it has been shown that WCP and RESCAL are much faster than MRC as well as IRM. Therefore, we only report the running time of WCP, RESCAL, HaLRTC and our ROID method on these three data sets with different ranks, as listed in Table~\ref{tab3}. It is clear that our ROID method is \emph{much faster} than WCP and RESCAL, and is more suitable for large-scale multi-relational data. RESCAL and WCP usually scale worse with regard to the ranks than our ROID method. In other words, with the increase of the given tensor ranks, the running time of RESCAL and WCP dramatically grows whereas that of our ROID method only changes slightly.

\begin{table}[t]
\scriptsize
\centering
\caption{The running time (seconds) comparison on three multi-relational data sets.}
\label{tab3}
\vspace{-2mm}

\begin{tabular}{cccccc}
\hline
{Data sets} & {Size} & {Methods} & Rank=10       & Rank=20     & Rank=40 \\
\hline
\multirow{4}{*}{Nations} &\multirow{4}{*}{14$\times$14$\times$56} &  WCP    &43.05 &186.79 &463.17\\
\multicolumn{2}{c}{\ } & RESCAL & 9.67 & 30.38 & 154.75\\
\multicolumn{2}{c}{\ } & ROID   & \textbf{1.23} & \textbf{2.46}  & \textbf{3.60}\\
\multicolumn{2}{c}{\ } & HaLRTC &\multicolumn{3}{c}{10.39}\\
\hline
\multirow{4}{*}{Kinship}  &\multirow{4}{*}{104$\times$104$\times$26} & WCP &156.70 &314.03 &488.82\\
\multicolumn{2}{c}{\ } & RESCAL & 18.93 & 36.22 & 116.85\\
\multicolumn{2}{c}{\ } & ROID   & \textbf{4.15} & \textbf{6.07}  & \textbf{10.24}\\
\multicolumn{2}{c}{\ } & HaLRTC &\multicolumn{3}{c}{46.98}\\
\hline
\multirow{4}{*}{UMLS}  &\multirow{4}{*}{135$\times$135$\times$49} & WCP &130.24 &262.36 &903.60\\
\multicolumn{2}{c}{\ } & RESCAL & 52.76 & 77.09 & 312.64\\
\multicolumn{2}{c}{\ } & ROID   & \textbf{11.18} & \textbf{13.40}  & \textbf{25.91}\\
\multicolumn{2}{c}{\ } & HaLRTC &\multicolumn{3}{c}{350.16}\\
\hline
\end{tabular}
\end{table}

We also evaluate the robustness of our ROID method against its parameters: the given tensor ranks and the regularization parameter $\lambda$ on these three real-world data sets, as shown in Fig.\ \ref{fig10}, from which we can see that our ROID method is \emph{robust} against its parameter variations, especially on the Kinship and UMLS date sets. Note that three rank parameters $d_{1}$, $d_{2}$ and $d_{3}$ are set to the lesser of the given multi-linear rank and the corresponding size of the tensor. The regularization parameter $\lambda$ is tuned from the grid $\{10^{0}, 10^{1}, \ldots, 10^{6}\}$.

\begin{figure}[t]
\centering
{\includegraphics[width=0.492\linewidth]{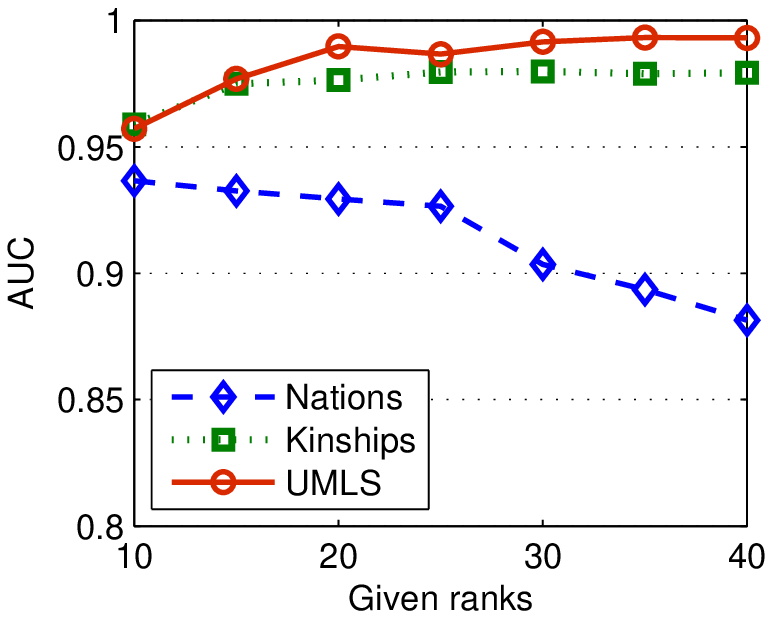}\label{fig101}}\,
{\includegraphics[width=0.492\linewidth]{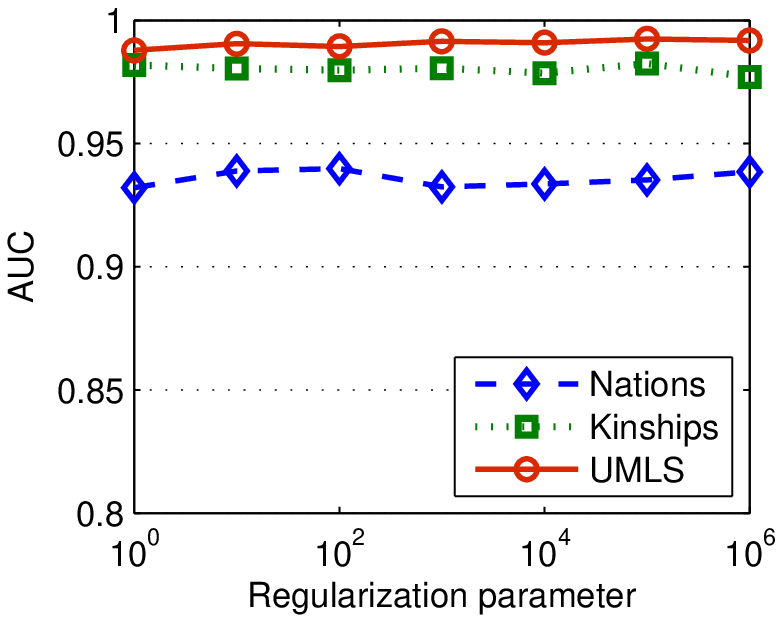}\label{fig102}}

\vspace{-2mm}
\caption{Link predication results of our ROID method against its parameters on the Nations, Kinship and UMLS date sets.}
\label{fig10}
\end{figure}

\section{Conclusions and Future Work}
In this paper we proposed a scalable ROID method and its graph regularized version for full or incomplete tensor analytics, such as multi-relational learning. First, we induced the equivalence relation of the Schatten $p$-norm ($0\!<\!p\!<\!\infty$) of a low multi-linear rank tensor and its core tensor. Then we presented a novel orthogonal tensor decomposition model with core tensor trace norm regularization. We also introduced a regularization version using graph Laplacians induced from the relationships and a sparse higher-order orthogonal iteration version. Finally, we developed two efficient ADMM algorithms to solve our problems. Moreover, we analyzed theoretically the local convergence of our algorithms. The convincing experimental results for real-world problems verified both the efficiency and effectiveness of our methods, especially from only a few observations.

Moreover, our ROID method can be extended to various higher-order tensor recovery and completion problems, such as higher-order robust principal component analysis (RPCA)~\cite{goldfarb:tr} and robust tensor completion. For future work, we are interested in exploring ways to regularize our model with other auxiliary information, such as semantic information contained in social network~\cite{nakatsuji:sdr}. Due to the unitary invariant property of norms as stated in Theorem 1 and the superiority of the Schatten-$p$ quasi-norm ($0\!<\!p\!<\!1$) to the trace norm, it would be an interesting research direction in future to investigate the more general core tensor Schatten quasi-norm regularization.

\appendices
\section{Proof of Theorem~\ref{theo1}:}
Before giving the proof of Theorem~\ref{theo1}, we will first present some properties of matrices and tensors in the following.

\begin{definition}
Let ${A}$ and ${B}$ be two matrices of size $m\times{n}$ and $p\times{q}$, respectively. The Kronecker product of both matrices $A$ and $B$ is an $mp\times{nq}$ matrix given by:
\vspace{-1mm}
\begin{displaymath}
{A}\otimes{B}=[{a}_{ij}{B}]_{mp\times{nq}}.
\end{displaymath}
\end{definition}

\begin{property}\label{prop1}
Let $A\in \mathbb{R}^{m\times p}$, $C\in \mathbb{R}^{p\times q}$, and $B\in \mathbb{R}^{n\times q}$, then
\vspace{-1mm}
\begin{displaymath}
\|ACB^{T}\|_{\mathcal{S}_{p}}=\|C\|_{\mathcal{S}_{p}}
\end{displaymath}
where both $A$ and $B$ are column-orthonormal, i.e., $A^{T}A\!=\!I_{p}$ and $B^{T}B\!=\!I_{q}$.
\end{property}

\begin{proof}
Let us denote the SVD of $C$ by $C\!=\!{U}{\Sigma}{V}^{T}$, then $ACB^{T}\!=\!(A{U}){\Sigma}(B{V})^{T}$. Since $(A{U})^{T}(A{U})\!=\!I_{p}$ and $(B{V})^{T}(B{V})\!=\!I_{q}$, $(A{U}){\Sigma}(B{V})^{T}$ is actually an SVD of $ACB^{T}$. According to the definition of the Schatten $p$-norm, we have $\|C\|_{\mathcal{S}_{p}}\!=\!\left(\textrm{Tr}({\Sigma^{p}})\right)^{1/p}\!=\!\|ACB^{T}\|_{\mathcal{S}_{p}}$.
\end{proof}

\begin{property}\label{prop2}
Let $A\!\in\! \mathbb{R}^{m\times n}$, $B\!\in\!\mathbb{R}^{p\times q}$, and ${C}$ and ${D}$ are two matrices of compatible sizes, then we have the following results:
\vspace{-1mm}
\begin{enumerate}
\item $(A\otimes B)\otimes C=A\otimes (B\otimes C)$.
\item $(A\otimes B)(C\otimes D)=(AC)\otimes (BD)$.
\item $(A\otimes B)^{T}=A^{T}\otimes B^{T}$.
\end{enumerate}
\end{property}

\begin{property}\label{prop3}
Let $\mathcal{X}=\mathcal{G}\times_{1}U\times_{2}V\times_{3}W$, where $\mathcal{X}\in \mathbb{R}^{{I_{1}}\times{I_{2}}\times{I_{3}}}$ and $\mathcal{G}\in \mathbb{R}^{{d_{1}}\times{d_{2}}\times{d_{3}}}$, then
\vspace{-1mm}
\begin{equation*}
\begin{split}
\mathcal{X}_{(1)}=U\mathcal{G}_{(1)}(&W\otimes V)^{T},\;\;\mathcal{X}_{(2)}=V\mathcal{G}_{(2)}(W\otimes U)^{T},\\
&\mathcal{X}_{(3)}=W\mathcal{G}_{(3)}(V\otimes U)^{T}.
\end{split}
\end{equation*}
\end{property}

\begin{proof}
Let ${P}_{1}=W\otimes V,\,{P}_{2}=W\otimes U,\,{P}_{3}=V\otimes U$.
According to Property~\ref{prop2}, we have
\vspace{-1mm}
\begin{displaymath}
\begin{split}
P^{T}_{1}P_{1}=&(W\otimes V)^{T}(W\otimes V)=(W^{T}\otimes V^{T})(W\otimes V),\\
=&(W^{T}W)\otimes(V^{T}V)=I_{3}\otimes I_{2}=\widetilde{I}_{1}
\end{split}
\end{displaymath}
where $I_{n}\!\in\! \mathbb{R}^{d_{n}\times d_{n}},\,n=1,\,2,\,3$, are all identity matrices, $\widetilde{I}_{1}\!\in\! \mathbb{R}^{J_{1}\times J_{1}}$ is also an identity matrix, and $J_{1}=\Pi_{j\neq 1}d_{j}$.

Similarly, we also have $P^{T}_{2}P_{2}\!=\!\widetilde{I}_{2}$ and $P^{T}_{3}P_{3}\!=\!\widetilde{I}_{3}$, where both $\widetilde{I}_{2}$ and $\widetilde{I}_{3}$ are identity matrices.

By Property~\ref{prop3}, we have
\vspace{-1mm}
\begin{equation*}
\|\mathcal{X}_{(1)}\|_{\mathcal{S}_{p}}=\|U\mathcal{G}_{(1)}(W\otimes V)^{T}\|_{\mathcal{S}_{p}}.
\end{equation*}
According to Property~\ref{prop1} and $P^{T}_{1}P_{1}=\widetilde{I}_{1}$ , we have
\vspace{-1mm}
\begin{equation*}
\|\mathcal{X}_{(1)}\|_{\mathcal{S}_{p}}=\|U\mathcal{G}_{(1)}(W\otimes V)^{T}\|_{\mathcal{S}_{p}}=\|\mathcal{G}_{(1)}\|_{\mathcal{S}_{p}}.
\end{equation*}
Similarly, we have $\|\mathcal{X}_{(2)}\|_{\mathcal{S}_{p}}\!=\!\|\mathcal{G}_{(2)}\|_{\mathcal{S}_{p}}$ and $\|\mathcal{X}_{(3)}\|_{\mathcal{S}_{p}}\!=\!\|\mathcal{G}_{(3)}\|_{\mathcal{S}_{p}}$. Hence, we have $\|\mathcal{X}\|_{\mathcal{S}_{p}}\!=\!\|\mathcal{G}\|_{\mathcal{S}_{p}}$.
\end{proof}

\section{Proof of Theorem~\ref{theo2}:}
\begin{proof}
The optimization problem (\ref{equ14}) with respect to $\mathcal{G}$ is written by
\vspace{-1mm}
\begin{equation}\label{equ36}
\begin{split}
\min_{\mathcal{G}}\,h(\mathcal{G})=\sum^{3}_{n=1}\frac{\rho^{k}}{2}\left\|\mathcal{G}_{(n)}-{G}^{k+1}_{n}+{Y}^{k}_{n}/\rho^{k}\right\|^{2}_{F}\\
+\frac{1}{2}\left\|\mathcal{X}^{k}-\mathcal{G}\!\times_{1}\!U\!\times_{2}\!V\!\times_{3}\!W\right\|^{2}_{F}.
\end{split}
\end{equation}
The above problem (\ref{equ36}) is a smooth convex optimization problem, thus we can obtain the derivative of the function $h$ in the following form:
\vspace{-1mm}
\begin{equation*}
\begin{split}
\frac{\partial{h}}{\partial\mathcal{G}}&=\left(\mathcal{G}-\mathcal{X}^{k}\!\times_{1}\!(U)^{T}\!\times_{2}\!(V)^{T}\!\times_{3}\!(W)^{T}\right)\\
&\;\;\;\;+\sum^{3}_{n=1}\rho^{k}\left(\mathcal{G}-\textup{refold}({G}^{k+1}_{n}-{Y}^{k}_{n}/\rho^{k})\right)\\
&=(3\rho^{k}+1)\mathcal{G}-\rho^{k}\sum^{3}_{n=1}\textup{refold}({G}^{k+1}_{n}-{Y}^{k}_{n}/\rho^{k})\\
&\;\;\;\;-\mathcal{X}^{k}\!\times_{1}\!(U)^{T}\!\times_{2}\!(V)^{T}\!\times_{3}\!(W)^{T}.
\end{split}
\end{equation*}

Let $\frac{\partial{h}}{\partial\mathcal{G}}$=0, the optimal solution of (\ref{equ36}) is given by
\vspace{-1mm}
\begin{equation*}
\begin{split}
\mathcal{G}=&\frac{1}{1+3\rho^{k}}\mathcal{X}^{k}\!\times_{1}\!(U)^{T}\!\times_{2}\!(V)^{T}\!\times_{3}\!(W)^{T}\\
&+\frac{\rho^{k}}{1+3\rho^{k}}\sum^{3}_{n=1}\textup{refold}({G}^{k+1}_{n}-{Y}^{k}_{n}/\rho^{k}).
\end{split}
\end{equation*}
\end{proof}

\section{Proof of Theorem~\ref{theo3}:}
\begin{proof}
Let
\vspace{-2mm}
\begin{equation}\label{equ37}
\begin{split}
f(\mathcal{G},U,V,W)=\frac{\rho^{k}}{2}\sum^{3}_{n=1}\left\|\mathcal{G}_{(n)}-{G}^{k+1}_{n}+{Y}^{k}_{n}/\rho^{k}\right\|^{2}_{F}\\
+\frac{1}{2}\left\|\mathcal{X}^{k}-\mathcal{G}\!\times_{1}\!U\!\times_{2}\!V\!\times_{3}\!W\right\|^{2}_{F},
\end{split}
\end{equation}
\begin{equation*}
\begin{split}
\;\;\mathcal{A}=\mathcal{X}^{k}\!\times_{1}\!(U)^{T}\!\times_{2}\!(V)^{T}\!\times_{3}\!(W)^{T},\\
\textup{and}\;\,\mathcal{B}=\sum^{3}_{n=1}\textup{refold}({G}^{k+1}_{n}-{Y}^{k}_{n}/\rho^{k}).\;\;\;\,
\end{split}
\end{equation*}
Then the closed-form solution of (\ref{equ37}) with respect to $\mathcal{G}$ can be obtained by (\ref{equ15}), and it can be rewritten as
\vspace{-2mm}
\begin{equation}\label{equ38}
\mathcal{G}=\frac{1}{1+3\rho^{k}}\mathcal{A}+\frac{\rho^{k}}{1+3\rho^{k}}\mathcal{B}.
\end{equation}

Using (\ref{equ38}) and according to the definitions of the tensors $\mathcal{A}$ and $\mathcal{B}$, we have
\begin{equation}\label{equ39}
\begin{split}
&\langle\mathcal{X}^{k},\mathcal{G}\!\times_{1}\!U\!\!\times_{2}\!V\!\!\times_{3}\!W\rangle\!=\!\left\langle\mathcal{A},\;\frac{1}{1\!+\!3\rho^{k}}(\mathcal{A}\!+\!\rho^{k}\mathcal{B})\right\rangle\\
=&\frac{1}{{1+{3}\rho^{k}}}{\|\mathcal{A}\|^{2}_{F}}+\frac{\rho^{k}}{{1+{3}\rho^{k}}}{\langle\mathcal{A},\;\mathcal{B}\rangle},
\end{split}
\end{equation}
\begin{equation}\label{equ40}
\begin{split}
&\langle\mathcal{G},\;\mathcal{B}\rangle=\left\langle\mathcal{B},\;\frac{1}{1+3\rho^{k}}(\mathcal{A}+\rho^{k}\mathcal{B})\right\rangle\\
=&\frac{\rho^{k}}{1+3\rho^{k}}\|\mathcal{B}\|^{2}_F+\frac{1}{1+3\rho^{k}}\langle\mathcal{A},\;\mathcal{B}\rangle.
\end{split}
\end{equation}
Hence,
\vspace{-2mm}
\begin{equation}\label{equ41}
\begin{aligned}[b]
&f(\mathcal{G},U,V,W)=\frac{\rho^{k}}{2}\!\sum^{3}_{n=1}\!\|{G}^{k+\!1}_{n}\!-\!{Y}^{k}_{n}/\rho^{k}\|^{2}_{F}\!+\!\frac{1}{2}\|\mathcal{X}^{k}\|^{2}_{F}\\
&-\langle\mathcal{X}^{k},\,\mathcal{G}\!\times_{1}\!\!U\!\!\times_{2}\!\!V\!\!\times_{3}\!\!W\rangle\!+\!\frac{3\rho^{k}\!+\!1}{2}\!\|\mathcal{G}\|^{2}_{F}\!-\!\rho^{k}\langle \mathcal{G},\,\mathcal{B}\rangle.
\end{aligned}
\end{equation}
Substituting (\ref{equ38}), (\ref{equ39}) and (\ref{equ40}) into (\ref{equ41}), then the cost function (\ref{equ41}) is rewritten in the following form,
\vspace{-2mm}
\begin{equation*}
\begin{aligned}[b]
f(\mathcal{G},U,V,W)\!=&\frac{\rho^{k}}{2}\!\sum^{3}_{n=1}\|{G}^{k+1}_{n}\!-\!{Y}^{k}_{n}/\rho^{k}\|^{2}_{F}\!+\!\frac{1}{2}\!\|\mathcal{X}^{k}\|^{2}_{F}\\
&\,-\langle\mathcal{A}+\rho^{k}\mathcal{B},\;\mathcal{G}\rangle+\frac{1+3\rho^{k}}{2}\|\mathcal{G}\|^{2}_{F}\\
=&c_{1}-\frac{1}{2(1+3\rho^{k})}\|\mathcal{A}+\rho^{k}\mathcal{B}\|^{2}_{F}
\end{aligned}
\end{equation*}
where $c_{1}\!=\!\|\mathcal{X}^{k}\|^{2}_{F}/2\!+\!(\rho^{k}/2)\!\sum^{3}_{n=1}\!\|{G}^{k+1}_{n}\!-\!{Y}^{k}_{n}/\rho^{k}\|^{2}_{F}$ is a constant, and $g(U,V,W):=\|\mathcal{A}+\rho^{k}\mathcal{B}\|^{2}_{F}$.
\end{proof}

\section{Proof of Theorem~\ref{theo4}:}
\begin{proof}
By (\ref{equ12}), (\ref{equ14}) and (\ref{equ21}), and $Y^{k+1}_{n}=Y^{k}_{n}+\rho^{k}(\mathcal{G}^{k+1}_{(n)}-G^{k+1}_{n})$, we have
\begin{equation}\label{equ42}
\begin{aligned}[b]
&\;0\in \partial\|G^{k+\!1}_{n}\|_{*}/(3\lambda)-Y^{k+\!1}_{n}+\rho^{k}(\mathcal{G}^{k+\!1}_{(n)}-\mathcal{G}^{k}_{(n)}),\\
&\!\!\!\sum^{3}_{n=1}\!\textup{refold}(\!Y^{k\!+\!1}_{n}\!)\!+\!\mathcal{G}^{k\!+\!1}\!\!-\!\mathcal{X}^{k\!+\!1}\!\!\times_{\!1}\!(\!U^{k\!+\!1}\!)^{\!T}\!\!\!\times_{\!2}\!(\!V^{k\!+\!1}\!)^{\!T}\!\!\!\times_{\!3}\!(\!W^{k\!+\!1}\!)^{\!T}\\
&\;\;+(\!\mathcal{X}^{k\!+\!1}\!-\!\mathcal{X}^{k})\!\times_{1}\!(\!U^{k\!+\!1}\!)^{T}\!\!\times_{2}\!(\!V^{k\!+\!1}\!)^{T}\!\!\times_{3}\!(\!W^{k\!+\!1}\!)^{T}\!\!=\!0,\\
&\mathcal{X}^{k+\!1}_{\Omega}\!=\!\mathcal{T}_{\Omega},\; \mathcal{X}^{k+\!1}_{\Omega^{C}}\!=\!(\mathcal{G}^{k+\!1}\!\times_{1}\!U^{k+\!1}\!\times_{2}\!V^{k+\!1}\!\times_{3}\!W^{k+\!1})_{\Omega^{C}},\\
&(\!U^{k\!+\!1}\!)^{T}U^{k\!+\!1}\!\!=\!I_{d_{1}},(\!V^{k\!+\!1}\!)^{T}V^{k\!+\!1}\!\!=\!I_{d_{2}},(\!W^{k\!+\!1}\!)^{T}W^{k\!+\!1}\!\!=\!I_{d_{3}}
\end{aligned}
\end{equation}
where $\Omega^{C}$ is the complement of $\Omega$. By Lemma~\ref{lemm1}, we have that the sequence $\{\mathscr{Z}^{k}\}$ is bounded, and $\mathscr{Z}^{k+1}-\mathscr{Z}^{k}\rightarrow 0$, such as $\mathcal{X}^{k+1}-\mathcal{X}^{k}\rightarrow 0$, $\mathcal{G}^{k+1}-\mathcal{G}^{k}\rightarrow 0$, $G^{k+1}_{n}-G^{k}_{n}\rightarrow 0$ and $Y^{k+1}_{n}-Y^{k}_{n}\rightarrow 0,\,n=1,\,2,\,3$. By the Bolzano-Weierstrass theorem, the bounded sequence $\{\mathscr{Z}^{k}\}$ must have a convergent subsequence $\{\mathscr{Z}^{k_{j}}\}$, and the limit point is denoted by $\mathscr{Z}^{\infty}=\lim_{j=\infty}\mathscr{Z}^{k_{j}}$. Moreover, $\{\mathscr{Z}^{k}\}$ lies in a compact set, thus $\mathscr{Z}^{\infty}$ is an accumulation point of $\{\mathscr{Z}^{k}\}$, where $\mathscr{Z}^{\infty}=(\{G^{\infty}_{n}\},\mathcal{G}^{\infty},U^{\infty},V^{\infty},W^{\infty},\mathcal{X}^{\infty},\{Y^{\infty}_{n}\})$.

Moreover, according to $Y^{k+1}_{n}-Y^{k}_{n}\rightarrow 0$ and $Y^{k+1}_{n}=Y^{k}_{n}+\rho^{k}(\mathcal{G}^{k+1}_{(n)}-G^{k+1}_{n})$, we have that $\mathcal{G}^{k+1}_{(n)}-G^{k+1}_{n}\rightarrow 0,\,n=1,\,2,\,3$. By (\ref{equ42}), we have
\begin{equation}\label{equ43}
\begin{aligned}[b]
& 0\in \partial\|G^{\infty}_{n}\|_{*}/(3\lambda)-Y^{\infty}_{n},\,\;\mathcal{G}^{\infty}_{(n)}=G^{\infty}_{n},\;n=1,2,3,\\
&\sum^{3}_{n=1}\!\textup{refold}(\!Y^{\!\infty}_{n}\!)\!+\!\mathcal{G}^{\infty}\!\!-\!\mathcal{X}^{\infty}\!\!\times_{\!1}\!\!(\!U^{\infty}\!)^{\!T}\!\!\!\times_{\!2}\!\!(\!V^{\infty}\!)^{\!T}\!\!\!\times_{\!3}\!\!(\!W^{\infty}\!)^{\!T}\!\!=\!0,\\
&\mathcal{X}^{\infty}_{\Omega}=\mathcal{T}_{\Omega},\;\,\mathcal{X}^{\infty}_{\Omega^{C}}=(\mathcal{G}^{\infty}\!\times_{1}\!U^{\infty}\!\times_{2}\!V^{\infty}\!\times_{3}\!W^{\infty})_{\Omega^{C}},\\
&(U^{\infty})^{T}U^{\infty}\!=\!I_{d_{1}},(V^{\infty})^{T}V^{\infty}\!=\!I_{d_{2}},(W^{\infty})^{T}W^{\infty}\!=\!I_{d_{3}}.
\end{aligned}
\end{equation}
It is easy to see that (\ref{equ43}) is the KKT conditions for (\ref{equ10}), that is, the accumulation point $(\{G^{\infty}_{n}\},\mathcal{G}^{\infty},U^{\infty},V^{\infty},W^{\infty},\mathcal{X}^{\infty})$ satisfies the KKT conditions of (\ref{equ10}). This completes the proof of Theorem~\ref{theo4}.
\end{proof}

\bibliographystyle{IEEEtran}
\bibliography{IEEEabrv,sigproc}

\begin{IEEEbiography}[{\includegraphics[width=1in,height=1.25in,clip,keepaspectratio]{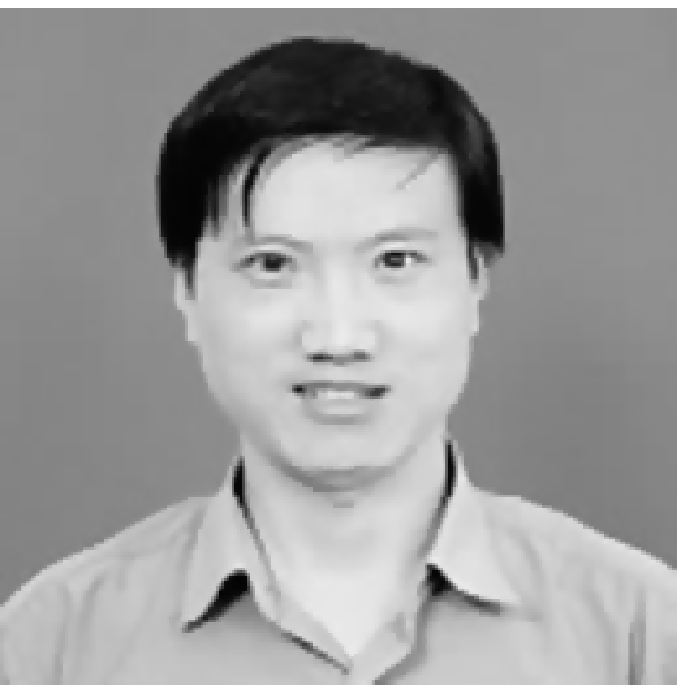}}]{Fanhua Shang} (M'14) received the Ph.D. degree in Circuits and Systems from Xidian University, Xi'an, China, in 2012.

He is currently a Post-Doctoral Research Fellow with the Department of Computer Science and Engineering, The Chinese University of Hong Kong. Prior to that, he was a Post-Doctoral Research Associate with the Department of Electrical and Computer Engineering, Duke University, Durham, NC, USA. His current research interests include machine learning, data mining, pattern recognition, and computer vision.
\end{IEEEbiography}

%

\begin{IEEEbiography}[{\includegraphics[width=1in,height=1.25in,clip,keepaspectratio]{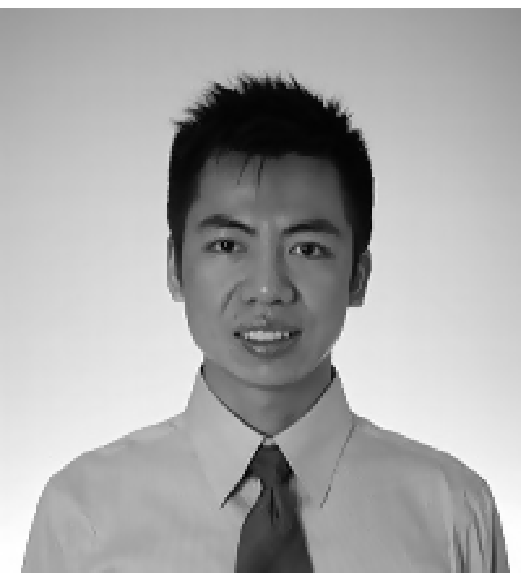}}]{James Cheng} is an Assistant Professor with the Department of Computer Science and Engineering, The Chinese University of Hong Kong, Hong Kong. His current research interests include distributed computing systems, large-scale network analysis, temporal networks, and big data.
\end{IEEEbiography}

\begin{IEEEbiography}[{\includegraphics[width=1in,height=1.25in,clip,keepaspectratio]{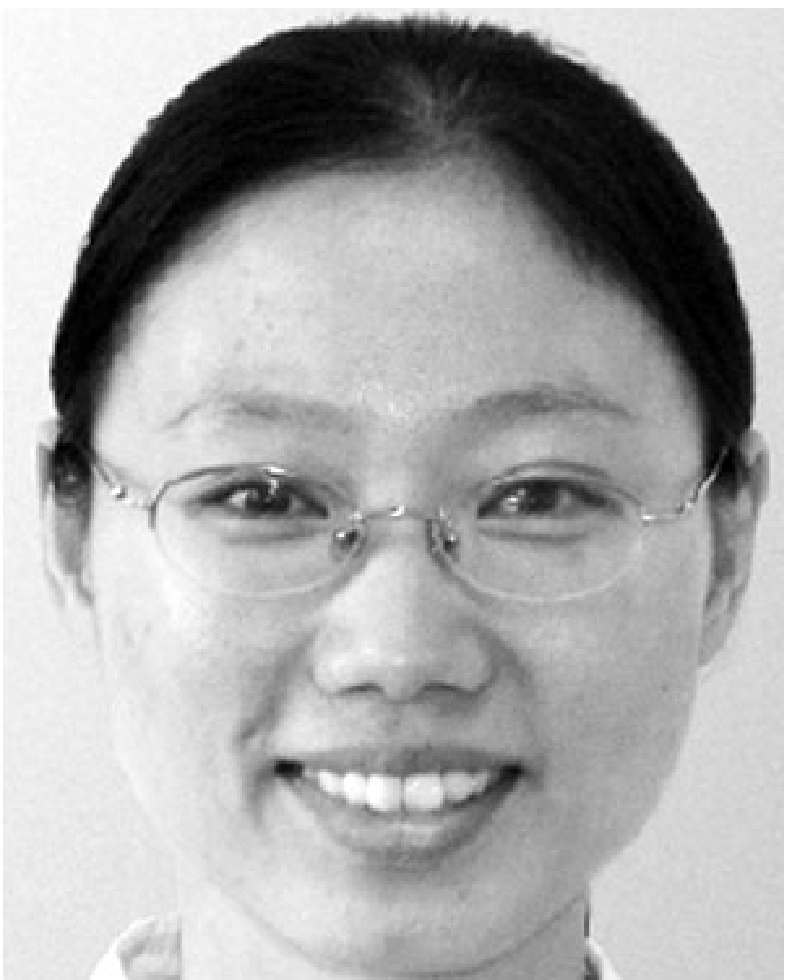}}]{Hong Cheng} is an Associate Professor with the Department of Systems Engineering and Engineering Management, The Chinese University of Hong Kong, Hong Kong. Her primary research interests include data mining, machine learning, and database systems.
\end{IEEEbiography}

\newpage
\onecolumn

\section*{Supplementary Materials}


In this supplementary material, we give the detailed ADMM algorithms for solving the core tensor trace norm regularized full tensor decomposition problem (7) and the sparse tensor HOOI problem (8).  In addition, we also provide the theoretical analysis of the relationship between (6) and (8), and the detailed complexity analysis of their algorithms.

\subsection*{Algorithm for Solving (7)}
In this supplementary material, we first give the details on how we can design an efficient ADMM algorithm, as outlined in Algorithm 3, for solving the core tensor trace norm regularized full tensor decomposition problem (7). Similar to (10), we also introduce three much smaller auxiliary variables ${G}_{n}\!\in\!\mathbb{R}^{{{d_{n}}\times{\Pi_{j\neq{n}}}{d_{j}}}}$, and reformulate (7) into the following equivalent form:
\begin{equation}
\begin{split}
&\min_{\mathcal{G},{U},{V},W,\{G_{\!n}\!\}}\sum^{3}_{n=1}\frac{\|G_{n}\|_{*}}{3\lambda}+\frac{1}{2}\|\mathcal{T}-\mathcal{G}\!\times_{1}\!U\!\times_{2}\!V\!\times_{3}\!W\|^{2}_{F},\\
&\quad\quad\; \textup{s.t.},\mathcal{G}_{(n)}=G_{n},U^{T}\!U\!=\!I_{d_{1}},V^{T}\!V\!=\!I_{d_{2}},W^{T}\!W\!=\!I_{d_{3}}.
\end{split}
\end{equation}

The partial augmented Lagrangian function of (44) is
\begin{equation}
\begin{split}
\mathcal{L}_{\rho}(\{{G}_{n}\},\mathcal{G},U,V,W,\{{Y}_{n}\})=&\sum^{3}_{n=1}\!\left(\frac{\|G_{n}\|_{*}}{3\lambda}+\langle{Y}_{n},\;{\mathcal{G}}_{(n)}-{G}_{n}\rangle+\frac{\rho}{2}\|\mathcal{G}_{(n)}-{G}_{n}\|^{2}_{F}\right)\\
&+\frac{1}{2}\|\mathcal{T}-\mathcal{G}\!\times_{1}\!U\!\times_{2}\!V\!\times_{3}\!W\|^{2}_{F},
\end{split}
\end{equation}
where ${Y}_{n}\!\in\!\mathbb{R}^{{{d_{n}}\times{\Pi_{j\neq{n}}}{d_{j}}}}$ are the matrices of Lagrange multipliers for $n\!=\!1,2,3$.\\

The updating rules for $\{{G}^{k+\!1}_{1}\!,{G}^{k+\!1}_{2}\!,{G}^{k+\!1}_{3}\}$ are the same as (13) in this paper. Next we discuss how we update $\{{U}^{k+1},{V}^{k+1},{W}^{k+1},\mathcal{G}^{k+1}\}$. The optimization problem (44) with respect to ${U}$, ${V}$, ${W}$ and $\mathcal{G}$ is formulated as follows:
\begin{equation}
\begin{split}
&\min_{\mathcal{G},U,V,W}\sum^{3}_{n=1}\frac{\rho^{k}}{2}\|\mathcal{G}_{(n)}\!-{G}^{k+1}_{n}+{Y}^{k}_{n}/\rho^{k}\|^{2}_{F}+\frac{1}{2}\|\mathcal{T}-\mathcal{G}\!\times_{1}\!U\!\!\times_{2}\!V\!\!\times_{3}\!W\|^{2}_{F},\\
&\quad \textup{s.t.},\,U^{T}U=I_{d_{1}},\,V^{T}V=I_{d_{2}},\,W^{T}W=I_{d_{3}}.
\end{split}
\end{equation}

Similar to Theorem 3 in this paper, the minimization problem with respect to ${U}$, ${V}$ and ${W}$ can be formulated as Theorem 6 shown below:
\begin{theorem}
Assume a real third-order tensor $\mathcal{T}$, then the minimization problem (46) is equivalent to the maximization, over these matrices $U$, $V$ and $W$ having orthonormal columns, of the function
\begin{equation}
g(U,\,V,\,W)=\|\mathcal{C}+\rho^{k}\mathcal{B}\|^{2}_{F},
\end{equation}
where $\mathcal{C}\!=\!\mathcal{T}\!\!\times_{1}\!U^{T}\!\!\times_{2}\!V^{T}\!\!\times_{3}\!W^{T}$.
\end{theorem}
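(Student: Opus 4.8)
The plan is to transcribe the proof of Theorem~\ref{theo3}, with the data tensor $\mathcal{X}^{k}$ there replaced by $\mathcal{T}$ and the auxiliary tensor $\mathcal{A}$ replaced by $\mathcal{C}=\mathcal{T}\times_{1}U^{T}\times_{2}V^{T}\times_{3}W^{T}$, while $\mathcal{B}=\sum_{n=1}^{3}\textup{refold}(G^{k+1}_{n}-Y^{k}_{n}/\rho^{k})$ keeps the same meaning. First I would note that, for fixed column-orthonormal $U$, $V$, $W$, the objective of (46) is a strictly convex quadratic in $\mathcal{G}$; carrying out the gradient computation of the proof of Theorem~\ref{theo2} word for word --- the only tensor-mode identity used there, $\|\mathcal{G}\times_{1}U\times_{2}V\times_{3}W\|_{F}^{2}=\|\mathcal{G}\|_{F}^{2}$, still holds because $U$, $V$, $W$ have orthonormal columns --- yields the unique minimizer $\mathcal{G}^{\star}=\frac{1}{1+3\rho^{k}}(\mathcal{C}+\rho^{k}\mathcal{B})$.

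Next I would substitute $\mathcal{G}^{\star}$ back into the objective and simplify, using the two facts $\langle\mathcal{T},\mathcal{G}\times_{1}U\times_{2}V\times_{3}W\rangle=\langle\mathcal{C},\mathcal{G}\rangle$ (adjointness of the mode-$n$ products by $U$ and $U^{T}$, and likewise for $V,W$) and $\|\mathcal{G}\times_{1}U\times_{2}V\times_{3}W\|_{F}^{2}=\|\mathcal{G}\|_{F}^{2}$. Expanding the squared norms exactly as in the proof of Theorem~\ref{theo3}, the objective of (46) becomes $c+\frac{1+3\rho^{k}}{2}\|\mathcal{G}^{\star}\|_{F}^{2}-\langle\mathcal{C}+\rho^{k}\mathcal{B},\mathcal{G}^{\star}\rangle$ with $c=\frac{1}{2}\|\mathcal{T}\|_{F}^{2}+\frac{\rho^{k}}{2}\sum_{n=1}^{3}\|G^{k+1}_{n}-Y^{k}_{n}/\rho^{k}\|_{F}^{2}$; inserting the closed form of $\mathcal{G}^{\star}$ collapses this to $c-\frac{1}{2(1+3\rho^{k})}\|\mathcal{C}+\rho^{k}\mathcal{B}\|_{F}^{2}$. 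Since $c$ and $\mathcal{B}$ are both independent of $U$, $V$, $W$, and since one may legitimately minimize over $\mathcal{G}$ first, minimizing the objective of (46) over $(\mathcal{G},U,V,W)$ is equivalent to maximizing $g(U,V,W)=\|\mathcal{C}+\rho^{k}\mathcal{B}\|_{F}^{2}$ over matrices $U$, $V$, $W$ with orthonormal columns, which is the claim.

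I do not anticipate a genuine obstacle here: the statement is the exact analogue of Theorem~\ref{theo3}, and the only point requiring attention is the cross-term bookkeeping --- verifying that every term that does not combine into $\|\mathcal{C}+\rho^{k}\mathcal{B}\|_{F}^{2}$ is constant in $U$, $V$, $W$. This reduces to the orthonormality of $U$, $V$, $W$ being used solely through $\|\mathcal{G}^{\star}\times_{1}U\times_{2}V\times_{3}W\|_{F}^{2}=\|\mathcal{G}^{\star}\|_{F}^{2}$ and through $\mathcal{C}$ being the exact adjoint image of $\mathcal{T}$ under the three mode products --- facts already invoked in APPENDIX B and APPENDIX C --- so the argument is a routine transcription with the substitutions $\mathcal{X}^{k}\mapsto\mathcal{T}$ and $\mathcal{A}\mapsto\mathcal{C}$.
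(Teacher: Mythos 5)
Your proposal is correct and follows essentially the same route as the paper's own proof: the paper likewise invokes the Theorem~2 formula for the optimal core tensor $\mathcal{G}=\frac{1}{1+3\rho^{k}}(\mathcal{C}+\rho^{k}\mathcal{B})$, substitutes it back, and uses column-orthonormality (via $\|\mathcal{G}\!\times_{1}\!U\!\times_{2}\!V\!\times_{3}\!W\|_{F}^{2}=\|\mathcal{G}\|_{F}^{2}$ and the adjoint identity) to collapse the objective to $c-\frac{1}{2(1+3\rho^{k})}\|\mathcal{C}+\rho^{k}\mathcal{B}\|_{F}^{2}$. Your cross-term bookkeeping and the identification of the constant $c$ match the paper exactly.
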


\begin{IEEEproof}
According to Theorem 2, for any given matrices $U$, $V$ and $W$, the optimal core tensor $\mathcal{G}$ is given by
\begin{equation}
\mathcal{G}=\frac{\rho^{k}\mathcal{B}}{3\rho^{k}+1}+\frac{\mathcal{C}}{3\rho^{k}+1}.
\end{equation}

Thus, we have
\begin{equation*}
\begin{split}
f(U,V,W):=&\sum^{3}_{n=1}\frac{\rho^{k}}{2}\|\mathcal{G}_{(n)}\!-{G}^{k+1}_{n}+{Y}^{k}_{n}/\rho^{k}\|^{2}_{F}+\frac{1}{2}\|\mathcal{T}-\mathcal{G}\!\times_{1}\!U\!\!\times_{2}\!V\!\!\times_{3}\!W\|^{2}_{F}\\
^{a}\!\!=&\frac{3\rho^{k}}{2}\|\mathcal{G}\|^{2}_{F}+\frac{\rho^{k}}{2}\sum^{3}_{n=1}\|{G}^{k+1}_{n}-{Y}^{k}_{n}/\rho^{k}\|^{2}_{F}-\langle\mathcal{G}, \rho^{k}\mathcal{B}\rangle+\frac{1}{2}\|\mathcal{T}\|^{2}_{F}+\frac{1}{2}\|\mathcal{G}\|^{2}_{F}-\langle \mathcal{T}, \mathcal{G}\!\times_{1}\!U\!\!\times_{2}\!V\!\!\times_{3}\!W\rangle\\
^{b}\!\!=&\frac{\rho^{k}}{2}\sum^{3}_{n=1}\|{G}^{k+1}_{n}-{Y}^{k}_{n}/\rho^{k}\|^{2}_{F}+\frac{1}{2}\|\mathcal{T}\|^{2}_{F}+\frac{3\rho^{k}\!+\!1}{2}\|\frac{1}{3\rho^{k}\!+\!1}(\rho^{k}\mathcal{B}+\mathcal{C})\|^{2}_{F}-\langle\frac{1}{3\rho^{k}\!+\!1}(\rho^{k}\mathcal{B}+\mathcal{C}), \rho^{k}\mathcal{B}+\mathcal{C}\rangle\\
=&\frac{\rho^{k}}{2}\sum^{3}_{n=1}\|{G}^{k+1}_{n}-{Y}^{k}_{n}/\rho^{k}\|^{2}_{F}+\frac{1}{2}\|\mathcal{T}\|^{2}_{F}-\frac{1}{2(3\rho^{k}\!+\!1)}\|\rho^{k}\mathcal{B}+\mathcal{C}\|^{2}_{F},
\end{split}
\end{equation*}
where $\sum^{3}_{n=1}\!\frac{\rho^{k}}{2}\!\|{G}^{k+1}_{n}\!-\!{Y}^{k}_{n}/\rho^{k}\|^{2}_{F}\!+\!\frac{1}{2}\!\|\mathcal{T}\|^{2}_{F}$ is a constant, $g(U,V,W):=\|\rho^{k}\mathcal{B}+\mathcal{C}\|^{2}_{F}$, and the equality $^{a}\!\!=$ relies on the facts that $U$, $V$ and $W$ have orthonormal columns and the equality $^{b}\!\!=$ holds due to (48).
\end{IEEEproof}
~\\
By keeping $V^{k}$ and $W^{k}$ fixed, we have
\begin{equation}
\max_{U,U^{T}\!U=I_{d_{1}}}\!\!\|\mathcal{D}^{k}_{1}\!\times_{\!1}\!U^{T}\!+\!\rho^{k}\mathcal{B}\|^{2}_{F}\!=\!\|(\mathcal{D}^{k}_{1})^{T}_{(1)}U\!+\!\rho^{k}\mathcal{B}^{T}_{(1)}\|^{2}_{F},
\end{equation}
where $\mathcal{D}^{k}_{1}\!=\!\mathcal{T}\!\times_{2}\!(V^{k})^{T}\!\!\times_{3}\!(W^{k})^{T}$. This is actually the well-known orthogonal procrustes problem~\cite{nick:mpp}. Similar to (18) in this paper, we have
\begin{equation}
U^{k+1}=\textup{ORT}\left((\mathcal{D}^{k}_{1})_{(1)}\mathcal{B}^{T}_{(1)}\right).
\end{equation}
Repeating the above procedure for $V$ and $W$, we have
\begin{equation}
{V}^{k+1}=\textup{ORT}\left((\mathcal{D}^{k}_{2})_{(2)}\mathcal{B}^{T}_{(2)}\right),\;\;{W}^{k+1}=\textup{ORT}\left((\mathcal{D}^{k}_{3})_{(3)}\mathcal{B}^{T}_{(3)}\right),
\end{equation}
where $\mathcal{D}^{k}_{2}\!=\!\mathcal{T}\!\times_{1}\!(U^{k+1})^{T}\!\times_{3}\!(W^{k})^{T}$ and $\mathcal{D}^{k}_{3}\!=\!\mathcal{T}\!\times_{1}\!(U^{k+1})^{T}\!\times_{2}\!(V^{k+1})^{T}$.\\

After updating the matrices ${U}^{k+1}$, ${V}^{k+1}$ and ${W}^{k+1}$, then $\mathcal{G}$ is updated by
\begin{equation}
\mathcal{G}^{k+1}=\frac{\rho^{k}\sum^{3}_{n=1}\textup{refold}({G}^{k+1}_{n}-{Y}^{k}_{n}/\rho^{k})}{1+3\rho^{k}}+\frac{\mathcal{D}^{k}_{3}\!\times_{3}\!(W^{k+1})^{T}}{1+3\rho^{k}}.
\end{equation}

\begin{algorithm}[t]
\caption{ADMM for core tensor trace norm regularized full tensor decomposition problem (7)}
\label{alg3}
\renewcommand{\algorithmicrequire}{\textbf{Input:}}
\renewcommand{\algorithmicensure}{\textbf{Initialize:}}
\renewcommand{\algorithmicoutput}{\textbf{Output:}}
\begin{algorithmic}[1]
\REQUIRE $\mathcal{T}_{\Omega}$, multi-linear rank $({d_{1}},d_{2},{d_{3}})$ and $\textup{tol}$.
\WHILE {not converged}
\STATE {Update $G^{k+1}_{n}$ by (13) in the paper.}
\STATE {Update $U^{k+1}$, $V^{k+1}$, $W^{k+1}$ and $\mathcal{G}^{k+1}$ by (50), (51) and (52), respectively.}
\STATE {Update the multipliers $Y^{k+1}_{n}$ by $Y^{k+1}_{n}=Y^{k}_{n}+\rho^{k}(\mathcal{G}^{k+1}_{(n)}-G^{k+1}_{n}),\,n=1,2,3$.}
\STATE {Update $\rho^{k+1}$ by (27) in the paper.}
\STATE {Check the convergence condition, $\max\left(\|\mathcal{G}^{k+1}_{(n)}-G^{k+1}_{n}\|_{F}/\|\mathcal{T}\|_{F},\,n=1,2,3\right)<\textup{tol}$.}
\ENDWHILE
\OUTPUT $\mathcal{G}^{k+1}$, $U^{k+1}$, $V^{k+1}$ and $W^{k+1}$.
\end{algorithmic}
\end{algorithm}

~\\
\subsection*{Relationship between (6) and (8)}
~\\

When $\lambda\rightarrow\infty$, the model (6) degenerates to the following incomplete tensor Tucker decomposition model
\begin{equation}
\begin{split}
&\min_{\mathcal{G},U,V,W,\mathcal{X}}\frac{1}{2}\|\mathcal{X}-\mathcal{G}\!\times_{1}\!U\!\times_{2}\!V\!\times_{3}\!W\|^{2}_{F},\\
&\quad\;\textup{s.t.},\mathcal{X}_{\Omega}\!=\!\mathcal{T}_{\Omega},U^{T}U\!=\!I_{d_{1}},V^{T}V\!=\!I_{d_{2}},W^{T}W\!=\!I_{d_{3}}.
\end{split}
\end{equation}
~\\
Assume that $(\mathcal{G}^{*},U^{*},V^{*},W^{*},\mathcal{X}^{*})$ is a critical point (or stationary point) of (53). Then the Karush-Kuhn-Tucker (KKT) optimality conditions for (53) are given by
\begin{equation*}
\begin{split}
&\mathcal{X}^{*}_{\Omega}=\mathcal{T}_{\Omega},\;\,\mathcal{X}^{*}_{\Omega^{C}}=(\mathcal{G}^{*}\!\times_{1}\!U^{*}\!\times_{2}\!V^{*}\!\times_{3}\!W^{*})_{\Omega^{C}},\\
&(U^{*})^{T}U^{*}=I_{d_{1}},(V^{*})^{T}V^{*}=I_{d_{2}},(W^{*})^{T}W^{*}=I_{d_{3}}.
\end{split}
\end{equation*}
Thus, we have
\begin{equation*}
\frac{1}{2}\|\mathcal{X}^{*}-\mathcal{G}^{*}\!\times_{1}\!U^{*}\!\times_{2}\!V^{*}\!\times_{3}\!W^{*}\|^{2}_{F}
=\frac{1}{2}\|\mathcal{W}\ast(\mathcal{T}-\mathcal{G}^{*}\!\times_{1}\!U^{*}\!\times_{2}\!V^{*}\!\times_{3}\!W^{*})\|^{2}_{F}.
\end{equation*}

Let $\mathcal{Z}^{*}=\mathcal{G}^{*}\!\times_{1}\!U^{*}\!\times_{2}\!V^{*}\!\times_{3}\!W^{*}$, then
\begin{equation*}
\frac{1}{2}\|\mathcal{X}^{*}-\mathcal{G}^{*}\!\times_{1}\!U^{*}\!\times_{2}\!V^{*}\!\times_{3}\!W^{*}\|^{2}_{F}
=\frac{1}{2}\|\mathcal{W}\ast(\mathcal{Z}^{*}-\mathcal{T})\|^{2}_{F}.
\end{equation*}
It is easy to verify that $(\mathcal{G}^{*},U^{*},V^{*},W^{*},\mathcal{Z}^{*})$ satisfies the KKT optimality conditions of the following sparse tensor (in many practical applications, the incomplete tensors are very sparse) HOOI problem (i.e., (8) in this paper):
\begin{equation}
\begin{split}
&\min_{\mathcal{G},U,V,W,\mathcal{Z}}\;\frac{1}{2}\|\mathcal{W}\ast(\mathcal{Z}-\mathcal{T})\|^{2}_{F},\\
&\textup{s.t.},\mathcal{Z}=\mathcal{G}\!\times_{1}\!U\!\!\times_{2}\!V\!\!\times_{3}\!W,U^{T}U\!=I_{d_{1}},V^{T}V\!=I_{d_{2}},W^{T}W\!=I_{d_{3}}.
\end{split}
\end{equation}
That is, $(\mathcal{G}^{*},U^{*},V^{*},W^{*},\mathcal{Z}^{*})$ is a critical point of (54).\\

On the other hand, suppose that $(\mathcal{G}^{*},U^{*},V^{*},W^{*},\mathcal{Z}^{*})$ is a critical point of (54), and let $\mathcal{X}^{*}_{\Omega}=\mathcal{T}_{\Omega}$ and $\mathcal{X}^{*}_{\Omega^{C}}=(\mathcal{G}^{*}\!\times_{1}\!U^{*}\!\times_{2}\!V^{*}\!\times_{3}\!W^{*})_{\Omega^{C}}$, then we can know that $(\mathcal{G}^{*},U^{*},V^{*},W^{*},\mathcal{X}^{*})$ is also a critical point of (53). In this sense, the problem (54) (i.e., (8) in this paper) can be seen as a special case of (6) in this paper when $\lambda\rightarrow\infty$.\\

\subsection*{Algorithm for Solving (8)}
We also give the details on how we can design an efficient ADMM algorithm, as outlined in Algorithm 4, to solve the sparse tensor HOOI problem (8), whose partial augmented Lagrangian function is given as follows:
\begin{equation}
\mathcal{L}_{\rho}\left(\mathcal{G},U,V,W,\mathcal{Z},\mathcal{Y}\right)=\frac{1}{2}\|\mathcal{W}\ast(\mathcal{Z}-\mathcal{T})\|^{2}_{F}+\langle\mathcal{Y},\;\mathcal{Z}-\mathcal{G}\!\times_{1}\!U\!\times_{2}\!V\!\times_{3}\!W\rangle
+\frac{\rho}{2}\|\mathcal{Z}-\mathcal{G}\!\times_{1}\!U\!\times_{2}\!V\!\times_{3}\!W\|^{2}_{F},
\end{equation}
where $\mathcal{Y}$ is the tensor of Lagrange multipliers.\\

\begin{algorithm}[t]
\caption{ADMM for sparse tensor HOOI problem (8)}
\label{alg4}
\renewcommand{\algorithmicrequire}{\textbf{Input:}}
\renewcommand{\algorithmicensure}{\textbf{Initialize:}}
\renewcommand{\algorithmicoutput}{\textbf{Output:}}
\begin{algorithmic}[1]
\REQUIRE $\mathcal{T}_{\Omega}$, multi-linear rank $({d_{1}},d_{2},{d_{3}})$ and $\textup{tol}$.
\WHILE {not converged}
\STATE {Update $U^{k+1}$, $V^{k+1}$ and $W^{k+1}$ by (59) and (60), respectively.}
\STATE {Update $\mathcal{G}^{k+1}$ and $\mathcal{X}^{k+1}$ by (61) and (63), respectively.}
\STATE {Update the multiplier $\mathcal{Y}^{k+1}$ by $\mathcal{Y}^{k+1}=\mathcal{Y}^{k}+\rho^{k}(\mathcal{Z}^{k+1}-\mathcal{G}^{k+\!1}\!\!\times_{1}\!U^{k+\!1}\!\!\times_{2}\!V^{k+\!1}\!\!\times_{3}\!W^{k+\!1})$.}
\STATE {Update $\rho^{k+1}$ by (27) in this paper.}
\STATE {Check the convergence condition, $\|\mathcal{Z}^{k+1}-\mathcal{G}^{k+\!1}\!\!\times_{1}\!U^{k+\!1}\!\!\times_{2}\!V^{k+\!1}\!\!\times_{3}\!W^{k+\!1}\|_{F}/\|\mathcal{T}\|_{F}<\textup{tol}$.}
\ENDWHILE
\OUTPUT $\mathcal{G}^{k+1}$, $U^{k+1}$, $V^{k+1}$ and $W^{k+1}$.
\end{algorithmic}
\end{algorithm}

To update $\{{U}^{k+1},{V}^{k+1},{W}^{k+1},\mathcal{G}^{k+1}\}$, the optimization problem (8) with respect to ${U}$, ${V}$, ${W}$ and $\mathcal{G}$ is formulated as follows (i.e., (23) in the paper):
\begin{equation}
\begin{split}
&\min_{\mathcal{G},U,V,W}\frac{1}{2}\|\mathcal{Z}^{k}-\mathcal{G}\!\times_{1}\!U\!\times_{2}\!V\!\times_{3}\!W+\mathcal{Y}^{k}/\rho^{k}\|^{2}_{F},\\
&\;\;\;\;\textup{s.t.},\,U^{T}U=I_{d_{1}},\,V^{T}V=I_{d_{2}},\,W^{T}W=I_{d_{3}}.
\end{split}
\end{equation}

Analogous with Theorem 3 and Theorem 4.2 in~\cite{athauwer:hooi}, we first state that the minimization problem (56) can be formulated as Theorem 7 below:

\begin{theorem}
Given the real third-order tensors $\mathcal{Z}^{k}$ and $\mathcal{Y}^{k}$, then the minimization problem (56) is equivalent to the maximization (over the matrices $U$, $V$ and $W$ having orthonormal columns) of the following function
\vspace{-1mm}
\begin{equation}
g(U,\,V,\,W)=\|(\mathcal{Z}^{k}\!+\!\mathcal{Y}^{k}/\rho^{k})\!\times_{1}\!U^{T}\!\!\times_{2}\!V^{T}\!\!\times_{3}\!W^{T}\|^{2}_{F}.
\end{equation}
\end{theorem}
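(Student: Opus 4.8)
The plan is to follow the same route as the proof of Theorem~\ref{theo3} (and of the preceding Theorem 6 in this supplement), treating the single Frobenius-norm term in (56) as a projection problem and eliminating the core tensor $\mathcal{G}$ in closed form. First I would abbreviate $\mathcal{N}^{k}:=\mathcal{Z}^{k}+\mathcal{Y}^{k}/\rho^{k}$, so that the objective of (56) becomes $\frac{1}{2}\|\mathcal{N}^{k}-\mathcal{G}\!\times_{1}\!U\!\times_{2}\!V\!\times_{3}\!W\|^{2}_{F}$. For any fixed $U$, $V$, $W$ with orthonormal columns, I would minimize this over the unconstrained variable $\mathcal{G}$. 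Expanding the square yields $\frac{1}{2}\|\mathcal{N}^{k}\|^{2}_{F}-\langle\mathcal{N}^{k},\,\mathcal{G}\!\times_{1}\!U\!\times_{2}\!V\!\times_{3}\!W\rangle+\frac{1}{2}\|\mathcal{G}\!\times_{1}\!U\!\times_{2}\!V\!\times_{3}\!W\|^{2}_{F}$; here I would use the adjoint relation $\langle\mathcal{N}^{k},\,\mathcal{G}\!\times_{1}\!U\!\times_{2}\!V\!\times_{3}\!W\rangle=\langle\mathcal{N}^{k}\!\times_{1}\!U^{T}\!\times_{2}\!V^{T}\!\times_{3}\!W^{T},\,\mathcal{G}\rangle$ for the cross term (the same manipulation already used in the proof of Theorem~\ref{theo3}) and the norm invariance $\|\mathcal{G}\!\times_{1}\!U\!\times_{2}\!V\!\times_{3}\!W\|_{F}=\|\mathcal{G}\|_{F}$ — a consequence of Property~\ref{prop1} and Property~\ref{prop3} (equivalently, Theorem~\ref{theo1} with $p=2$), valid precisely because $U$, $V$, $W$ are column-orthonormal — for the last term.

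With these substitutions the objective reads $\frac{1}{2}\|\mathcal{N}^{k}\|^{2}_{F}-\langle\mathcal{N}^{k}\!\times_{1}\!U^{T}\!\times_{2}\!V^{T}\!\times_{3}\!W^{T},\,\mathcal{G}\rangle+\frac{1}{2}\|\mathcal{G}\|^{2}_{F}$, a strictly convex quadratic in $\mathcal{G}$ whose unique minimizer, obtained by setting the gradient to zero, is $\mathcal{G}^{*}=\mathcal{N}^{k}\!\times_{1}\!U^{T}\!\times_{2}\!V^{T}\!\times_{3}\!W^{T}$. I would stress that, unlike in Theorem~\ref{theo2} and equation (\ref{equ15}), no $\rho^{k}$-weighted core term is present in (56), so here the optimal core is simply the projection of $\mathcal{N}^{k}$ onto the subspace spanned by $U$, $V$, $W$, rather than a convex combination of two tensors. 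Plugging $\mathcal{G}^{*}$ back in, the linear and quadratic terms collapse to $-\frac{1}{2}\|\mathcal{G}^{*}\|^{2}_{F}$, so the minimal value of (56) for this choice of $U$, $V$, $W$ equals $\frac{1}{2}\|\mathcal{N}^{k}\|^{2}_{F}-\frac{1}{2}\|\mathcal{N}^{k}\!\times_{1}\!U^{T}\!\times_{2}\!V^{T}\!\times_{3}\!W^{T}\|^{2}_{F}$.

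To finish, since $\frac{1}{2}\|\mathcal{N}^{k}\|^{2}_{F}$ is a constant independent of $U$, $V$, $W$, minimizing (56) over all triples of column-orthonormal matrices is equivalent to maximizing $\|\mathcal{N}^{k}\!\times_{1}\!U^{T}\!\times_{2}\!V^{T}\!\times_{3}\!W^{T}\|^{2}_{F}=\|(\mathcal{Z}^{k}+\mathcal{Y}^{k}/\rho^{k})\!\times_{1}\!U^{T}\!\times_{2}\!V^{T}\!\times_{3}\!W^{T}\|^{2}_{F}$, which is exactly $g(U,V,W)$, proving the claim. I do not expect a genuine obstacle: the argument is a direct adaptation of Theorem~\ref{theo3}, and the only points needing care are invoking the column-orthonormality of $U$, $V$, $W$ at the two places indicated and tracking the absence of the $\rho^{k}$-weighted core term, so that the eliminated core reduces to a plain projection rather than the convex combination in (\ref{equ15}).
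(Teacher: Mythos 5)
Your proof is correct and follows essentially the same route as the paper's: eliminate the core tensor in closed form as $\mathcal{G}^{*}=(\mathcal{Z}^{k}+\mathcal{Y}^{k}/\rho^{k})\times_{1}U^{T}\times_{2}V^{T}\times_{3}W^{T}$, expand the square using the adjoint relation and the orthonormal-invariance of the Frobenius norm, and observe that the remaining variable part is $-\tfrac{1}{2}g(U,V,W)$ plus a constant. The only cosmetic difference is that you derive the optimal core by setting the gradient to zero, whereas the paper cites Theorem 4.1 of the HOOI reference for the same fact.
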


\begin{IEEEproof}
According to Theorem 4.1 in~\cite{athauwer:hooi}, for any given matrices $U$, $V$ and $W$, the optimal core tensor $\mathcal{G}$ is given by
\begin{equation*}
\mathcal{G}=(\mathcal{Z}^{k}\!+\!\mathcal{Y}^{k}/\rho^{k})\times_{1}\!U^{T}\!\times_{2}\!V^{T}\!\times_{3}\!W^{T}.
\end{equation*}
Thus, we have
\begin{equation*}
\begin{split}
f(U,V,W):=&\frac{1}{2}\|\mathcal{Z}^{k}-\mathcal{G}\!\times_{1}\!U\!\!\times_{2}\!V\!\!\times_{3}\!W+\mathcal{Y}^{k}/\rho^{k}\|^{2}_{F}\\
=&\frac{1}{2}\|\mathcal{Z}^{k}\!+\!\mathcal{Y}^{k}/\rho^{k}\|^{2}_{F}
-\langle\mathcal{Z}^{k}\!+\!\mathcal{Y}^{k}/\rho^{k},\,\mathcal{G}\!\times_{1}\!U\!\!\times_{2}\!V\!\!\times_{3}\!W\rangle+\frac{1}{2}\|\mathcal{G}\!\times_{1}\!U\!\!\times_{2}\!V\!\!\times_{3}\!W\|^{2}_{F}\\
^{a}\!\!=&\frac{1}{2}\|\mathcal{Z}^{k}\!+\!\mathcal{Y}^{k}/\rho^{k}\|^{2}_{F}-\langle(\mathcal{Z}^{k}\!+\!\mathcal{Y}^{k}/\rho^{k})\!\times_{1}\!U^{T}\!\!\times_{2}\!V^{T}\!\!\times_{3}\!W^{T},\,\mathcal{G}\rangle
+\frac{1}{2}\|\mathcal{G}\|^{2}_{F}\\
=&\frac{1}{2}\|\mathcal{Z}^{k}\!+\!\mathcal{Y}^{k}/\rho^{k}\|^{2}_{F}-\frac{1}{2}\|(\mathcal{Z}^{k}\!+\!\mathcal{Y}^{k}/\rho^{k})\!\times_{1}\!U^{T}\!\!\times_{2}\!V^{T}\!\!\times_{3}\!W^{T}\|^{2}_{F},
\end{split}
\end{equation*}
where $\|\mathcal{Z}^{k}\!+\!\mathcal{Y}^{k}/\rho^{k}\|^{2}_{F}$ is a constant, $g(U,V,W)=\|(\mathcal{Z}^{k}\!+\!\mathcal{Y}^{k}/\rho^{k})\!\times_{1}\!U^{T}\!\!\times_{2}\!V^{T}\!\!\times_{3}\!W^{T}\|^{2}_{F}$, and the equality $^{a}\!\!=$ relies on the facts that $U$, $V$ and $W$ have orthonormal columns.
\end{IEEEproof}
~\\

Imagine that the matrices $V$ and $W$ are fixed and that the optimization problem (57) is merely a quadratic function of the unknown matrix $U$. Thus, we have
\begin{equation}
\max_{U,\,U^{T}U=I_{d_{1}}}\|\mathcal{H}^{k}_{1}\!\times_{1}\!U^{T}\|^{2}_{F}=\|(\mathcal{H}^{k}_{1})^{T}_{(1)}U\|^{2}_{F},
\end{equation}
where $\mathcal{H}^{k}_{1}\!=\!(\mathcal{Z}^{k}\!+\!\mathcal{Y}^{k}/\rho^{k})\!\!\times_{2}\!(V^{k})^{T}\!\!\times_{3}\!\!(W^{k})^{T}$.

From (58), we can know that each column of the desire variable ${U}^{k+\!1}$ is the orthogonal basis for the dominant subspace of $(\mathcal{H}^{k}_{1})_{(1)}$. Hence, we use the partial singular value decomposition to compute ${U}^{k+\!1}$ as follows:
\begin{equation}
U^{k+\!1}=\textup{SVDs}((\mathcal{H}^{k}_{1})_{(1)}, d_{1}),
\end{equation}
where the columns of $U^{k+\!1}$ are the left singular vectors of $(\mathcal{H}^{k}_{1})_{(1)}$ corresponding to the top $d_{1}$ singular values.
Repeating the above procedure for $V$ and $W$, we have
\begin{equation}
V^{k+\!1}=\textup{SVDs}((\mathcal{H}^{k}_{2})_{(2)}, d_{2})\;\;\textup{and}\;\;W^{k+\!1}=\textup{SVDs}((\mathcal{H}^{k}_{3})_{(3)}, d_{3}),
\end{equation}
where $\mathcal{H}^{k}_{2}\!=\!(\mathcal{Z}^{k}\!+\!\mathcal{Y}^{k}/\rho^{k})\!\times_{1}\!(U^{k+\!1})^{T}\!\!\times_{3}\!(W^{k})^{T}$ and $\mathcal{H}^{k}_{3}\!=\!(\mathcal{Z}^{k}\!+\!\mathcal{Y}^{k}/\rho^{k})\!\times_{1}\!(U^{k+\!1})^{T}\!\!\times_{2}\!(V^{k+\!1})^{T}$.\\

Given the updated matrices ${U}^{k+1}$, ${V}^{k+1}$ and ${W}^{k+1}$, $\mathcal{G}^{k+1}$ is updated by
\begin{equation}
\mathcal{G}^{k+1}=\mathcal{H}^{k}_{3}\!\times_{3}\!(W^{k+\!1})^{T}.
\end{equation}
\\

To update $\mathcal{Z}^{k+1}$, the optimization problem (8) with respect to $\mathcal{Z}$ is formulated as follows (i.e., (24) in the paper):
\begin{equation}
\min_{\mathcal{Z}}\frac{1}{2}\|\mathcal{W}\ast(\mathcal{Z}-\mathcal{T})\|^{2}_{F}+\frac{\rho^{k}}{2}\|\mathcal{Z}\!-\!\mathcal{G}^{k+\!1}\!\!\times_{1}\!U^{k+\!1}\!\!\times_{2}\!V^{k+\!1}\!\!\times_{3}\!W^{k+\!1}\!\!+\!\mathcal{Y}^{k}/\rho^{k}\|^{2}_{F}.
\end{equation}
Since (62) is a least squares problem, and its optimal solution is given by
\begin{equation}
\mathcal{Z}^{k+1}\!=\!\mathcal{P}_{\Omega}\!\left(\frac{\mathcal{T}+\rho^{k}\mathcal{G}^{k+\!1}\!\!\times_{1}\!U^{k+\!1}\!\!\times_{2}\!V^{k+\!1}\!\!\times_{3}\!W^{k+\!1}-\mathcal{Y}^{k}}{1+\rho^{k}}\right)+\mathcal{P}^{\perp}_{\Omega}\!\left(\mathcal{G}^{k+\!1}\!\!\times_{1}\!U^{k+\!1}\!\!\times_{2}\!V^{k+\!1}\!\!\times_{3}\!W^{k+\!1}\!-\frac{\mathcal{Y}^{k}}{\rho^{k}}\right).
\end{equation}
~\\

\subsection*{Detailed Complexity Analysis}
In this part, we present the detailed complexity analysis of Algorithm 1 and Algorithm 4, as listed in Table 4.

\begin{table}[t]
\renewcommand{\arraystretch}{1.5}
\centering
\caption{Detailed complexity analysis of Algorithm 1 and Algorithm 4.}
\label{tab1}
\begin{tabular} {c|c|c|c|c|c}
\hline
\multicolumn{3}{c|}{Algorithm 1}  & \multicolumn{3}{c}{Algorithm 4}\\
\hline\hline
(18) & $\!\mathcal{X}^{k}\!\times_{2}\!(V^{k})^{T}\!\!\times_{3}\!(W^{k})^{T}$   &$O\!\left(d_{2}\Pi_{j}I_{j}\right)$ &(59)  & $\!(\mathcal{Z}^{k}\!+\!\mathcal{Y}^{k}\!/\!\rho^{k})\!\!\times_{2}\!(V^{k})^{T}\!\!\times_{3}\!\!(W^{k})^{T}$  & $O\!\left(d_{2}\Pi_{j}I_{j}\right)$\\
\hline
\multirow{2}{*}{(19)} & {$\!\mathcal{X}^{k}\!\times_{1}\!(U^{k+1})^{T}\!\times_{3}\!(W^{k})^{T}$} & \multirow{2}{*}{$O\!\left(2d_{1}\Pi_{j}I_{j}\right)$} & \multirow{2}{*}{(60)} & {$\!\!(\mathcal{Z}^{k}\!+\!\mathcal{Y}^{k}\!/\!\rho^{k})\!\times_{1}\!\!(U^{k\!+\!1})^{T}\!\!\times_{3}\!\!(W^{k})^{T}\!$} & \multirow{2}{*}{$O\!\left(2d_{1}\Pi_{j}I_{j}\right)$} \\
\cline{2-2}\cline{5-5}
& {$\!\mathcal{X}^{k}\!\times_{1}\!(U^{k\!+\!1})^{T}\!\times_{2}\!(V^{k\!+\!1})^{T}$} & & &  {$\!\!(\mathcal{Z}^{k}\!\!+\!\mathcal{Y}^{k}\!/\!\rho^{k})\!\times_{1}\!\!(U^{k\!+\!1})^{T}\!\!\!\times_{2}\!\!(V^{k\!+\!1})^{T}$}\!\!\\
\hline
\!\!(18,19)\!\! & $\textup{ORT}\left((\mathcal{M}^{k}_{n})_{(n)}\mathcal{B}^{T}_{(n)}\right)$ & \!\!$O\!\left(\sum_{n}\! I_{n}(\Pi_{j}d_{j}\!+\!d^{2}_{n})\right)$\!\!\! &\!\!(59,60)\!\! & $\textup{SVDs}((\mathcal{H}^{k}_{n})_{(n)}, d_{n})$ & $\!\!O\!\left(\sum_{n}\!(\min\{I_{n},\prod_{j\neq n}\!d_{j}\})^{2}d_{n}\!\right)\!\!$ \\
\hline
\!\!(20)\!\! & $\mathcal{M}^{k}_{3}\!\times_{3}\!(W^{k+1})^{T}$ & $O\!\left(I_{3}\Pi_{j}d_{j}\right)$ &\!\!(61)\!\! & $\mathcal{H}^{k}_{3}\!\times_{3}\!(W^{k+\!1})^{T}$ & $O\!\left(I_{3}\Pi_{j}d_{j}\right)$ \\
\hline
(22) & \!\!$\mathcal{G}^{k\!+\!1}\!\!\times_{1}\!\!U^{k\!+\!1}\!\!\times_{2}\!\!V^{k\!+\!1}\!\!\times_{3}\!\!W^{k\!+\!1}$\!\! & $O\!\left(d_{3}\Pi_{j}I_{j}\right)$ &(63) & $\mathcal{G}^{k+\!1}\!\!\times_{1}\!U^{k+\!1}\!\!\times_{2}\!V^{k+\!1}\!\!\times_{3}\!W^{k+\!1}$ & $O\!\left(d_{3}\Pi_{j}I_{j}\right)$ \\
\hline
(13) & $\textup{SVT}_{1/(3\lambda\rho^{k})}(\mathcal{G}^{k}_{(n)}\!+\!Y^{k}_{n}\!/\!\rho^{k})$ & $O\!\left(\sum_{n}\!{d^{2}_{n}}{\Pi_{j\neq{n}}}d_{j}\right)$ & & & \\
\hline
\multicolumn{3}{c|}{$O\!\left((2d_{1}\!+\!d_{2}\!+\!d_{3})\Pi_{j}I_{j}\right)$} & \multicolumn{3}{c}{$O\!\left((2d_{1}\!+\!d_{2}\!+\!d_{3})\Pi_{j}I_{j}\right)$}\\
\hline
\end{tabular}
\end{table}

\end{document}